\documentclass{article}

\usepackage{PRIMEarxiv}

\usepackage[utf8]{inputenc} 
\usepackage[T1]{fontenc}    
\usepackage{hyperref}       
\usepackage{url}            
\usepackage{booktabs}       
\usepackage{amsfonts}       
\usepackage{nicefrac}       
\usepackage{microtype}      
\usepackage{lipsum}
\usepackage{graphicx}
\graphicspath{{media/}}     
\usepackage{amsmath}
\usepackage{amssymb}
\usepackage{mathtools}
\usepackage{amsthm}
\usepackage{xcolor}
\usepackage{wrapfig}

\usepackage{algorithm}
\usepackage{algorithmic}

\usepackage{float}
\usepackage{microtype}
\usepackage{graphicx}
\usepackage{subfigure}
\usepackage{booktabs} 
\usepackage{subcaption}
\newtheorem{theorem}{Theorem}[section]
\newtheorem{proposition}[theorem]{Proposition}
\newtheorem{lemma}[theorem]{Lemma}

\newtheorem{assumption}[theorem]{Assumption}
\newtheorem{remark}[theorem]{Remark}

\newcommand*{\hyper}{Hyper}
\newcommand*{\ipi}{\mathcal{I}(\pi)}
\newcommand*{\epi}{\mathcal{E}(\pi)}

\newcommand\blfootnote[1]{%
  \begingroup
  \renewcommand\thefootnote{}\footnote{#1}%
  \addtocounter{footnote}{-1}%
  \endgroup
}
  
\title{Hyper: Hyperparameter Robust Efficient Exploration in Reinforcement Learning}







\author{
  Yiran  Wang$^*$ \\
  University of California, Los Angeles \\
  \And
  Chenshu Liu$^*$ \\
  Terasaki Institute for Biomedical Innovation \\
  \And
  Yunfan Li \\
  University of California, Los Angeles \\
  \And
  Sanae Amani \\
  University of California, Los Angeles \\
  \And
  Bolei Zhou \\
  University of California, Los Angeles \\
  \And 
  Lin F. Yang \\
  University of California, Los Angeles \\
}

\begin{document}
\maketitle

\begin{abstract}
The exploration \& exploitation dilemma poses significant challenges in reinforcement learning (RL). Recently, curiosity-based exploration methods achieved great success in tackling hard-exploration problems. However, they necessitate extensive hyperparameter tuning on different environments, which heavily limits the applicability and accessibility of this line of methods. In this paper, we characterize this problem via analysis of the agent behavior, concluding the fundamental difficulty of choosing a proper hyperparameter. We then identify the difficulty and the instability of the optimization when the agent learns with curiosity. We propose our method, hyperparameter robust exploration (\textbf{Hyper}), which extensively mitigates the problem by effectively regularizing the visitation of the exploration and decoupling the exploitation to ensure stable training. We theoretically justify that \textbf{Hyper} is provably efficient under function approximation setting and empirically demonstrate its appealing performance and robustness in various environments.
\end{abstract}

\keywords{Reinforcement Learning, Curiosity-Driven Exploration, Provably and Empirically Efficient Algorithm}

\blfootnote{$^*$ Equal contribution}

\section{Introduction}
\label{introduction}
Reinforcement learning (RL) is a paradigm that solves sequential decision-making problems by maximizing the expected cumulative reward $r$, which is composed of both explorations, the process of the agent discovering new features from the environment, and exploitations, the process by which the agent learns to tackle the task using the knowledge already gained. Despite the astonishing success achieved with this paradigm \cite{mnih2013playing, mnih2015human, silver2016mastering, silver2017mastering, berner2019dota, arulkumaran2019alphastar}, RL is vulnerable to sub-optimal policies when presented with insufficient reward signals. To resolve this dilemma, curiosity-driven exploration methods provide a promising solution \cite{bellemare2016unifying, pathak2017curiosity, ostrovski2017count, burda2018exploration, machado2020count, pathak2019self}.


\begin{wrapfigure}{R}{0.5\textwidth}
    \centering
    \includegraphics[width=8cm]{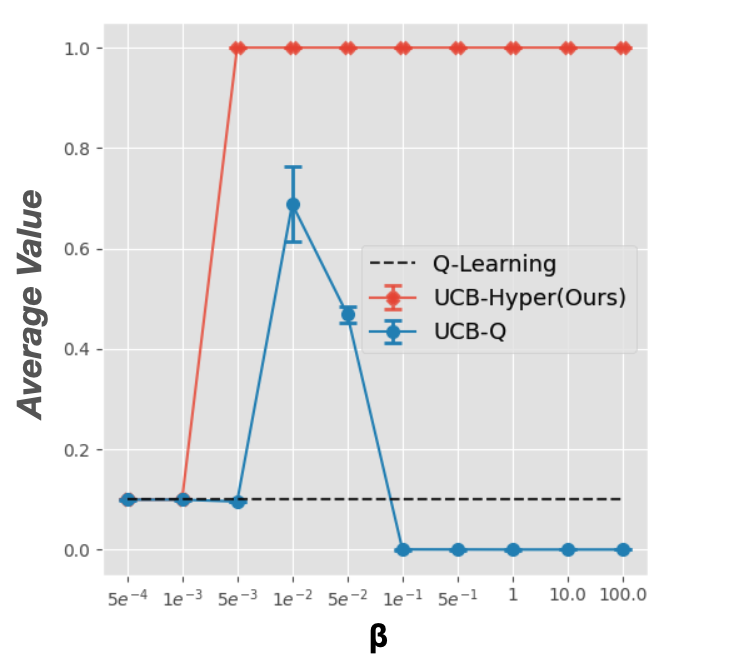}
    \caption{Performance of pure exploitation, curiosity-driven exploration, and our algorithm with different choices of $\beta$, each data point is the averaged performance after 1M steps training over 5 runs. Curiosity-driven (UCB-Q) is very sensitive to hyperparameter $\beta$. We propose \textbf{Hyper}, which is empirically robust to $\beta$, and theoretically efficient.}
    \label{fig:example-analysis}
\end{wrapfigure}

wrapfigure

Curiosity-based algorithms, which originate from solving the bandit problem \cite{auer2002using}, enhance the exploration by intrinsically rewarding the agent for exploring the environment. These algorithms optimize the policy $\pi$ to maximize the joint reward function $r + \beta b$, where $r$ is the explicit task reward, $b$ is the intrinsic reward for exploration, and $\beta$ is the curiosity hyperparameter that scales the intrinsic reward. When designing curiosity-based algorithms, the intrinsic reward is often quantized by the uncertainty of the transition that the agent takes \cite{bellemare2016unifying, azar2017minimax, jin2018q, yang2020reinforcement, jin2020provably}. The agent is then encouraged to take uncertain transitions and explore the whole environment besides maximizing the task reward \cite{bellemare2016unifying}. As the agent visits a transition more often, the intrinsic reward decays and eventually becomes negligible, thereby making the extrinsic reward dominate the joint reward function, and the agent can then learn to exploit. 

Curiosity-based algorithms have achieved great success in exploration-intensive problems in recent advancements, which supersedes all previous RL algorithms that solely maximize task reward \cite{bellemare2016unifying, burda2018exploration, pathak2019self}, and is proven to be theoretically efficient, according to \cite{auer2002using}. However, their empirical efficiency is highly dependent on the choice of hyperparameter $\beta$. The intricate relationship between the intrinsic and extrinsic reward has an important impact on the behavior of the agent. A small change of $\beta$, which controls the relative weight of the intrinsic reward $b$ and the task reward $r$, would heavily change such a relationship, leading to the sensitivity of this coefficient.

Specifically, when the cumulative intrinsic reward dominates the extrinsic one, it causes over-exploration, i.e. the agent will keep visiting uncertain transitions instead of exploiting the task. Conversely, if the intrinsic reward is too small to encourage the agent to change from its current policy, the algorithm will likely yield a sub-optimal result. Furthermore, large intrinsic rewards will cause problems in the optimization process by introducing a large bias to the fitting of the neural networks thus making the learning process hard to converge \cite{schafer2021decoupling, whitney2021decoupled}.

Predominantly, existing curiosity-based algorithms exhibit a conservative inclination towards small $\beta$. A higher value of $\beta$ would heavily corrupt the task if not carefully tuned for different environments. The involvement of the neural network in practice makes the curiosity-driven methods suffer from optimization instability due to frequent policy changes. This limits the existing methods from using large $\beta$ to sufficiently explore the environment and increases the chance of getting stuck in the sub-optimal policy.

To this end, we propose our algorithm \textbf{Hyper} to solve the problem by controlling the visitation distribution of the agent exploration and mitigate the optimization instability by increasing the persistence of the visitation for adopting large $\beta$. \textbf{Hyper} is theoretically efficient, and it leverages an additional policy to decouple the exploitation learning from the exploration to prevent over-exploration in practice. It regularizes the visitation distribution of the agent's exploration to increase the exploration persistence, mitigating the optimization instability caused by frequent policy changes.

Our contributions can be summarized as follows:
\begin{itemize}
    \item We identify the sensitivity of the curiosity-driven exploration to the coefficient $\beta$ through a delicately designed example, and accordingly design a novel algorithm \textbf{Hyper} to resolve the challenges.
    \item We theoretically justify the efficiency of \textbf{Hyper} through rigorous analysis, and empirically demonstrate \textbf{Hyper} performs comparable, even favorable performance compared to both exploration and exploitation policies.
    \item We empirically analyze the robustness of \textbf{Hyper} to $\beta$, where \textbf{Hyper} shows substantially lower sensitivity compared to the original curiosity-driven exploration algorithm.
\end{itemize}


\section{Preliminaries}
In this paper, we formulate the RL problem as an episodic Markov Decision Process (MDP) \cite{bellman1957markovian} under episodic setting, denoted by ($\mathcal{S}, \mathcal{A}, H, \gamma, r, \mathbb{P}$), where $\mathcal{S}$ is the state space, $\mathcal{A}$ is the action space, $\mathbb{P} = \{P_h\}_{h=1}^H$ are the transition measures that govern the dynamics of the environment, $r_h(s, a)$ is reward function at step $h$, $H$ is the episode length. $\gamma \in (0, 1]$ is the discount factor. The task of the agent is to learn a policy $\pi: \mathcal{S} \rightarrow \mathcal{A}$ to maximize the discounted total reward $Q^{\pi} = \mathbb E_{\pi}[\sum_{h=1}^{H} \gamma^{h-1} r_h(s_h, a_h)]$. We also denote the intrinsic reward as $b(s, a, s')$, where $(s, a, s')$ is a transition. The curiosity-driven exploration method generally aims to learn a policy that maximizes $\mathbb E_{\pi}[\sum_{h=1}^{H} \gamma^{h-1} r_h(s_h, a_h) + b_h(s_h, a_h, s_{h+1})]$ instead. We slightly abuse the notation by omitting the sub-scripts, denoting $r_h, b_h, s_h, a_h, s_{h+1}$ as $r, b, s, a, s'$ when the context is clear. 


\section{Warm-up Example} \label{sec:emprical-analysis}
Curiosity-driven exploration algorithms introduce a bonus term that reflects the novelty of the current transition, into the RL objective $Q^{\pi} = \mathbb{E}_{\pi}[\sum_{i=1}^{H}\gamma^{i-1}(r(s, a) + \beta b(s,a,s'))]$ to reward the agent for visiting novel transitions. We can decompose the joint objective into $Q^{\pi}=\epi + \beta \ipi$, where: 
\begin{equation*}
    \begin{small}
    \begin{aligned}
        \epi & = \mathbb{E}_{\pi}[\sum_{h=1}^{H}\gamma^{h-1}r(s, a)] \\
        \ipi & = \mathbb{E}_{\pi}[\sum_{h=1}^{H}\gamma^{h-1} b(s,a,s')]
    \end{aligned}
    \end{small}
\end{equation*}

Intuitively, the intrinsic reward coefficient $\beta$ affects how the agent balances the exploration and exploitation by scaling the value of the intrinsic value $\mathcal{I}$. When $\epi > \beta \ipi$, the agent will tend to optimize the task return, hence exploiting more. Conversely, if $\beta \ipi > \epi$, the agent will tend to explore more. The curiosity-driven algorithm works ideally when the intrinsic objective $\beta \ipi$ is initially large enough to help the agent escape the sub-optimal policy, and it gradually decreases and diminishes, then the agent can quickly learn to exploit the diverse dataset and enjoy an overall better sample efficiency \cite{yang2020reinforcement, jin2020provably}. However, in practice, only a limited range of $\beta$ can achieve this ideal case. The sensitivity to $\beta$ of the curiosity-driven exploration comes from various aspects, in this section, we analyze this phenomenon and decompose the mechanism behind it.

We first examine the behavior of curiosity-driven exploration methods with different choices of $\beta$ from a wide range. We consider a navigation task in a \textbf{30x30} room, which only consists finite number of states and four actions taking the agent in four different directions: up, right, down left, the layout is shown in Figure \ref{fig:navi-layout}. The agent starts from a fixed initial location at the center \textcolor{black}{colored by blue}, and the agent should find and consistently reach the optimal goal location at the \textcolor{black}{lower-right} corner \textcolor{black}{colored by green} and receive a large reward of $R$. Besides the optimal goal, there is also a sub-optimal goal that lies near the center of the room \textcolor{black}{colored by purple}, which will provide a small reward of $r$. Each episode will end when the horizon is met, or the agent reaches either goal. This tabular environment allows us to train the agent without the involvement of function approximation error, which allows us to isolate the behavior analysis of the curiosity-driven exploration methods. \textcolor{black}{The visitation frequencies from UCB-Q agent with different value of $\beta$ are shown in Figure 2, where the grid with brighter color means more frequent visitation.}

\begin{wraptable}{R}{0.4\textwidth}
    \centering
    \begin{tabular}{|c|c|} \hline 
 Parameter&$\beta$ range\\ \hline 
         $R=1.0$, $r=0.1$& \{$5e^{-3}, 1e^{-2}$\} \\ \hline 
         $R=0.1$, $r=0.02$& \{$5e^{-3}$\} \\ \hline 
         $R=10.0$, $r=1.0$& \{$1e^{-1}, 5e^{-1}$\} \\ \hline 
         $R=100.0$, $r=2.0$& \{$5e^{-1}, 1$\}\\ \hline
    \end{tabular}
    \caption{Range of proper $\beta$ for different environment parameter}
    \label{tab:navigation-beta-range}
\end{wraptable} 

We use UCB-Q \cite{jin2018q} to train agents in this environment, which is built upon Q-Learning \cite{watkins1992q} and leverages upper confidence bound (UCB) \cite{auer2002using, azar2017minimax, jin2018q} as the intrinsic reward to encourage exploration. Ideally, the UCB-Q agent should sufficiently explore the room without being trapped by the sub-optimal goal, and as the intrinsic reward shrinks, it can consistently reach the optimal goal. We run UCB-Q agents with $\beta \in \{5e^{-4}, 1e^{-3}, 5e^{-3}, 1e^{-2}, 5e^{-2}, 1e^{-1}, 5e^{-1}, 1, 10, 100\}$ in the environment with a set-up of $R=1.0, r=0.1$ for 1 million steps and record their final performance. As shown in Figure \ref{fig:example-analysis}, only the choice of $\beta \in [0.005, 0.5)$ can both efficiently escape sub-optimality and consistently reach the optimal goal. 

\begin{figure}[t!]
\centering
\subfigure[Layout]{\includegraphics[width=3.5cm]{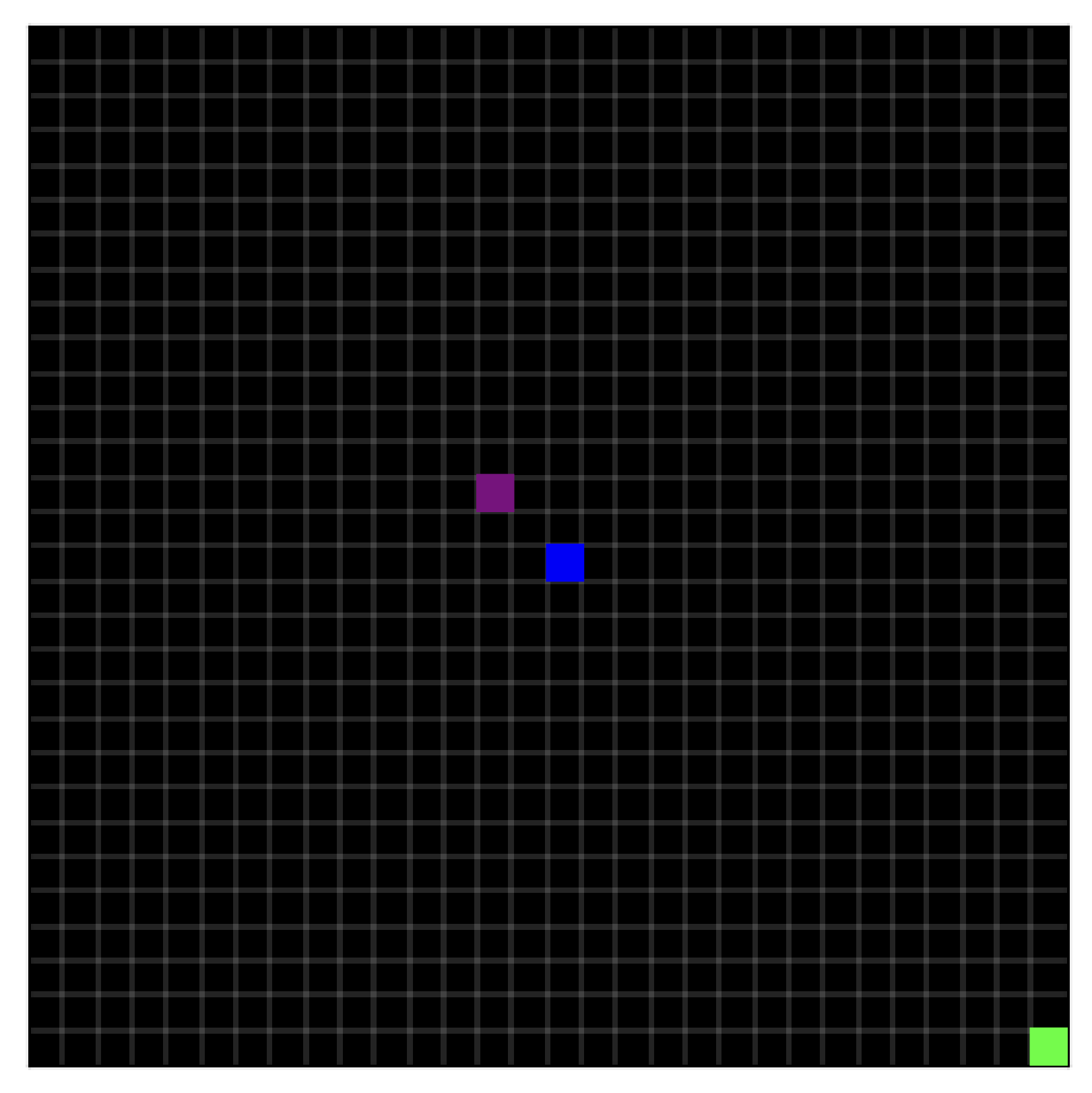} \label{fig:navi-layout}}
\subfigure[$\beta=0.01$]{\includegraphics[width=3.5cm]{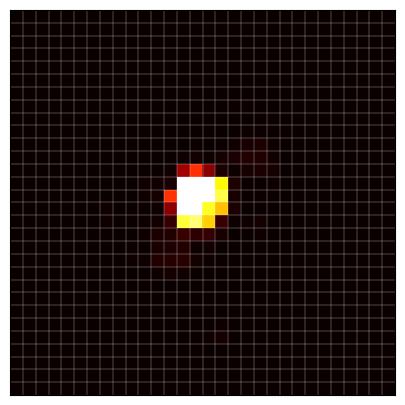} \label{fig:navi-sub-opt}}
\subfigure[$\beta=0.1$]{\includegraphics[width=3.5cm]{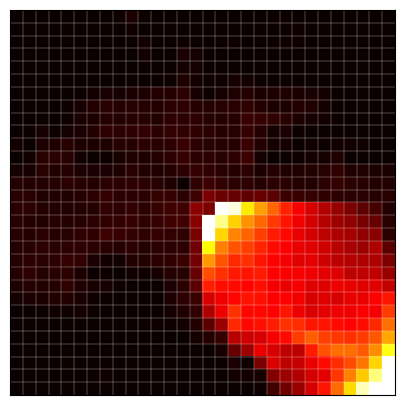} \label{fig:navi-opt}}
\subfigure[$\beta=1.0$]{\includegraphics[width=3.5cm]{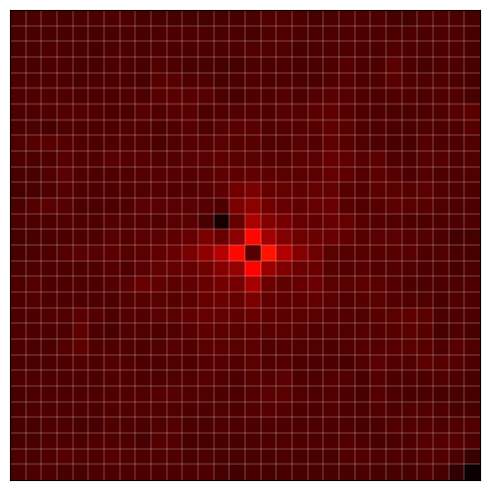} \label{fig:navi-fail}}
\caption{Comparison of visitation of \textcolor{black}{UCB-Q agent} with different exploration coefficient, in the environment with \textcolor{violet}{suboptimal goal}, \textcolor{green}{optimal goal}. Higher visitation is shown in brighter colors. (a) Layout of the environment (b) State visitation of UCB-Q with $\beta=0.01$,  Agent gets stuck in sub-optimal policy due to insufficient exploration bonus. (c):  State visitation of UCB-Q with $\beta=0.1$, the agent finds a near-optimal policy. (d):  State visitation of UCB-Q with $\beta=1.0$, the agent over-explores and cannot learn to exploit due to the value of curiosity bonus is too high. }
\end{figure}


When $\beta$ is too small, once the agent finds the suboptimal goal, the intrinsic reward is too low to encourage the agent to explore other states and escape from this sub-optimal policy, as shown in Figure \ref{fig:navi-sub-opt}. But if we choose an improperly large $\beta$, $\beta \ipi$ dominates $\epi$ throughout the training, as demonstrated in Figure \ref{fig:navi-fail}, the UCB-Q agent will almost uniformly visit each state. Despite the agent finding the optimal goal location, it fails to consistently revisit it even after 1 million training steps. 

Unfortunately, such a proper range of $\beta$ highly depends on the configuration of the environment, in our case, the values of $R$ and $r$ determine the proper range of $\beta$. As shown in Table \ref{tab:navigation-beta-range}, the proper ranges are drastically different with different configurations. Hence, it is almost always necessary to perform a comprehensive hyperparameter sweep on $\beta$ to balance exploration \& exploitation, which limits the applicability of this exploration paradigm. 

\begin{algorithm}[tb] 
\caption{Provably Efficient Linear-UCB-\textbf{Hyper}} \label{provable-algo}
\begin{algorithmic}[1]
        \STATE {\bfseries Requires:} Parameters $\lambda > 0, \mu > 0, \beta > 0, \beta' > 0$, Horizon $H$, Feature Mapping $\boldsymbol{\phi}$, Truncation probability $p > 0$, Weights $\hat{\mathbf{w}}_h$ for optimistic $Q$-function, $Q$-function $\mathbf{w}_h$ for exploitation, $\check{\mathbf{w}}_h$ for pessimistic $Q$-function for all $h \in [H]$
        \STATE {\bfseries Requires:} Clipping function $\operatorname{clip}(x): x \rightarrow \left\{
            \begin{array}{ll}
                  0 & x \leq 0 \\
                  x & x  \in (0, H) \\
                  H & x \geq H \\
            \end{array} 
            \right.$
        \FOR{$k = 1,2, ..., K$} 
        \STATE{Receive the initial state $s_1^k$}
        \STATE{Sample length $L^k \sim \operatorname{Geometric}(p)$ for the first phase}

        \STATE \textcolor{orange}{\# Repositioning phase}
        \FOR{step $h=1, 2, ..., L^k$}
            \STATE{Take action $a_h^k \gets arg\max_{a \in \mathcal{A}} {Q_h (s_h^k, a; \mathbf{w}_h)}$}
            \STATE{Observe $s_{h+1}^k$}
        \ENDFOR
        
        \STATE \textcolor{orange}{\# Exploration phase}
        \FOR{step $h=L^k+1, L^k+2, ..., H$} 
            \STATE Take action $a_h^k \gets arg\max_{a \in \mathcal{A}} {\hat{Q}_h(s_h^k, a; \hat{\mathbf{w}}_h)}$
            \STATE observe $s_{h+1}^k$ 
        \ENDFOR

        \STATE \textcolor{orange}{\# Policy Improvement}
        \FOR{$h = H, H-1, ..., 1$}
        \IF{$L^k = 0$}
        \STATE $Q^{*}_{h+1} \gets \max_a Q_{h+1}(s_h^{\tau + 1}, a; \mathbf{w}_h)$
        \STATE $\hat{Q}^{*}_{h+1} \gets \max_a \hat{Q}_{h+1}(s_h^{\tau + 1}, a; \hat{\mathbf{w}}_h)$
        \STATE $\check{Q}^{*}_{h+1} \gets \max_a \check{Q}_{h+1} (s_h^{\tau + 1}, a; \check{\mathbf{w}}_h)$
        
        \STATE $\Lambda_h \gets \Sigma_{\tau=1}^{k-1} \boldsymbol{\phi}(s_h^{\tau}, a_h^{\tau}) \boldsymbol{\phi}(s_h^{\tau}, a_h^{\tau})^T + \lambda \cdot \textbf{I}$
        \STATE $\mathbf{w}_h \gets \Lambda_h^{-1} \Sigma_{\tau=1}^{k-1} \boldsymbol{\phi}(s_h^{\tau}, a_h^{\tau})[r_h(s_h^{\tau}, a_h^{\tau}) + Q^{*}_{h+1}]$
        \STATE $\hat{\mathbf{w}}_h \gets \Lambda_h^{-1} \Sigma_{\tau=1}^{k-1} \boldsymbol{\phi}(s_h^{\tau}, a_h^{\tau})[r_h(s_h^{\tau}, a_h^{\tau}) + \hat{Q}^{*}_{h+1}]$
        \STATE $\check{\mathbf{w}}_h \gets \Lambda_h^{-1} \Sigma_{\tau=1}^{k-1} \boldsymbol{\phi}(s_h^{\tau}, a_h^{\tau})[r_h(s_h^{\tau}, a_h^{\tau}) + check{Q}^{*}_{h+1}]$
        
        \STATE $ Q_h(\cdot, \cdot; \mathbf{w}_h) \gets \operatorname{clip} \left ( {\mathbf{w}_h}^T \boldsymbol{\phi}(\cdot, \cdot) \right )$
        \STATE $\hat{Q}_h(\cdot, \cdot; \hat{\mathbf{w}}_h) \gets \operatorname{clip}\left( \hat{\mathbf{w}}_h^T \boldsymbol{\phi}(\cdot, \cdot) + \beta[\boldsymbol{\phi}(\cdot, \cdot)^T \Lambda_h^{-1} \boldsymbol{\phi}(\cdot, \cdot)]^{\frac{1}{2}} \right)$
        \STATE $ \check{Q}_h(\cdot, \cdot; \check{\mathbf{w}}_h) \gets \operatorname{clip}  \left ( \check{\mathbf{w}}_h^T \boldsymbol{\phi}(\cdot, \cdot) - \beta'[\boldsymbol{\phi}(\cdot, \cdot)^T \Lambda_h^{-1} \boldsymbol{\phi}(\cdot, \cdot)]^{\frac{1}{2}} \right )$
        \ENDIF
        \ENDFOR
        \ENDFOR
    
\end{algorithmic}
\end{algorithm}

\section{Hyper: Algorithm and Theoretical Results}
In this section, we present an overview of the Hyper algorithm, which includes both the provably efficient version (Algorithm \ref{provable-algo}) and the empirically efficient implementation (Algorithm \ref{alg:main-algo}). The only difference between the two different versions of Hyper is that they adopt different function approximation methods. The provably efficient algorithm adopts a linear function approximation for us to examine the theoretical sample efficiency and derive a formal worst-case upper bound. The empirically efficient version is designed to work with more expressive function approximations (e.g., neural networks) and has practical enhancements that improve sample efficiency in real-world applications. The two versions share a core mechanism of "repositioning and exploration," which is introduced in the following and will be further elaborated in the next section.

The fundamental design principle behind Hyper is to decouple exploration and exploitation using a two-phase process: repositioning the agent based on exploitation knowledge and exploration based on curiosity-driven methods. By repositioning the agent to states where it can gather more useful exploratory data, Hyper ensures that exploration is directed and more persistent. This improves both the stability of the learning process and the agent's ability to avoid suboptimal policies.

The core design choice is this repositioning-and-exploration mechanism, which helps stabilize the learning process while still benefiting from curiosity-driven exploration. Instead of purely relying on intrinsic rewards to guide exploration, Hyper uses the exploitation policy to reposition the agent to promising areas of the state space. After repositioning, the agent then explores its environment to discover novel transitions. This helps mitigate the instability that typically arises from purely curiosity-driven approaches, where intrinsic rewards may dominate and lead to over-exploration or slow convergence.

\begin{assumption} \label{core-assumption}
    (Linear MDP, e.g., \cite{yang2019sample, yang2020reinforcement, jin2020provably}). MDP($\mathcal{S}, \mathcal{A}, H, \mathbb{P}, r$) is a linear MDP whose transition $\mathbb{P} := \{\mathbb{P}_h\}^H_{h=1}$ is not necessarily stationary. With a feature map $\boldsymbol{\phi}: \mathcal{S} \times \mathcal{A} \rightarrow \mathbb{R}^d$, such that for for any $h \in [H]$, there exists d unknown measures $\mathbf{\mu}_h = (\mu_h^{(1)}, \mu_h^{(2)}, \mu_h^{(3)}, ..., \mu_h^{(d)})$ over $\mathcal{S}$ and an unknown vector $\mathbf{\theta}_h \in \mathbb{R}^d$, such that for any $(s, a) \in \mathcal{S} \times \mathcal{A}$ we have:
    \begin{equation} \label{linear-mdp}
        \mathbb{P}_h(\cdot | s, a) = \boldsymbol{\phi}(s, a)^T \mathbf{\mu}_h(\cdot) \quad and \quad r_h(s, a) = \boldsymbol{\phi}(s, a)^T \mathbf{\theta}_h
    \end{equation}
    Without loss of generality, we also assume that $\Vert \boldsymbol{\phi}(s, a) \Vert \leq 1$, and $\max\{\Vert \mathbf{\mu}_h(\mathcal{S}) \Vert, \Vert \theta_h \Vert \} \leq \sqrt{d}$ for all $(s, a, h) \in \mathcal{S} \times \mathcal{A} \times [H]$
\end{assumption}

\subsection{Theoretical Result}
We now present the theoretical results that demonstrate the efficiency of the Hyper algorithm under the linear function approximation framework, whose assumption is described in Assumption \ref{core-assumption}. Under this framework, we form the provably efficient Linear-UCB-Hyper algorithm in Algorithm \ref{provable-algo}. The main result is that Linear-UCB-Hyper achieves provable efficiency in exploration and guarantees convergence to a near-optimal policy with high probability.

We adopt linear function approximation \cite{yang2019sample, yang2020reinforcement, jin2020provably} to derive the bound in our analysis. The main theorem (Theorem \ref{main-theorem-informal}) states that under this approximation, Hyper can achieve sample-efficient exploration with polynomial complexity. This guarantees that the algorithm will converge to an optimal or near-optimal policy, even in challenging exploration environments. Specifically, the result provides convergence guarantees in the worst-case scenario, where task reward signals may be sparse or difficult to access. We defer the formal theorem and the proof to the appendix.

\begin{theorem} \label{main-theorem-informal}
    (Informal) Given the linear-realizability condition, Linear-\textbf{Hyper} learns a near-optimal exploitation policy of any task with high probability. The number of samples required scales polynomially with the intrinsic dimension and the horizon associated with the task.
\end{theorem}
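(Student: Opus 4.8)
The plan is to follow the standard optimistic least-squares value-iteration analysis for linear MDPs that Assumption~\ref{core-assumption} enables, grafting onto it a ``stitching'' argument that handles the two-phase repositioning--exploration structure of Algorithm~\ref{provable-algo}. Write $\pi^k$ for the exploitation policy (greedy with respect to $Q_h(\cdot,\cdot;\mathbf{w}_h)$) that the algorithm would output at episode $k$, let $(\Lambda_h^k)^{-1}$ be the corresponding Gram inverse, and abbreviate the elliptical bonus by $u_h^k(s,a)=\|\boldsymbol\phi(s,a)\|_{(\Lambda_h^k)^{-1}}$. First I would establish a good event: using the self-normalized concentration inequality for vector-valued martingales together with a covering-number bound over the data-dependent class of clipped linear value functions (bonus included), one shows that with probability at least $1-\delta$, for all $h\in[H]$, all episodes $k$, and each of the three estimators $\widetilde{\mathbf w}\in\{\mathbf w_h,\hat{\mathbf w}_h,\check{\mathbf w}_h\}$,
\[
\bigl|\boldsymbol\phi(s,a)^\top\widetilde{\mathbf w}-(r_h+\mathbb{P}_h\widetilde V_{h+1})(s,a)\bigr|\;\le\;\beta\,u_h^k(s,a),
\]
where $\widetilde V_{h+1}$ is the value function attached to that estimator (with $\beta'$ replacing $\beta$ for $\check{\mathbf w}_h$). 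This is the only place $\beta,\beta'$ enter, and it fixes their scale to $\widetilde O\bigl(dH\sqrt{\log(dHK/\delta)}\bigr)$ --- independent of the reward magnitude, which is the formal counterpart of the robustness claim.

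Next, on the good event a backward induction on $h$ gives the pointwise sandwich $\check V_h(s)\le V^{\pi^k}_h(s)\le V^\star_h(s)\le\hat V_h(s)$: optimism of $\hat V$ and pessimism of $\check V$ come from adding/subtracting the bonus, and $V^{\pi^k}\ge\check V$ holds because $\pi^k$ is greedy for an estimate dominating $\check Q$. Hence the suboptimality of the returned policy satisfies $V^\star_1(s_1^k)-V^{\pi^k}_1(s_1^k)\le\hat V_1(s_1^k)-\check V_1(s_1^k)$, and unrolling the Bellman recursion for this gap along the dynamics induced by $\pi^k$ yields
\[
V^\star_1(s_1^k)-V^{\pi^k}_1(s_1^k)\;\le\;(\beta+\beta')\sum_{h=1}^{H}\mathbb{E}_{\pi^k}\!\bigl[u_h^k(s_h,a_h)\bigr]\;+\;\xi^k,
\]
with $\xi^k$ a sum of bounded martingale differences.

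The crux is to bound $\sum_{k=1}^{K}\mathbb{E}_{\pi^k}[u_h^k(s_h,a_h)]$, and this is where the geometric repositioning length is essential. By memorylessness, conditioned on the repositioning phase reaching step $h$ it is still driven by $\pi^k$, so $\Pr[L^k\ge h]=(1-p)^h$; choosing $p=\Theta(1/H)$ makes this a universal constant for every $h\le H$. Thus the features logged during repositioning form, up to a constant factor and a Freedman-type martingale correction, a faithful sample from exactly the distribution in the bound, so one may replace $\mathbb{E}_{\pi^k}[u_h^k]$ by the observed $u_h^k(s_h^k,a_h^k)$ and invoke the elliptical-potential (log-determinant) lemma, $\sum_{k=1}^{K}u_h^k(s_h^k,a_h^k)\le\sqrt{2dK\log(1+K/\lambda)}$. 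Summing over $h$ and combining with the first step gives a cumulative bound of order $\widetilde O\bigl(\sqrt{d^3H^4K}/p\bigr)$; a standard online-to-batch conversion (return a uniformly random episode's exploitation policy, paying an extra $\log(1/\delta)$ to concentrate over the random lengths $L^k$) converts this into the PAC statement that $\mathrm{poly}(d,H)\cdot\epsilon^{-2}\log(1/\delta)$ episodes suffice for an $\epsilon$-optimal exploitation policy, which is Theorem~\ref{main-theorem-informal}.

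I expect the last step to be the main obstacle: rigorously showing that the executed trajectory --- a $\pi^k$-driven prefix of random geometric length spliced onto a suffix driven by the optimistic policy --- yields data whose empirical elliptical norms dominate the expected elliptical norms under the pure exploitation policy that governs the suboptimality. This needs the right filtration so that $L^k$ and the trajectory are measurable at the correct instants, careful treatment of truncation at horizon $H$, and a check that the $1/p$ factors introduced both by the ``update only when $L^k=0$'' rule and by the coverage argument stay polynomial (which is why $p$ must be $\Theta(1/H)$ and no smaller). A secondary, more routine obstacle is redoing the covering-number argument for the clipped linear value class with the bonus so that it simultaneously covers all three estimators $\hat Q$, $Q$, and $\check Q$.
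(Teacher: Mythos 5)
Your first two stages coincide with the paper's proof: the good event is established exactly as you describe (self-normalized concentration plus a covering-number bound applied separately to the optimistic and pessimistic clipped-linear classes, fixing $\beta,\beta'=\widetilde O(dH\sqrt{\log(dT/\delta)})$), and the suboptimality of the exploitation policy is reduced to the gap $\hat V_1^k-\check V_1^k$ via the UCB/LCB sandwich. Where you diverge is the final, decisive step, and there your plan has a real gap. You propose to unroll the gap ``along the dynamics induced by $\pi^k$'' and then argue that the repositioning-phase data covers $\pi^k$'s visitation distribution. Two problems: (i) the telescoping itself does not go through in that direction, because $\hat V_h(s)=\max_a\hat Q_h(s,a)\ge\hat Q_h(s,\pi_h^k(s))$, so replacing the max by the exploitation action points the wrong way for an upper bound (the gap $\hat V-\check V$ only telescopes cleanly along the \emph{optimistic} greedy action, where $\hat V_h(s)=\hat Q_h(s,\hat\pi_h(s))$ and $\check V_h(s)\ge\check Q_h(s,\hat\pi_h(s))$); and (ii) the change-of-measure from the logged features to $\mathbb{E}_{\pi^k}[u_h^k]$ is exactly the step you flag as the main obstacle, and you do not supply it.

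The paper's proof avoids both issues by a design choice you did not exploit: Algorithm~\ref{provable-algo} only performs the least-squares update on episodes with $L^k=0$, i.e.\ episodes that are \emph{entirely} driven by the optimistic policy. The regret sum is then rewritten as $\tfrac{1}{p}\sum_{k\in\mathbb{I}}(\hat V_1^k-\check V_1^k)$ over that subsequence $\mathbb{I}$, the gap is telescoped along the actually executed optimistic trajectory (so the bonuses appearing in the bound are evaluated at the logged state--action pairs), and the elliptical-potential lemma applies verbatim; an auxiliary lemma (adding extra repositioning data to the Gram matrix only shrinks $(\Lambda_h^k)^{-1}$) handles the fact that $\Lambda_h^k$ also contains repositioning samples. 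The only price of the two-phase structure is the $1/p$ factor from the expected spacing of fully exploratory episodes --- no coverage argument about $\pi^k$'s distribution is needed anywhere. If you want to salvage your route, you would have to either prove the coverage claim you sketch (with the correct filtration and a Freedman-type correction) or, more simply, adopt the paper's restriction to the $L^k=0$ episodes, which collapses your ``main obstacle'' entirely.
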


While traditional curiosity-driven exploration algorithms can also achieve this worst-case upper bound, the key improvement introduced by Hyper lies in the repositioning-and-exploration mechanism. This mechanism significantly boosts the sample efficiency of Hyper compared to standard approaches by guiding exploration more effectively, thus reducing the likelihood of suboptimal exploration or unnecessary revisiting of uninformative states. This distinction is crucial and is further demonstrated in the empirical results in later sections.

\section{Repositioning \& Exploration}
In this section, we dive deeper into the design choices behind the repositioning-and-exploration mechanism, which is central to Hyper's efficiency as well as robustness against $\beta$. The goal of this mechanism is to decouple task learning from exploration while regularizing exploration to ensure it remains both persistent and efficient. This decoupling allows the exploitation policy to focus on refining task-specific performance while the exploration policy is responsible for discovering novel states in the environment.  \textcolor{black}{We want to point out that Hyper shown in Algorithm \ref{alg:main-algo} is a generic algorithm that can work with any off-policy reinforcement learning algorithms and any curiosity methods. As we will see Hyper performs well with TD3 \cite{fujimoto2018addressing} as the learning algorithm and Disagreement \cite{pathak2019self} as the curiosity module in Section  6, and it also performs well with DQN \cite{mnih2013playing} along with RND  \cite{burda2018exploration} in the Appendix A.5}

\subsection{Isolation of Task Learning \& Regularization Over Exploration Visitation}

\begin{wrapfigure}{R}{0.5\textwidth}
\centering
\includegraphics[width=7cm]{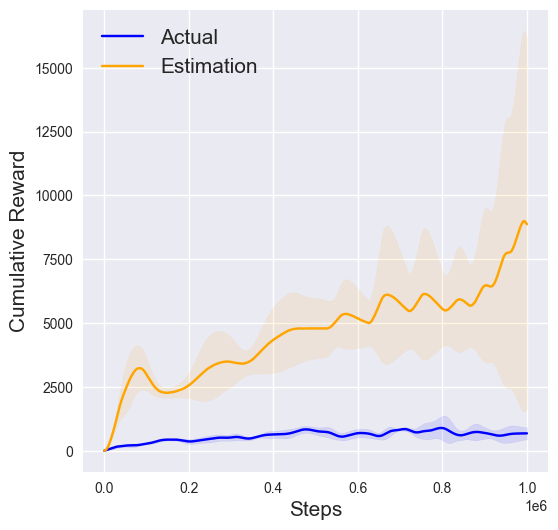}
\caption{Decoupling causes distribution-shift, where the exploitation policy drastically overestimates its value (yellow), yielding poor performance (blue).} \label{fig:distribution-shift}
\end{wrapfigure}

The decision to decouple the learning of task rewards from exploration was driven by the instability observed when using large curiosity coefficients ($\beta$) in traditional algorithms. In such cases, intrinsic rewards often overpower task rewards, leading to over-exploration and poor task performance. To address this, Hyper employs a repositioning phase before each exploration \textcolor{black}{phase}. In the repositioning phase, the agent acts according to the exploitation policy, moving it to promising states where task learning is most likely to be effective.

This approach prevents the exploration policy from wandering too far from regions where the agent has already gained useful task knowledge, avoiding the risk of over-exploration. Additionally, by regularizing the exploration visitation—ensuring that exploration is conducted from states where the agent has already gathered substantial task-related experience—Hyper reduces the instability caused by frequent policy changes in traditional curiosity-driven methods.

This combination of isolation and regularization mitigates the distribution shift problem often observed in decoupled methods that separate task learning from exploration. Without repositioning, the exploitation policy can become detached from the data collected during exploration, leading to overestimation of value functions and poor performance in real-world tasks. Hyper addresses this by aligning the data collection process between exploration and exploitation, ensuring more reliable value estimates.

\begin{algorithm}[tb]
   \caption{Empirically Efficient \textbf{Hyper}}
   \label{alg:main-algo}
\begin{algorithmic}[1]
   \STATE {\bfseries Initialize:} replay buffer $D$, exploration policy $\pi$, exploitation policy $\mu$, truncation probability $p$
   \STATE {\bfseries Optional:} Linear decay schedule of $p$
   \FOR{Training iteration $k = 1,2,..., K$}
       \STATE{repositioning\_length $\gets$ bounded\_geom$(p,H)$}
       \FOR{$i = 1, 2, ..., H$}
            \IF{$i <$ repositioning\_length}
                \STATE \textcolor{orange}{\# Repositioning phase}
                \STATE{Use $\pi$ to step in the environment}
            \ELSE 
                \STATE \textcolor{orange}{\# Exploration phase}
                \STATE{Use $\mu$ to step in the environment}
            \ENDIF
            \STATE{Store the transition in buffer $D$}
            \STATE \textcolor{orange}{\# Policy Improvement}
            \STATE{Sample a batch of data $\mathcal{B} \sim D$}
            \STATE{Update $\pi$ with intrinsic reward $b$ using $\mathcal{B}$}
            \STATE{Update $\mu$ without intrinsic reward $b$ using $\mathcal{B}$}
            \STATE{Update $b$ using $\mathcal{B}$}
       \ENDFOR
       \STATE Decay $p$ based on the decay schedule
   \ENDFOR
   
\end{algorithmic}
\end{algorithm}



\begin{wrapfigure}{R}{0.5\textwidth}
\centering
\includegraphics[width=8.5cm]{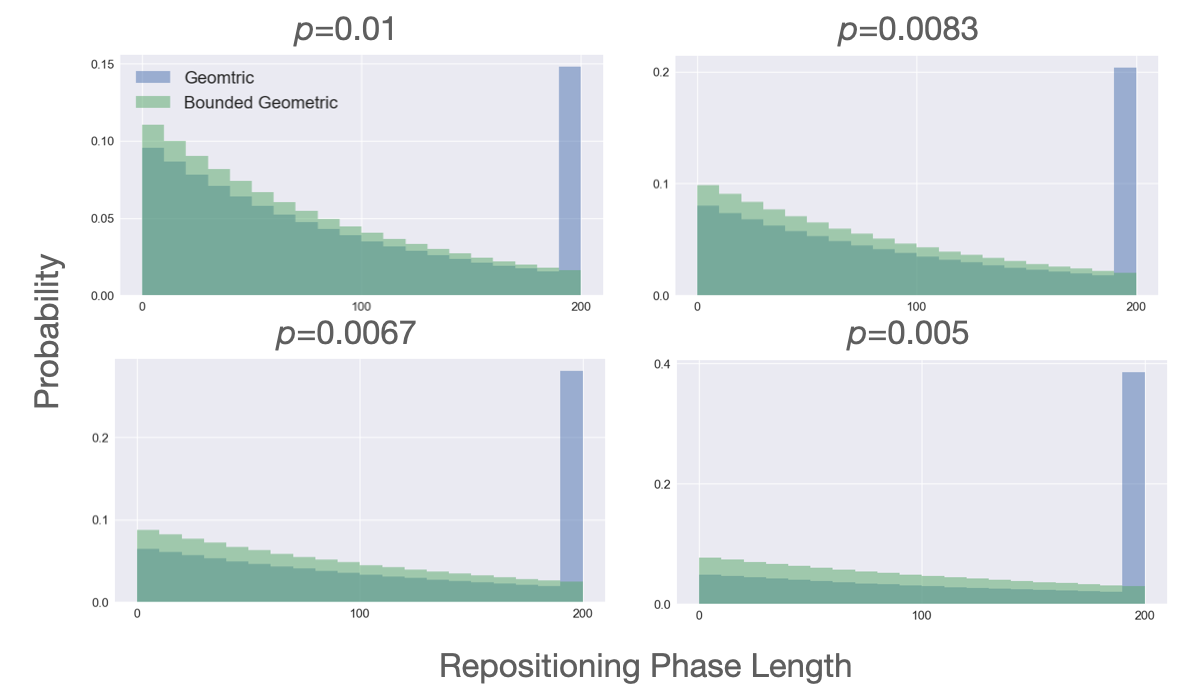}\label{fig:bg_300}
\caption{Distribution of length of repositioning phase (green) Bounded geometric distribution (blue) Original geometric distribution. }
\label{fig:geom}
\end{wrapfigure}

\subsection{Truncation Probability $p$}
The length of the repositioning phase is crucial in determining how far the agent is moved before exploration begins. We adopt a truncation probability $p$ that controls the stopping point for the repositioning phase, striking a balance between exploration and exploitation. Early in the training process, repositioning should not move the agent too far from regions where it has gained task knowledge. To achieve this, we initially set $p=1-\gamma$, where $\gamma$ is the discount factor. This value aligns the repositioning phase with the effective planning horizon of the agent’s exploitation policy, ensuring that repositioning is meaningful without over-reliance on untested areas of the environment.

As training progresses, the exploitation policy becomes more stable and better at identifying high-reward regions of the environment. Hence, we gradually increase the length of the repositioning phase by decaying the truncation probability. This allows the agent to explore more distant areas of the state space as it becomes more confident in its task-specific knowledge, further improving the exploration process.

\subsection{Truncating the Geometric Distribution}
To ensure a balanced exploration process throughout training, we also truncate the geometric distribution used to sample the repositioning phase length. Specifically, in an episodic setting, sampling directly from a geometric distribution can lead to disproportionately long repositioning phases, particularly in environments with shorter horizons. Consider a setting that horizon $H=200$, discount factor $\gamma=0.99$, where the truncation probability will decay from $p=0.01$ to $p=1-\frac{1}{H}=0.005$. If we stop the repositioning phase with the probability $p=1-\gamma$ initially or equivalently sample the length from a geometric distribution with parameter $p$, a large amount of probability will accumulate at $H$. As shown in Figure \ref{fig:geom} (blue), if the repositioning phase is stopped with probability $p$, there is a very large probability that it will consume one whole episode from the beginning of the training. As the training proceeds, the truncation probability decays, and this probability keeps increasing. This phenomenon will cause \textbf{Hyper} to be not as sample efficient in some tasks with a short horizon and requiring efficient exploration.

To address this issue, we use a bounded geometric distribution to sample the reposition length before every episode. The original geometric distribution probability function is defined on all positive integers, which is not desirable in this case. As in the episodic RL setting, the probability of values larger than $H$ accumulates to the marginal density of $H$, causing the undesirable spike shown in Figure \ref{fig:geom}. We instead restrict the domain to the range $[1, H]$ to avoid the probability accumulation, formulated as below: 
\vspace{-1mm}
\begin{equation*}
    \mathbb{P}(L=l) = \begin{cases}
        \frac{p(1-p)^{l-1}}{\sum_{i=1}^H p(1-p)^{(i-1)}} & 0 \leq l \leq H \\
        0 & \text{otherwise}
        \end{cases}
\end{equation*}
Figure \ref{fig:geom} (green) depicts the shape of the resulting distribution for $p$ at different stages of the training. Despite there being an increasingly large probability for \textbf{Hyper} to have a longer repositioning phase, it still has sufficient opportunity to explore, which enhances \textbf{Hyper}'s capability of balancing exploration \& exploitation in different cases.

\begin{figure*}[!htb]
\begin{center}
    \centerline{\includegraphics[width=\linewidth]{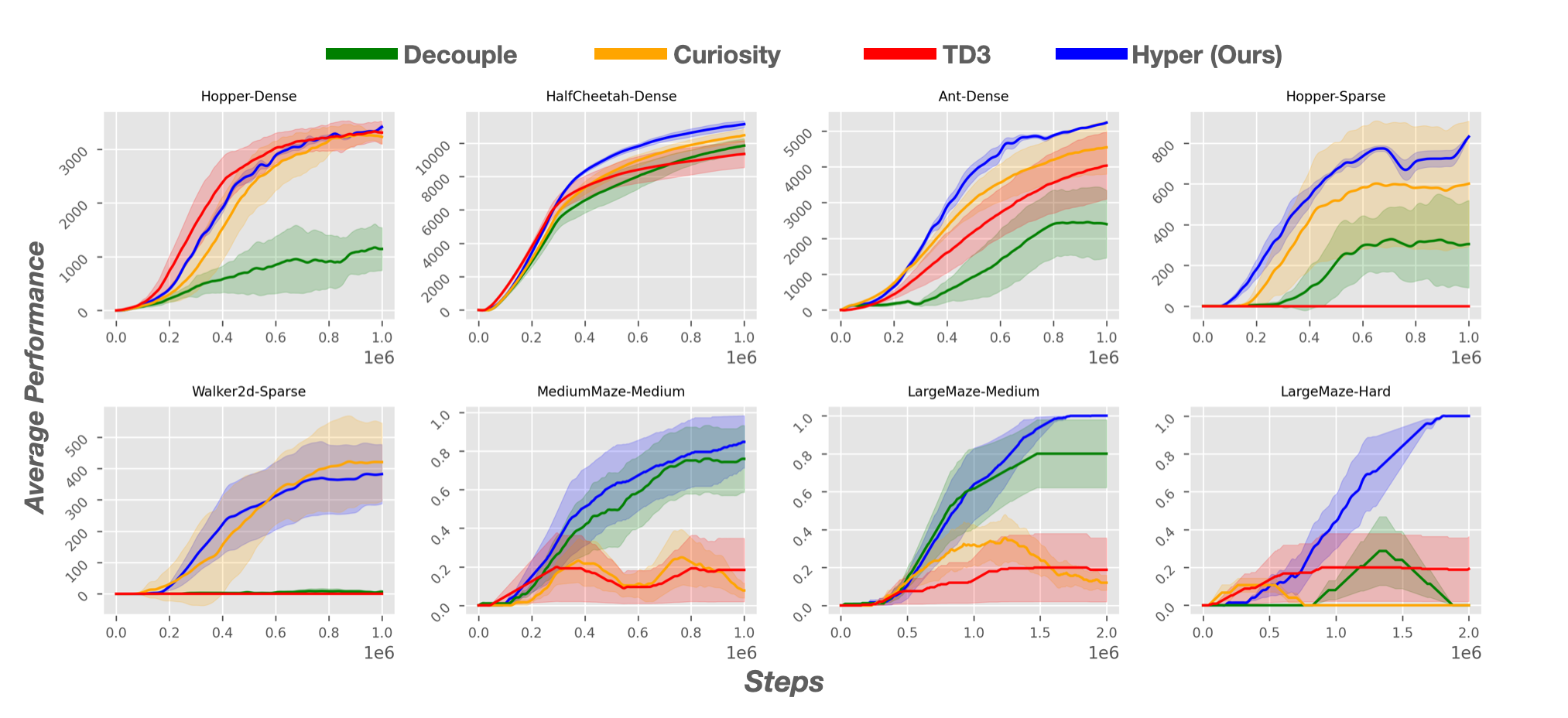}}
    \caption{Performance of \textbf{Hyper} and baselines. For locomotion tasks, the performance is measured by the episodic cumulative reward, for navigation tasks, it is measured by the success rate instead. Each line is averaged over 5 runs with different random seeds.} \label{fig:main-experiment}
\end{center}
\end{figure*}

\section{Experiments}
We evaluate the performance and the robustness of \textbf{Hyper} in this section, with the comparison to baselines with different strategies of balancing exploration \& exploitation. We implement all algorithms with TD3 \cite{fujimoto2018addressing} as the reinforcement learning algorithm and with Disagreement \cite{pathak2019self} as the intrinsic reward when curiosity-driven exploration is used. Specifically, we consider the following baselines:

\paragraph{\textbf{TD3 \cite{fujimoto2018addressing}}} Off-policy reinforcement learning method that uses random action noise for exploration.

\paragraph{\textbf{Curiosity-Driven Exploration (Curiosity)}} Curiosity-driven TD3 agent, using Disagreement intrinsic reward for exploration.

\paragraph{\textbf{Decoupled Reinforcement Learning \cite{schafer2021decoupling} (Decouple)}} Agent that has an additional exploitation policy for exploiting the task by learning from the exploratory data collected by the exploration policy offline, as outlined in the previous section.

All baselines are evaluated on various environments that differ in exploration difficulty, exploitation difficulty, and function approximation difficulty. The detailed setting of the environments is deferred to the appendix. \textcolor{black}{Decouple, Curiosity and Hyper all use the Disagreement method \cite{pathak2019self} to compute the curiosity bonus for fair comparison.}
\subsection{Performance}
Figure \ref{fig:main-experiment} depicts the performance of agents in the continuous goal-searching tasks \cite{fu2020d4rl} and locomotion tasks \cite{todorov2012mujoco} averaged over five trials, the shaded area represents the empirical standard deviation. The full results of the performance comparison and environment setup are deferred to the appendix.

In the goal-searching tasks (Figure \ref{fig:main-experiment}), the agent is spawned following some initial distribution and will receive zero rewards until finding the fixed goal location. Hence the goal of the agent in this series of tasks is to first explore the environment and find the goal location, and then learn to exploit the task by consistently revisiting it. Two mazes with differences in size are used in the experiment: MediumMaze and LargeMaze, where it takes an optimal policy taking approximately 150 steps to reach the goal location in MediumMaze and 250 steps in LargeMaze. We differentiate the difficulty of tasks in the horizon of every episode. A shorter horizon results in a lower tolerance for redundant steps, which requires the agent to find a sufficiently good policy to obtain positive feedback from the environment. The horizon for each task is shown in the Table \ref{tab:pointmaze-horizon}. In this set of experiments, we do not tune $\beta$. We apply $\beta=1.0$ for \textbf{Hyper}, Decouple, and Curiosity agents, to ensure a fair comparison. 

\begin{figure*}[!htb]
\begin{center}
    \centerline{\includegraphics[width=\linewidth]{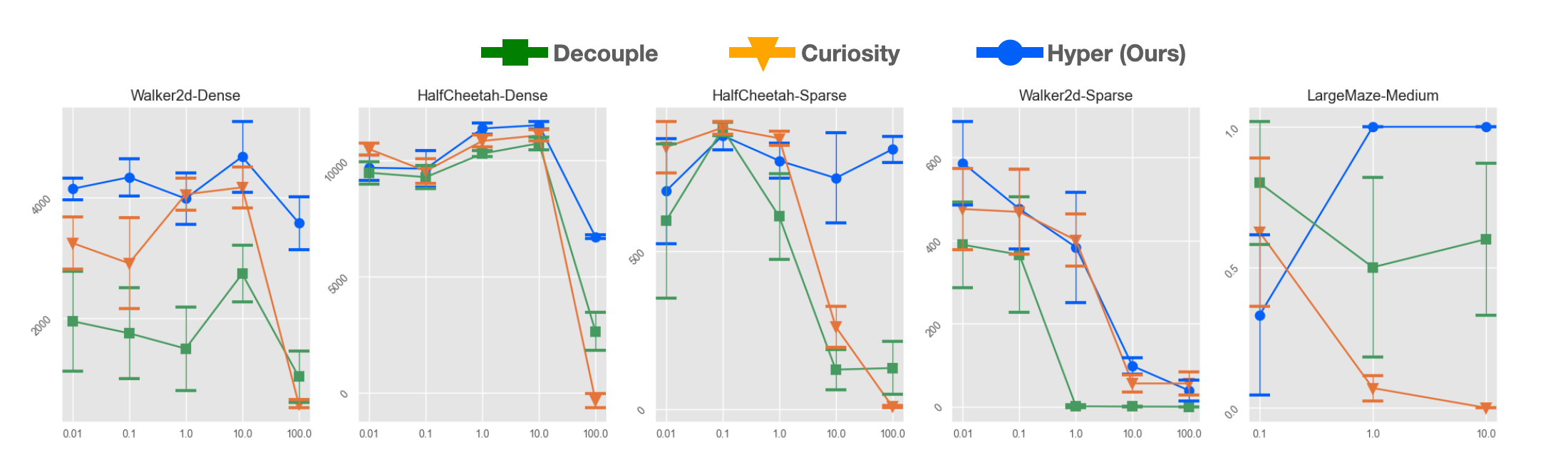}}
    \caption{$\beta$-Sensitivity analysis. The x-axis represents the different choices of $\beta$, y-axis represents the final performance after 1M steps for the first four tasks, 2M steps for LargeMaze-Medium. \textcolor{black}{The point in the graph represents the mean value over 5 runs, and the bars depict one standard deviation. The experiment shows that \hyper not only enjoys better robustness to $\beta$ compared to other methods, but also enjoys a smaller variance.}} \label{fig:beta-sensitivity-analysis}
\end{center}
\end{figure*}

TD3 explores inefficiently with random actions, which makes it hard to perform well in exploration-intensive tasks. In the sparse reward locomotion environment (Hopper-Sparse, Walker2d-Sparse), it fails to receive any reward in any trial. In the navigation tasks, despite it can quickly learn to exploit after finding the goal location, it can only manage to find the goal location once in all trials, indicating its lack of efficient exploration. Curiosity explores the environment efficiently and succeeds in the sparse-reward locomotion tasks. It also performs comparably, even favorably, in dense-reward locomotion tasks. However, the hard navigation tasks, show the problem in exploitation. It manages to find the goal location, however, it never manages to learn to properly exploit the task. The Decouple agent can learn a successful policy based on a limited number of success trajectories collected by the Curiosity agent in MediumMaze-Medium and LargeMaze-Medium, but it fails to do so in the hardest navigation task, LargeMaze-Hard. The Decouple agent also struggles with high-dimensional locomotion tasks due to distribution shift.

\textbf{Hyper} performs comparable, often time favorable in all the tasks. Notably, \textbf{Hyper} is the only agent that consistently tackles the task. Compared to the failure of the Decouple agent in this task, it suggests that the regularization of the exploration visitation and the exploration persistence is crucial in this environment. The decouple agent fails in this task because it has no control over the exploration policy, and as the exploration policy cannot collect enough successful trajectories, it would be difficult for the Decouple agent to learn to exploit, whereas \textbf{Hyper} can guide the exploration policy towards the promising region, increasing the exploration persistence, and the exploitation in turn benefits from the high-quality data.

\subsection{$\beta$ Sensitivity Analysis}
We now present the performance of \textbf{Hyper} and Curiosity with different intrinsic coefficients $\beta$.

We evaluate the final performance of Curiosity, Decouple, and \textbf{Hyper} agent over multiple trials with each different value of $\beta$ choice on five environments. As shown in Figure \ref{fig:beta-sensitivity-analysis}, the Curiosity agent shows peak performance with different values of $\beta$. Typically, the performance of the Curiosity agent peaks with a small value of $\beta$ and drastically drops as $\beta > 10$, whereas \textbf{Hyper} shows considerably more consistent with different $\beta$ values, indicating its robustness to this hyperparameter. It is worth noting that, \textbf{Hyper} generally shows better tolerance to the large value of $\beta$, which allows one to apply large $\beta$ with \textbf{Hyper} without prior knowledge of the environment, to exhibit more exploratory behavior and prevent from being stuck in sub-optimality. 


\section{Related Work}
Exploration \& exploitation is a long-standing research topic in the RL community \cite{thompson1933likelihood, auer2002using}. Balancing exploration \& exploitation is the key to efficient RL. Curiosity-driven exploration has emerged as a promising paradigm for achieving efficient RL. A line of theoretical research focuses on the optimal exploration-exploitation trade-off in RL with theoretically sound intrinsic rewards \cite{azar2017minimax, jin2018q, yang2020reinforcement, jin2020provably}. The statistical properties of the environment (dynamics, rewards) and the intrinsic reward are assumed to be known, which allows one to find a coefficient $\beta$ for the optimal trade-off between exploration and exploitation.

Recently, this idea has been applied to practical RL algorithms and achieved great success in solving hard-exploration problems \cite{bellemare2016unifying, ostrovski2017count, burda2018exploration}. The common approach is to employ neural networks to estimate a transition's uncertainty and use this approximate uncertainty as the intrinsic reward $b$ for encouraging exploration. Various notions of uncertainty were used: error on dynamics prediction \cite{oudeyer2007intrinsic, pathak2017curiosity, pathak2019self}, state visitation density \cite{bellemare2016unifying, ostrovski2017count, machado2020count}, entropy gain \cite{tang2017exploration, choshen2018dora, burda2018exploration}, fix-target prediction \cite{burda2018exploration}, etc. 

Go-Explore is a powerful paradigm for efficient exploration that is not sensitive to the hyperparameter \cite{ecoffet2019go}. The algorithm keeps track of the state visitation, returns to the least visited states, and then explores from there. This paradigm does not use curiosity to exhibit exploration but uses random actions. However, this method requires the environment to be either deterministic or resettable or for the agent to have access to some domain knowledge, which is generally unrealistic.

Bayesian RL is also proposed to exhibit efficient exploration \cite{ghavamzadeh2015bayesian, fortunato2017noisy, osband2016deep}, which leverages the ideas of Bayesian inference to quantify the uncertainty and encourage exploration accordingly. Bootstrapped DQN, the most practical variant of this line of work, wisely combines the idea of Bayesian inference and the property of the neural network for the agent to exhibit diverse and exploratory behavior and significantly improve the performance of DQN on a handful of environments. However, it does not fundamentally improve the exploration capability from the naive $\epsilon$-greedy exploration, as it fails in exploration-intensive tasks. 

Options-based methods \cite{sutton1999between, bacon2017option, dabney2020temporally, chen2022redeeming, kim2023lesson} are also proposed for tackling the exploration problem by temporally abstracting the actions, resulting in conceptually easier policy learning and more consistent exploration. This paradigm is closely related to \textbf{Hyper}, whereas options-based methods mostly train an additional high-level switch policy to decide action selection protocol. \cite{kim2023lesson} proposes an algorithm (LESSON) that automatically selects from exploitation and exploration policy to take action at every step, which yields superior results compared to original exploration and other option-based methods. We also compare \textbf{Hyper} to LESSON, examining the efficiency of \textbf{Hyper}. The results are deferred to the appendix. 

Recently, some dedicated methods have been for mitigating the hyperparameter sensitivity of the exploration methods \cite{liu2021decoupling, whitney2021decoupled, schafer2021decoupling}. Among these methods, \cite{schafer2021decoupling} is the most related to our method, as it uses the decoupling method as we outlined in Section 4 to mitigate the problem. It is efficient in low-dimensional cases, but hard to scale to tasks with high-dimensional state and action spaces, as we thoroughly demonstrated in the experiment.

\section{Conclusion}
We propose \textbf{Hyper}, a novel algorithm that leverages two distinct policies, that improve the robustness of curiosity-driven exploration by regularizing the exploration visitation. We theoretically justify its sample efficiency, that \textbf{Hyper} explores the environment efficiently. We also empirically validate its performance and robustness compared to the exploitation algorithm, curiosity-driven exploration algorithm, and previous attempts at solving this problem.

\bibliographystyle{unsrt}  
\bibliography{main}  

\newpage
\appendix
\onecolumn

\section{Implementation \& Experiment Details} \label{implementation}
\subsection{Implementation of Agents}
We implement \textbf{Hyper}, Decouple, and Curiosity based on the official implementation of TD3 \cite{fujimoto2018addressing} along with the original hyperparameters reported in the paper. For adapting the intrinsic reward method with TD3, we update the Disagreement \cite{pathak2019self} intrinsic reward model for every environmental step using $25\%$ of data of each batch to prevent the intrinsic reward from shrinking too fast following \cite{burda2018exploration}. In the locomotion experiments, we set the truncation probability $p$ to be $0.01$ initially, and decay to $0.001$, as we discussed in Section 5.

\subsection{PointMaze}
Our experiments on continuous navigation tasks are conducted in the PointMaze domain \cite{todorov2012mujoco, fu2020d4rl}. PointMaze mostly serves as a fully observable goal-reaching benchmark, where the agent observes a 6 values at a state: current x-axis position, current y-axis position, current x-axis speed, current y-axis speed, goal x-axis position and goal y-axis position. And the agent is allowed to take a 2-dimensional action, that controls the x-axis acceleration and y-axis acceleration respectively.

In our experiment, we turn it into a sparse-reward goal searching domain. In our experiment, the agent only observes its own position and state, but not the goal location. The agent will only receive positive reward once reaching the goal. In this series of tasks, the agents not only need to first find the goal location, but also to consistently reach the goal. Despite the environments consisting of low-dimensional state and action space, this series of experiments examines and distinguishes the capability of balancing exploration \& exploitation of agents.

\begin{figure}[h]
    \centering
    \includegraphics[width=0.5\linewidth]{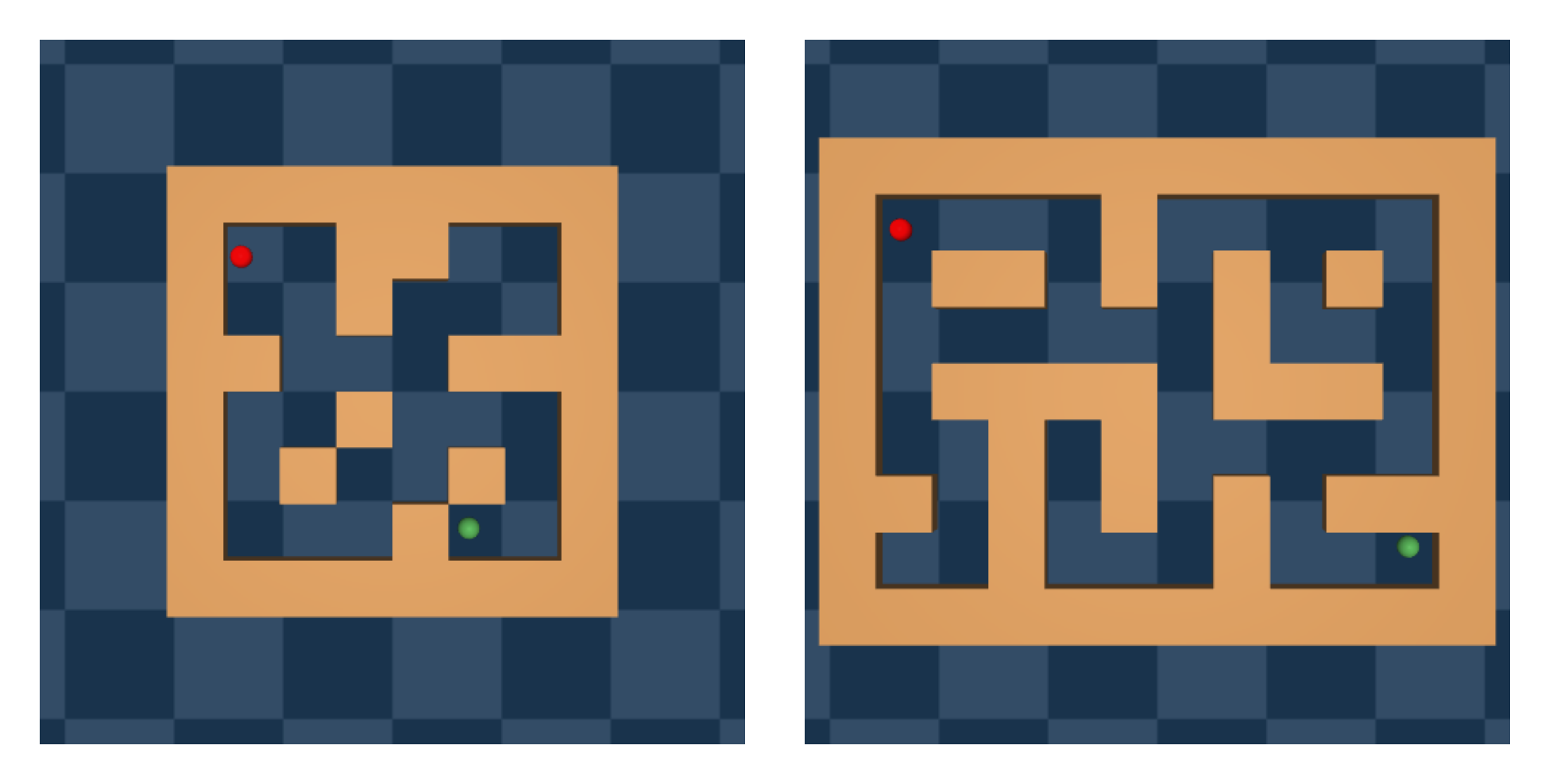}
    \caption{Layout of environments used continuous navigation experiments: (Left) MediumMaze (Right) LargeMaze}
    \label{fig:pointmaze-layout}
\end{figure}

Figure \ref{fig:pointmaze-layout} shows the layout of the MediumMaze and LargeMaze, where the initial location is marked with the green circle, and the goal location is marked with the red circle. For every episode, the agent is spawned randomly near the initial location, and the goal is spawned near the goal location. It takes approximately 150 steps for an optimal policy to reach the goal in MediumMaze and 250 steps in LargeMaze (depending on the randomly spawned agent and goal location). 

We differentiate the difficulty of each task by restricting the horizon of each task. Specifically, tasks with shorter show less tolerance to redundant steps, and the agent will only receive positive feedback by exploring a more optimal trajectory. The detailed setup is shown in Table \ref{tab:pointmaze-horizon}, where \textit{Optimal \#Steps} means the number of steps that an optimal policy needs to take from the initial location to the goal location, the variation is caused by the randomness when initializing the initial and goal location.

\begin{table}[h]
    \centering
    \begin{tabular}{|c|c|c|} \hline 
Task Name&Optimal \#Steps&Horizon $H$\\ \hline 
         MediumMaze-Easy& $\approx$150 &500 \\ \hline 
         MediumMaze-Medium& $\approx$150 &200 \\ \hline 
         LargeMaze-Easy& $\approx$250 &1000 \\ \hline 
         LargeMaze-Medium& $\approx$250 &500 \\ \hline 
         LargeMaze-Hard& $\approx$250 &300 \\ \hline 
    \end{tabular}
    \caption{Difficulty of navigation tasks}
    \label{tab:pointmaze-horizon}
\end{table}

\subsection{MuJoCo Locomotion}
For the locomotion environments \cite{todorov2012mujoco}, the agent starts idle and the task is to control the robot to move forward as fast as possible within 1000 steps, and the episode will end if the robot falls down. The agent will observe the position, velocity, and angular velocity of the joints of the robot, and take actions to control the torque on all the joints. In the dense reward version, including Hopper, Walker2d, HalfCheetah, Ant, and Humanoid. At each step, the agent will receive a performance reward proportional to its velocity and a constant "healthy reward" if it remains in a healthy position (i.e. not falling). In the sparse reward version, including SparseHopper, SparseWalker2d, SparseHalfCheetah, and SparseHumanoid, the agent will not receive the healthy reward, and will only get a unit reward once its forward speed exceeds some threshold. 

\subsection{More Experiment Results}
We present the full set of experiment results in this section. TD3 and Curiosity agent perform well in some of the tasks, depending on the property of the environment, whereas \textbf{Hyper} performs comparable to the best-performing algorithm in every task.

\begin{figure}[h]
    \centering
    \includegraphics[width=0.8\linewidth]{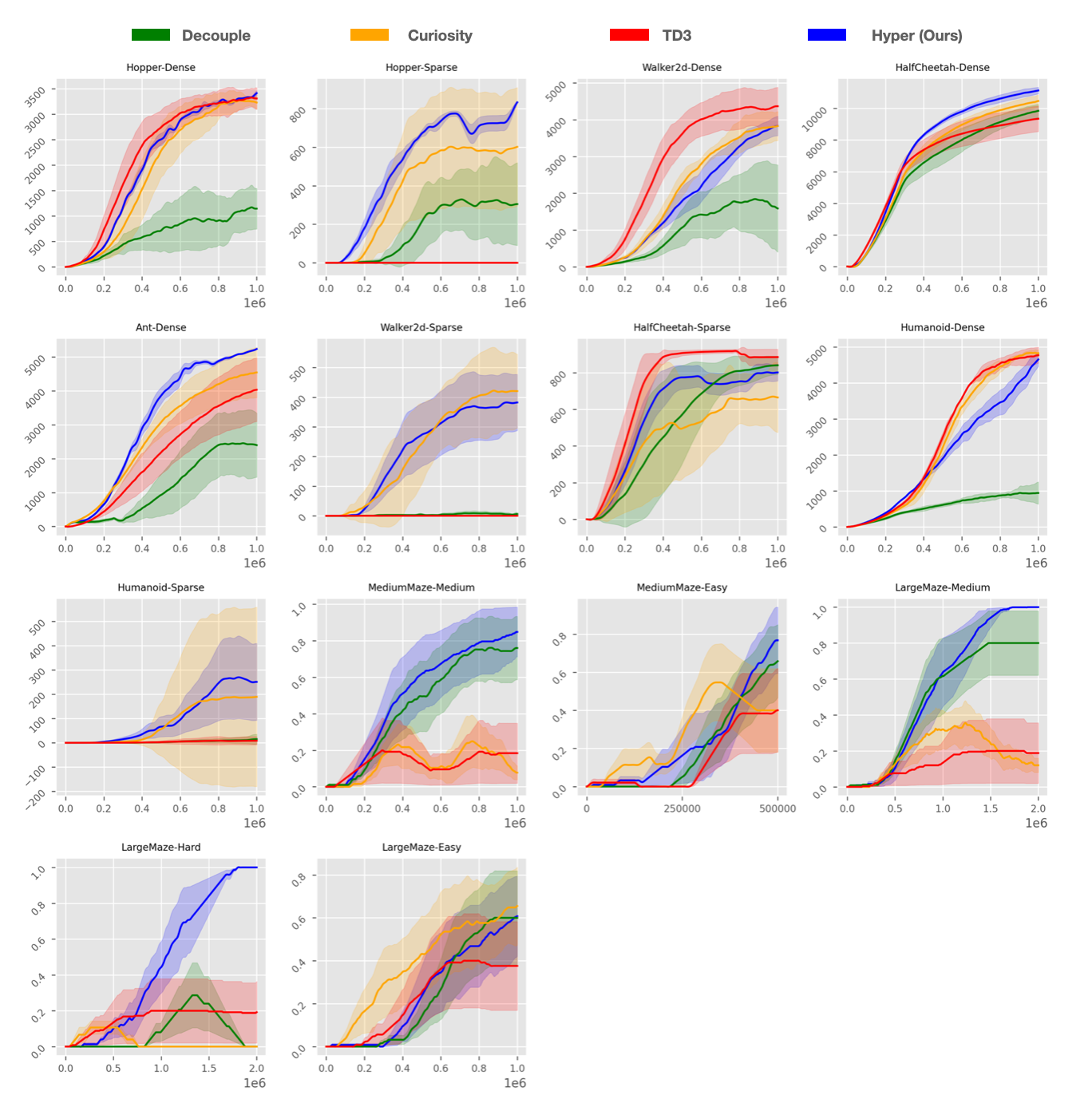}
    \caption{Full experiments results for performance comparison.}
    \label{fig:enter-label}
\end{figure}

\subsection{Comparision with Option-Based Method}
We compare our method with the recently proposed options-based exploration method LESSON \cite{kim2023lesson}. We implement \textbf{Hyper} based on official implementation provided by \cite{kim2023lesson}, which uses DQN \cite{mnih2013playing} as the RL algorithm, and RND \cite{burda2018exploration} as the intrinsic reward. We also use the same set of hyperparameters provided by the original paper.

We apply \textbf{Hyper} in the MiniGrid domain, on which LESSON is tested in the original paper. The comparison is shown in the Figure \ref{fig:minigrid}. \textbf{Hyper} performs comparable in Empty16x16 and Fetch8x8 tasks, and outperforms LESSON in UnlockPickup and LavaCrossingS9N1, which further validates the efficiency of \textbf{Hyper}.

\begin{figure}
    \centering
    \includegraphics[width=\linewidth]{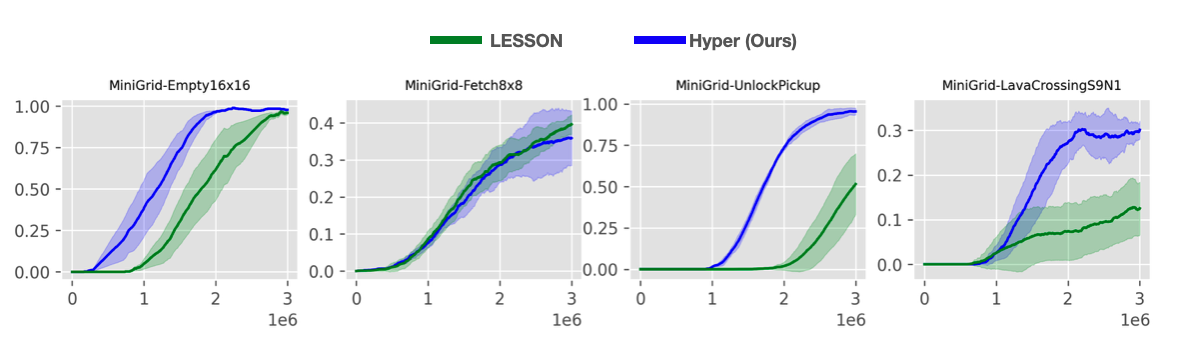}
    \caption{MiniGrid Performance of \textbf{Hyper} and LESSON. The x-axis represents the number of steps, y-axis represents the success rate.}
    \label{fig:minigrid}
\end{figure}

\subsection{Hyperparameters of Experiments}

\subsubsection{Hyperparameters for TD3-Based Algorithms}
\begin{center}
\begin{tabular}{l|c}
\hline 
\textbf{Hyperparameter} & \textbf{Value} \\ 
\hline 
Learning Rate & 3e-4 \\
Intrinsic Reward Learning Rate & 1e-4 \\
Batch Size  & 256 \\
Policy Update Delay & 2 \\
Optimizer & Adam \\
$Q$-Network Architecture & (256, 256) \\
Actor-Network Architecture & (256, 256) \\
Activation function & ReLU \\
\hline
\end{tabular}
\end{center}

\subsubsection{Hyperparameters for curiosity-driven exploration}
\begin{center}
\begin{tabular}{l|c}
\hline 
\textbf{Hyperparameter} & \textbf{Value} \\ 
\hline 
$\beta$  & 1.0 \\
$p$ & (0.01, 0.001) \\
Learning Rate of Disagreement Model & 1e-4 \\
Disagreement Ensemble Size & 5 \\
\hline
\end{tabular}
\end{center}

\section{Proof of Efficiency of \textbf{Hyper}} \label{proof}
\begin{algorithm}[h] 
    \caption{Provably Efficient Linear-UCB-Hyper} \label{algo}
    \begin{algorithmic}
        \STATE {\bfseries Requires:} Parameters $\lambda > 0, \mu > 0, \beta > 0, \beta' > 0$, Horizon $H$, Feature Mapping $\boldsymbol{\phi}$, Truncation probability $p > 0$, Weights $\hat{\mathbf{w}}_h$ for optimistic $Q$-function, $Q$-function $\mathbf{w}_h$ for exploitation, $\check{\mathbf{w}}_h$ for pessimistic $Q$-function for all $h \in [H]$
        \STATE {\bfseries Requires:} Clipping function $\operatorname{clip}(x): x \rightarrow \left\{
            \begin{array}{ll}
                  0 & x \leq 0 \\
                  x & x  \in (0, H) \\
                  H & x \geq H \\
            \end{array} 
            \right.$, Geometrical distribution $\operatorname{Geom}$ 
        \FOR{$k = 1,2, ..., K$} 
        \STATE{Receive the initial state $s_1^k$}
        \STATE{Sample length $L^k \sim \operatorname{Geom}(p)$ for the first phase}

        \STATE \textcolor{orange}{\# Repositioning phase}
        \FOR{step $h=1, 2, ..., L^k$}
            \STATE{Take action $a_h^k \gets arg\max_{a \in \mathcal{A}} {Q_h (s_h^k, a; \mathbf{w}_h)}$, and observe $s_{h+1}^k$}
        \ENDFOR
        
        \STATE \textcolor{orange}{\# Exploration phase}
        \FOR{step $h=L^k+1, L^k+2, ..., H$} 
            \STATE Take action $a_h^k \gets arg\max_{a \in \mathcal{A}} {\hat{Q}_h(s_h^k, a; \hat{\mathbf{w}}_h)}$, and observe $s_{h+1}^k$
        \ENDFOR

        \STATE \textcolor{orange}{\# Policy Improvement}
        \FOR{$h = H, H-1, ..., 1$}
        \IF{$L^k = 0$}
        \STATE $\Lambda_h \gets \Sigma_{\tau=1}^{k-1} \boldsymbol{\phi}(s_h^{\tau}, a_h^{\tau}) \boldsymbol{\phi}(s_h^{\tau}, a_h^{\tau})^T + \lambda \cdot \textbf{I}$
        \STATE $\mathbf{w}_h \gets \Lambda_h^{-1} \Sigma_{\tau=1}^{k-1} \boldsymbol{\phi}(s_h^{\tau}, a_h^{\tau})[r_h(s_h^{\tau}, a_h^{\tau}) + \max_a Q_{h+1}(s_h^{\tau + 1}, a; \mathbf{w}_h)]$
        \STATE $\hat{\mathbf{w}}_h \gets \Lambda_h^{-1} \Sigma_{\tau=1}^{k-1} \boldsymbol{\phi}(s_h^{\tau}, a_h^{\tau})[r_h(s_h^{\tau}, a_h^{\tau}) + \max_a \hat{Q}_{h+1}(s_h^{\tau + 1}, a; \hat{\mathbf{w}}_h)]$
        \STATE $\check{\mathbf{w}}_h \gets \Lambda_h^{-1} \Sigma_{\tau=1}^{k-1} \boldsymbol{\phi}(s_h^{\tau}, a_h^{\tau})[r_h(s_h^{\tau}, a_h^{\tau}) + \max_a \check{Q}_{h+1} (s_h^{\tau + 1}, a; \check{\mathbf{w}}_h)]$
        
        \STATE $ Q_h(\cdot, \cdot; \mathbf{w}_h) \gets \operatorname{clip} \left ( {\mathbf{w}_h}^T \boldsymbol{\phi}(\cdot, \cdot) \right )$
        \STATE $\hat{Q}_h(\cdot, \cdot; \hat{\mathbf{w}}_h) \gets \operatorname{clip}\left( \hat{\mathbf{w}}_h^T \boldsymbol{\phi}(\cdot, \cdot) + \beta[\boldsymbol{\phi}(\cdot, \cdot)^T \Lambda_h^{-1} \boldsymbol{\phi}(\cdot, \cdot)]^{\frac{1}{2}} \right)$
        \STATE $ \check{Q}_h(\cdot, \cdot; \check{\mathbf{w}}_h) \gets \operatorname{clip}  \left ( \check{\mathbf{w}}_h^T \boldsymbol{\phi}(\cdot, \cdot) - \beta'[\boldsymbol{\phi}(\cdot, \cdot)^T \Lambda_h^{-1} \boldsymbol{\phi}(\cdot, \cdot)]^{\frac{1}{2}} \right )$
        \ENDIF
        \ENDFOR
        \ENDFOR
        
    \end{algorithmic}
\end{algorithm}

\begin{theorem} \label{main-theorem}
    With any truncation probability $p \in (0, 1)$ as truncation probability for the repositioning phase it takes at most $\tilde{\mathcal{O}}(\frac{d^3H^4}{\epsilon^2})$ steps for Linear-UCB-\textbf{Hyper} to obtain an $\epsilon$-optimal exploitation policy $\mu$ with high probability.
\end{theorem}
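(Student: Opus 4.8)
The plan is to reduce the analysis of Linear-UCB-\textbf{Hyper} to the standard analysis of optimistic least-squares value iteration (LSVI-UCB) for linear MDPs, cf.\ \cite{jin2020provably}, by arguing that the repositioning phase does not damage the statistical guarantees of the exploration phase, and then to translate the optimism/pessimism sandwich into a bound on the suboptimality of the exploitation policy $\mu$. First I would set up the standard concentration machinery: for the optimistic weights $\hat{\mathbf{w}}_h$, show by the usual self-normalized martingale / elliptical-potential argument that, on a high-probability event, the bonus $\beta[\boldsymbol\phi^\top\Lambda_h^{-1}\boldsymbol\phi]^{1/2}$ with $\beta = \tilde{O}(dH)$ dominates the fitting error $|(\hat{\mathbf{w}}_h - \text{true regressed weight})^\top\boldsymbol\phi|$ at every $(s,a,h)$. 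This gives optimism of $\hat Q$ with respect to $Q^\star$ (value of the optimal exploration/task policy), and symmetrically pessimism of $\check Q$ with respect to $Q^\star_\mu$ for the exploitation objective, so that $\check Q_h \le Q^\star_{\mu,h} \le$ (value of the exploration rollout), the key sandwich used to certify an $\epsilon$-optimal $\mu$.

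The second step is the regret decomposition over the $K$ episodes, which is where the two-phase structure enters. Let $L^k \sim \operatorname{Geom}(p)$ (truncated at $H$) be the repositioning length in episode $k$. I would condition on $L^k$ and split the episode at step $L^k$: during the exploration phase (steps $L^k+1,\dots,H$) the agent plays $\arg\max_a \hat Q_h$, so the per-episode exploration regret telescopes in the usual way into a sum of bonuses $\sum_{h>L^k}\beta[\boldsymbol\phi(s_h^k,a_h^k)^\top\Lambda_h^{-1}\boldsymbol\phi(s_h^k,a_h^k)]^{1/2}$ plus a martingale difference term. Summing over $k$ and applying Cauchy--Schwarz and the elliptical potential lemma $\sum_k \boldsymbol\phi^\top\Lambda^{-1}\boldsymbol\phi = \tilde O(dH)$ gives a cumulative bound of order $\beta\sqrt{dHK\cdot H} = \tilde O(\sqrt{d^3 H^4 K})$. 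Crucially, the repositioning steps only add to the $\Lambda_h$ matrices (more data, never less), so the potential bound is unaffected; the repositioning-phase transitions are themselves a valid rollout of the exploitation policy $Q_h(\cdot;\mathbf w_h)$ and contribute a nonnegative amount of information, so dropping them from the regret accounting is sound. Note the update condition ``$L^k=0$'' in Algorithm~\ref{algo} means weights are refreshed on the episodes with no repositioning; since $\mathbb P(L^k=0) = \Theta(p)$ (or is handled by the truncation convention), a constant fraction of episodes perform a full update, which only inflates $K$ by a $1/p$ factor that is absorbed into the $\tilde O$ for fixed $p\in(0,1)$ — and the theorem statement explicitly allows any such $p$.

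The third step converts cumulative regret to a PAC bound: standard online-to-batch / pigeonhole over the $K$ episodes extracts a single episode whose exploration policy has suboptimality $\tilde O(\sqrt{d^3H^4/K})$, and then the optimism--pessimism gap $\hat Q_1(s_1) - \check Q_1(s_1)$ at that episode upper-bounds the exploitation regret of the induced $\mu$; setting this gap $\le \epsilon$ forces $K = \tilde O(d^3H^4/\epsilon^2)$, hence $KH = \tilde O(d^3H^5/\epsilon^2)$ total steps — I would need to check whether the $H^4$ in the statement counts episodes or steps and reconcile the horizon powers accordingly, possibly by a finer ``low-switching''/rare-update accounting. The main obstacle I anticipate is precisely making rigorous the claim that repositioning is harmless: one must show that the exploitation policy used for repositioning, which is itself being learned and changes across episodes, produces a state distribution at step $L^k+1$ from which the optimism argument still certifies progress — i.e.\ that the optimistic bonuses accumulated along exploration-phase visits still cover the whole reachable state space despite the agent being ``teleported'' by a nonstationary, possibly suboptimal $\mu$. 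Handling this cleanly likely requires noting that $\Lambda_h$ aggregates data from \emph{all} phases and all past episodes, so coverage is driven by the union of visited features regardless of which phase generated them, together with the geometric law of $L^k$ ensuring every step $h$ is in the exploration phase with probability bounded below, so every layer $h$ receives $\Omega(pK)$ exploratory samples in expectation.
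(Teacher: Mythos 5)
Your proposal is correct and follows essentially the same route as the paper's proof: the optimism--pessimism sandwich $\check Q \le Q^\pi \le \hat Q$, a telescoping regret decomposition into bonus terms plus a martingale difference sequence, the elliptical potential lemma combined with the observation that repositioning-phase data only adds to $\Lambda_h$ and hence cannot hurt (the paper's Lemma~\ref{more-is-better}), the $1/p$ inflation from updating only on fully exploratory ($L^k=0$) episodes, and a final online-to-batch conversion. Your flagged worry about reconciling $H^4$ episodes versus $H^5$ steps is a fair one --- the paper's own conversion from the $\tilde{\mathcal{O}}(\sqrt{d^3H^3T})$ regret to the stated sample complexity glosses over exactly this distinction.
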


In this section, we provide comprehensive proof for theoretically justifying the efficiency of Linear-UCB-\textbf{Hyper}, which is formally stated in Theorem \ref{main-theorem}. To theoretically justify the efficiency of \textbf{Hyper}, the core assumption is the following: 

\begin{assumption} \label{core-assumption}
    (Linear MDP, e.g., \cite{yang2019sample, jin2020provably}). MDP($\mathcal{S}, \mathcal{A}, H, \mathbb{P}, r$) is a linear MDP whose transition $\mathbb{P} := \{\mathbb{P}_h\}^H_{h=1}$ is not necessarily stationary. With a feature map $\boldsymbol{\phi}: \mathcal{S} \times \mathcal{A} \rightarrow \mathbb{R}^d$, such that for for any $h \in [H]$, there exists d unknown measures $\mathbf{\mu}_h = (\mu_h^{(1)}, \mu_h^{(2)}, \mu_h^{(3)}, ..., \mu_h^{(d)})$ over $\mathcal{S}$ and an unknown vector $\mathbf{\theta}_h \in \mathbb{R}^d$, such that for any $(s, a) \in \mathcal{S} \times \mathcal{A}$ we have:
    \begin{equation}
        \mathbb{P}_h(\cdot | s, a) = \boldsymbol{\phi}(s, a)^T \mathbf{\mu}_h(\cdot) \quad and \quad r_h(s, a) = \boldsymbol{\phi}(s, a)^T \mathbf{\theta}_h
    \end{equation}
    Without loss of generality, we also assume that $\Vert \boldsymbol{\phi}(s, a) \Vert \leq 1$, and $\max\{\Vert \mathbf{\mu}_h(\mathcal{S}) \Vert, \Vert \theta_h \Vert \} \leq \sqrt{d}$ for all $(s, a, h) \in \mathcal{S} \times \mathcal{A} \times [H]$
\end{assumption}

Specifically, we adopt the UCB-enhanced least-square value-iteration \cite{jin2020provably}, a theoretically well-studied off-policy RL algorithm to our proposed Hyper framework, which we refer to as UCB-Hyper. The algorithm follows the generic Hyper framework shown in Algorithm \ref{alg:main-algo}, it first collects data in a two-phase manner, and updates the policies afterward. Note that Linear-UCB-Hyper adopts an additional pessimistic $Q$-function with weights ${\check{\mathbf{w}}}^H_{h=1}$, this $Q$-function is not used in Linear-UCB-Hyper, but serves as a tool for our proof. We put it in the Algorithm \ref{algo} just for the sake of clarity in terms of the definition and the update rule.

For simplicity, we consider $\gamma = 1.0$ without loss of generality, we denote $\mathbf{w}_h, \hat{\mathbf{w}}_h, \check{\mathbf{w}}_h$ at $k$-th episode as $\mathbf{w}^k_h, \hat{\mathbf{w}}^k_h, \hat{\mathbf{w}}^k_h$, and denote ${Q}_h(\cdot, \cdot; \mathbf{w}_h), \hat{Q}_h(\cdot, \cdot; \mathbf{w}_h), \check{Q}_h(\cdot, \cdot; \check{\mathbf{w}}_h)$ at $k$-th episode as ${Q}^k_h(\cdot, \cdot), \hat{Q}^k_h(\cdot, \cdot), \check{Q}^k_h(\cdot, \cdot)$ when the context is clear.

\begin{proposition}
    Bellman equation: \cite{bellman1957markovian}
    \begin{equation} \label{bellman}
        Q_h^{\pi}(s,a) = r_h(s,a) + \gamma \mathbb{P}_hV_{h+1}^{\pi}(s,a)
    \quad\text{and}\quad
    V_h^{\pi}(s) = Q_h^{\pi}(s,\pi_h(s)), \quad \forall (s, a) \in \mathcal{S} \times \mathcal{A}
    \end{equation}
    \begin{equation} \label{opt-bellman}
        Q_h^{\star}(s,a) = r_h(s,a) + \gamma \mathbb{P}_hV_{h+1}^{\star}(s,a)
    \quad\text{and}\quad
    V_h^{\star}(s) = Q_h^{\star}(s,\pi^{\star}_h(s)), \quad \forall (s, a) \in \mathcal{S} \times \mathcal{A}
    \end{equation}
\end{proposition}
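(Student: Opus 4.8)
The plan is to derive both identities directly from the definitions of the step-indexed value functions as expected discounted reward-to-go, namely $Q_h^\pi(s,a) = \mathbb{E}_\pi\left[\sum_{i=h}^H \gamma^{i-h} r_i(s_i,a_i) \mid s_h = s, a_h = a\right]$ and $V_h^\pi(s) = \mathbb{E}_\pi\left[\sum_{i=h}^H \gamma^{i-h} r_i(s_i,a_i) \mid s_h = s\right]$, together with the shorthand $\mathbb{P}_h V(s,a) := \mathbb{E}_{s' \sim \mathbb{P}_h(\cdot\mid s,a)}[V(s')]$ and the convention $V_{H+1}^\pi \equiv 0$. No deep machinery is needed; the work is in bookkeeping and in justifying one conditioning step.

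For the policy Bellman equation \eqref{bellman}, I would first isolate the $i=h$ term in the defining sum for $Q_h^\pi$, writing it as $r_h(s,a) + \gamma\,\mathbb{E}_\pi[\sum_{i=h+1}^H \gamma^{i-(h+1)} r_i(s_i,a_i) \mid s_h=s, a_h=a]$. The key step is the tower property of conditional expectation applied by conditioning on $s_{h+1}$: since the dynamics are Markov, the conditional law of the trajectory from step $h+1$ onward depends on the past only through $s_{h+1}$, so the inner expectation collapses to $\mathbb{E}_{s_{h+1} \sim \mathbb{P}_h(\cdot\mid s,a)}[V_{h+1}^\pi(s_{h+1})] = \mathbb{P}_h V_{h+1}^\pi(s,a)$, giving the first identity. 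The companion identity $V_h^\pi(s) = Q_h^\pi(s,\pi_h(s))$ is immediate: conditioning the definition of $V_h^\pi$ on the action $a_h$, which under $\pi$ is the deterministic choice $\pi_h(s)$, reproduces exactly the defining expectation of $Q_h^\pi(s,\pi_h(s))$.

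For the optimality equations \eqref{opt-bellman}, I would proceed by backward induction on $h$ from $H$ down to $1$, using $V_h^\star(s) = \sup_\pi V_h^\pi(s)$ and $Q_h^\star(s,a) = \sup_\pi Q_h^\pi(s,a)$. At the base case $h=H$ the reward-to-go is the single term $r_H$, so the recursion holds trivially. In the inductive step, the $\le$ direction follows because every policy's value from $(s,a)$ is bounded by $r_h(s,a) + \gamma\,\mathbb{P}_h V_{h+1}^\star(s,a)$ via \eqref{bellman} and the inductive bound $V_{h+1}^\pi \le V_{h+1}^\star$; the $\ge$ direction follows by exhibiting the greedy policy that picks $\pi_h^\star(s) \in \arg\max_a Q_h^\star(s,a)$ and acts optimally thereafter, which attains the right-hand side. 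Taking $\max_a$ then yields $V_h^\star(s) = \max_a Q_h^\star(s,a) = Q_h^\star(s,\pi^\star_h(s))$ and $Q_h^\star(s,a) = r_h(s,a) + \gamma\,\mathbb{P}_h V_{h+1}^\star(s,a)$.

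The main technical point, rather than a genuine obstacle, is the inductive step for the optimal equations: one must verify that a single deterministic Markov policy $\pi^\star$ simultaneously attains the supremum at every state, so that $V_h^\star$ is realized by an admissible policy and not merely as a pointwise supremum over policies. This is precisely what the finite-horizon backward-induction argument secures, and it also justifies restricting attention to deterministic Markov policies. The only other care required is the measurability and Markov-property justification in the tower-rule step of \eqref{bellman}, which is immediate under the linear transition model of Assumption \ref{core-assumption}.
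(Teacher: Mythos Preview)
Your argument is correct and is the standard textbook derivation of the finite-horizon Bellman equations. Note, however, that the paper does not actually prove this proposition: it is stated as a classical result with a citation to \cite{bellman1957markovian} and is used as background machinery for the later lemmas. So there is no ``paper's own proof'' to compare against; your derivation simply supplies the routine verification that the authors (reasonably) omitted.
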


\begin{proposition} \label{Qpi-realizability}
    ($(Q^{\pi}$ realizability \cite{jin2020provably}) For a linear MDP, for any policy $\pi$, there exist weights $\{\mathbf{w}^{\pi}\}_{h \in [H]}$ such that for any $(s, a, h) \in \mathcal{S} \times \mathcal{A} \times [H]$, we have $Q^{\pi}_h(s, a) = \boldsymbol{\phi}(s, a)^T \mathbf{w}^{\pi}_h$. 
\end{proposition}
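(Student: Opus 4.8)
The plan is to prove the claim by backward induction on the step index $h$, running from $h=H$ down to $h=1$, using the linear structure of both the reward and the transition kernel from Assumption~\ref{core-assumption} together with the Bellman equation~\eqref{bellman}. The key structural observation, which I would isolate first, is that for \emph{any} bounded measurable function $V$ on $\mathcal{S}$ the one-step expectation factors linearly through the feature map: substituting $\mathbb{P}_h(ds'\mid s,a)=\boldsymbol{\phi}(s,a)^T\mathbf{\mu}_h(ds')$ and pulling the state-action-independent vector $\boldsymbol{\phi}(s,a)^T$ outside the integral gives
\begin{equation*}
\mathbb{P}_h V(s,a)=\int_{\mathcal{S}}V(s')\,\mathbb{P}_h(ds'\mid s,a)=\boldsymbol{\phi}(s,a)^T\Big(\int_{\mathcal{S}}V(s')\,\mathbf{\mu}_h(ds')\Big),
\end{equation*}
where the $\mathbb{R}^d$-valued integral is understood component-wise against the measures $\mu_h^{(1)},\dots,\mu_h^{(d)}$. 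Combined with $r_h(s,a)=\boldsymbol{\phi}(s,a)^T\mathbf{\theta}_h$, this already shows that both summands in the Bellman backup are linear in $\boldsymbol{\phi}(s,a)$.

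For the base case I would set the terminal value $V_{H+1}^{\pi}\equiv 0$, so that $Q_H^{\pi}(s,a)=r_H(s,a)=\boldsymbol{\phi}(s,a)^T\mathbf{\theta}_H$ and the claim holds at $h=H$ with $\mathbf{w}_H^{\pi}=\mathbf{\theta}_H$. For the inductive step, assume the value function $V_{h+1}^{\pi}(s)=Q_{h+1}^{\pi}(s,\pi_{h+1}(s))$ has been defined as a bounded measurable function of the state (which the induction guarantees). Applying the Bellman equation~\eqref{bellman} and the factoring identity then yields
\begin{equation*}
Q_h^{\pi}(s,a)=\boldsymbol{\phi}(s,a)^T\mathbf{\theta}_h+\gamma\,\boldsymbol{\phi}(s,a)^T\!\int_{\mathcal{S}}V_{h+1}^{\pi}(s')\,\mathbf{\mu}_h(ds')=\boldsymbol{\phi}(s,a)^T\mathbf{w}_h^{\pi},
\end{equation*}
so the linear representation propagates to step $h$ with the explicit and closed-form weight $\mathbf{w}_h^{\pi}=\mathbf{\theta}_h+\gamma\int_{\mathcal{S}}V_{h+1}^{\pi}(s')\,\mathbf{\mu}_h(ds')$. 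I would emphasize here that linearity of $Q_h^{\pi}$ in fact requires only boundedness of $V_{h+1}^{\pi}$ and not that $Q_{h+1}^{\pi}$ itself be linear; the backward induction merely serves to define the sequence of value functions cleanly from the terminal condition upward.

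The remaining point to verify is integrability, i.e.\ that $\mathbf{w}_h^{\pi}$ is a well-defined finite vector. Since $\Vert\boldsymbol{\phi}\Vert\le 1$ and $\Vert\mathbf{\theta}_h\Vert\le\sqrt{d}$ the rewards are uniformly bounded, and with a finite horizon $H$ this forces $V_{h+1}^{\pi}$ to be bounded; each component integral $\int_{\mathcal{S}}V_{h+1}^{\pi}\,d\mu_h^{(i)}$ is then finite because the (signed) measures $\mu_h^{(i)}$ have finite total variation, as implied by the normalization $\Vert\mathbf{\mu}_h(\mathcal{S})\Vert\le\sqrt{d}$. This is genuinely the only delicate part of the argument: the result is otherwise a direct consequence of linearity of reward and transition, and the main obstacle is purely in making the $\mathbb{R}^d$-valued integration against the measure-vector $\mathbf{\mu}_h$ rigorous and in correctly treating $V_{h+1}^{\pi}$ as a fixed bounded function so that the factoring step is legitimate, rather than in any substantive mathematical difficulty.
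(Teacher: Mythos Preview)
Your proposal is correct and follows essentially the same approach as the paper: apply the Bellman equation, substitute the linear forms of $r_h$ and $\mathbb{P}_h$, and factor out $\boldsymbol{\phi}(s,a)^T$ to identify $\mathbf{w}_h^{\pi}=\mathbf{\theta}_h+\int_{\mathcal{S}}V_{h+1}^{\pi}(s')\,d\mathbf{\mu}_h(s')$. The paper does this in a single line without the induction scaffolding or the integrability discussion, but as you yourself observe, the induction is not actually needed for the linearity claim---only boundedness of $V_{h+1}^{\pi}$ is used---so the extra structure is harmless padding rather than a different argument.
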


\begin{proof}
    By the Bellman equation we have:
\begin{align}
    Q^\pi_h(s, a) &= r(s, a) + (\mathbb{P}_h V^{\pi}_{h+1})(s, a) = \boldsymbol{\phi}(s, a)^T \mathbf{\theta}_h + \int_{\mathcal{S}} V^{\pi}_{h+1}(s') \cdot \boldsymbol{\phi}(s, a)^T d\mathbf{\mu}_h(s') \nonumber\\
    &= \boldsymbol{\phi}(s, a)^T \cdot (\mathbf{\theta}_h + \int_{\mathcal{S}}  V^{\pi}_{h+1}(s') d\mathbf{\mu}_h(s')).
\end{align}
    This directly shows that $Q^\pi_h$ is linear with respect to features $\boldsymbol{\phi}$.
\end{proof}

\begin{lemma} \label{bound-pi-weight}
    (Boundedness of $w^{\pi}_h$ \cite{jin2020provably}) Under Assumption \ref{core-assumption} for any fixed policy $\pi$, let $\left\{\mathbf{w}_{h}^{\pi}\right\}_{h \in[H]}$ be the weights such that $Q_{h}^{\pi}(s, a)=\left\langle\boldsymbol{\phi}(s, a), \mathbf{w}_{h}^{\pi}\right\rangle$ for all $(s, a, h) \in \mathcal{S} \times \mathcal{A} \times[H]$. Then, we have
    
    $$
    \left\|\mathbf{w}_{h}^{\pi}\right\| \leq 2 H \sqrt{d}, \quad \forall h \in[H]
    $$
    
    Proof. By the Bellman equation, we have:
    
    $$
    Q_{h}^{\pi}(s, a)=\left(r_{h}+\mathbb{P}_{h} V_{h+1}^{\pi}\right)(s, a) , \quad \forall h \in [H]
    $$
    
    And by the Proposition \ref{Qpi-realizability}, we have:
    
    $$
    \mathbf{w}_{h}^{\pi}={\theta}_{h}+\int V_{h+1}^{\pi}\left(s^{\prime}\right) \mathrm{d} {\mu}_{h}\left(s^{\prime}\right)
    $$
    
    Under the normalization conditions of Assumption \ref{core-assumption} the reward at each step is in [0,1], we have: 
    $$
    V_{h+1}^{\pi}\left(s^{\prime}\right) \leq H, \quad \forall s^{\prime} \sim \mathbb{P}(\cdot | s, a)
    $$
    
    Thus, $\left\|{\theta}_{h}\right\| \leq \sqrt{d}$, and $\left\|\int V_{h+1}^{\pi}\left(s^{\prime}\right) \mathrm{d} {\mu}_{h}\left(s^{\prime}\right)\right\| \leq H \sqrt{d}$. This concludes the proof.
\end{lemma}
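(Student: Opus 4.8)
The plan is to establish the bound by first writing down an explicit admissible weight vector for $Q_h^\pi$ and then controlling its two constituent pieces with the normalization in Assumption~\ref{core-assumption}. Starting from the Bellman equation $Q_h^\pi(s,a) = r_h(s,a) + (\mathbb{P}_h V_{h+1}^\pi)(s,a)$ and substituting the linear-MDP decomposition $r_h(s,a) = \boldsymbol{\phi}(s,a)^T\theta_h$ and $\mathbb{P}_h(\cdot\mid s,a) = \boldsymbol{\phi}(s,a)^T\mu_h(\cdot)$, one reads off exactly as in the proof of Proposition~\ref{Qpi-realizability} that $Q_h^\pi(s,a) = \boldsymbol{\phi}(s,a)^T\big(\theta_h + \int_{\mathcal{S}} V_{h+1}^\pi(s')\,d\mu_h(s')\big)$, so we may take $\mathbf{w}_h^\pi = \theta_h + \int_{\mathcal{S}} V_{h+1}^\pi(s')\,d\mu_h(s')$. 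Uniqueness of the weights is irrelevant here; it suffices that some valid weight vector has this form, and then the stated bound holds for that representative (which is the one used throughout the analysis).

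Next I would bound the two summands separately and combine by the triangle inequality. The first term is immediate from Assumption~\ref{core-assumption}: $\|\theta_h\| \le \sqrt{d}$. For the second, I first note that $0 \le V_{h+1}^\pi(s') \le H$ for every $s'$, because under the stated normalization each per-step reward lies in $[0,1]$ and (with $\gamma = 1$, as assumed in the analysis) the remaining return after step $h$ sums at most $H$ terms. Writing the vector-valued integral coordinatewise then gives $\big\|\int_{\mathcal{S}} V_{h+1}^\pi(s')\,d\mu_h(s')\big\| \le H\,\|\mu_h(\mathcal{S})\| \le H\sqrt{d}$, again using the normalization $\|\mu_h(\mathcal{S})\| \le \sqrt{d}$. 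Putting these together, $\|\mathbf{w}_h^\pi\| \le \|\theta_h\| + \big\|\int V_{h+1}^\pi\,d\mu_h\big\| \le \sqrt{d} + H\sqrt{d} \le 2H\sqrt{d}$, where the last inequality uses $H \ge 1$.

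The only step that requires a little care — and hence the closest thing to an obstacle — is the inequality $\big\|\int V_{h+1}^\pi\,d\mu_h\big\| \le H\,\|\mu_h(\mathcal{S})\|$. This uses that each component $\mu_h^{(i)}$ is a nonnegative measure, so that $0 \le V_{h+1}^\pi \le H$ lets one bound $\int V_{h+1}^\pi\,d\mu_h^{(i)}$ by $H\,\mu_h^{(i)}(\mathcal{S})$ coordinate by coordinate before passing to the Euclidean norm; if $\mu_h$ is instead interpreted as a signed vector measure, one replaces $\mu_h(\mathcal{S})$ with its total-variation counterpart, which is what the linear-MDP normalization in Assumption~\ref{core-assumption} is understood to bound. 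Everything else is routine bookkeeping with the triangle inequality and the two norm bounds supplied by the assumption.
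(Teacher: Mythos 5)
Your proof is correct and follows essentially the same route as the paper's: read off the explicit weight $\mathbf{w}_h^\pi = \theta_h + \int V_{h+1}^\pi\,d\mu_h$ from the Bellman equation and the linear-MDP structure, bound $V_{h+1}^\pi$ by $H$, and apply the normalization $\max\{\|\mu_h(\mathcal{S})\|,\|\theta_h\|\}\le\sqrt{d}$ together with the triangle inequality. Your added care about the nonnegativity (or total-variation interpretation) of the measures $\mu_h^{(i)}$ in the step $\|\int V_{h+1}^\pi\,d\mu_h\|\le H\|\mu_h(\mathcal{S})\|$ is a legitimate point the paper glosses over, but it does not change the argument.
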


\begin{lemma} \label{bound-opt-weight}
    (Bound on $\hat{\mathbf{w}}^{k}_h$ in Algorithm \ref{algo} \cite{jin2020provably}) The weight $\hat{\mathbf{w}}_{h}^{k}$ in Algorihtm \ref{algo} satisfies:
    
    $$
    \left\|\hat{\mathbf{w}}_{h}^{k}\right\| \leq 2 H \sqrt{d {k} / \lambda}
    $$
\end{lemma}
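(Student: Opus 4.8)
The plan is to control $\|\hat{\mathbf{w}}_h^k\|$ by pairing it with an arbitrary unit vector and observing that the least-squares weight produced by the update in Algorithm~\ref{algo} is a $\Lambda_h^{-1}$-weighted combination of feature vectors with bounded scalar coefficients; the estimate then follows from Cauchy--Schwarz together with the standard elliptical-potential (trace) bound.

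First I would bound the regression target. In the update $\hat{\mathbf{w}}_h^k = \Lambda_h^{-1}\sum_{\tau=1}^{k-1}\boldsymbol{\phi}(s_h^\tau,a_h^\tau)\big[r_h(s_h^\tau,a_h^\tau)+\max_a\hat{Q}_{h+1}(s_h^{\tau+1},a;\hat{\mathbf{w}}_{h+1}^k)\big]$, the reward lies in $[0,1]$ by the normalization in Assumption~\ref{core-assumption}, and $\hat{Q}_{h+1}$ is the output of $\operatorname{clip}(\cdot)$ and hence lies in $[0,H]$; therefore the bracketed target lies in $[0,1+H]\subseteq[0,2H]$ (using $H\ge 1$), uniformly over $\tau$. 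The key point is that this holds regardless of the magnitude of $\hat{\mathbf{w}}_{h+1}^k$, so the estimate is not recursive in $h$ and the same constant works for every step.

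Next, fixing a unit vector $\mathbf{v}\in\mathbb{R}^d$ and abbreviating $\boldsymbol{\phi}_\tau:=\boldsymbol{\phi}(s_h^\tau,a_h^\tau)$, I would write
\[
\langle\mathbf{v},\hat{\mathbf{w}}_h^k\rangle=\sum_{\tau=1}^{k-1}\big\langle\Lambda_h^{-1/2}\mathbf{v},\,\Lambda_h^{-1/2}\boldsymbol{\phi}_\tau\big\rangle\big[r_h+\max_a\hat{Q}_{h+1}\big],
\]
and apply Cauchy--Schwarz twice: first on each inner product, giving $\langle\Lambda_h^{-1/2}\mathbf{v},\Lambda_h^{-1/2}\boldsymbol{\phi}_\tau\rangle\le\|\mathbf{v}\|_{\Lambda_h^{-1}}\|\boldsymbol{\phi}_\tau\|_{\Lambda_h^{-1}}$, and then over $\tau$, giving $\sum_\tau\|\boldsymbol{\phi}_\tau\|_{\Lambda_h^{-1}}\le\sqrt{(k-1)\sum_\tau\|\boldsymbol{\phi}_\tau\|_{\Lambda_h^{-1}}^2}$. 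Since $\Lambda_h\succeq\lambda\mathbf{I}$ we get $\|\mathbf{v}\|_{\Lambda_h^{-1}}\le 1/\sqrt{\lambda}$, and since $\Lambda_h\succeq\sum_\tau\boldsymbol{\phi}_\tau\boldsymbol{\phi}_\tau^\top$ the trace bound $\sum_{\tau=1}^{k-1}\|\boldsymbol{\phi}_\tau\|_{\Lambda_h^{-1}}^2=\operatorname{tr}\!\big(\Lambda_h^{-1}\sum_\tau\boldsymbol{\phi}_\tau\boldsymbol{\phi}_\tau^\top\big)\le d$ holds. Chaining these with the target bound yields $|\langle\mathbf{v},\hat{\mathbf{w}}_h^k\rangle|\le 2H\cdot\frac{1}{\sqrt{\lambda}}\sqrt{(k-1)d}\le 2H\sqrt{dk/\lambda}$, and taking $\mathbf{v}=\hat{\mathbf{w}}_h^k/\|\hat{\mathbf{w}}_h^k\|$ completes the argument.

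I do not expect any real obstacle here: this is routine linear-MDP bookkeeping mirroring the corresponding lemma in \cite{jin2020provably}. The only points needing a little care are confirming that it is the clipping (not any a priori control of $\hat{\mathbf{w}}_{h+1}^k$) that bounds the target, so that the claim holds simultaneously for all $h$ with one constant, and verifying the potential bound $\sum_\tau\|\boldsymbol{\phi}_\tau\|_{\Lambda_h^{-1}}^2\le d$, which uses $\|\boldsymbol{\phi}(s,a)\|\le 1$ together with $\Lambda_h\succeq\sum_\tau\boldsymbol{\phi}_\tau\boldsymbol{\phi}_\tau^\top+\lambda\mathbf{I}$. The identical argument applies verbatim to $\mathbf{w}_h^k$ and to the pessimistic $\check{\mathbf{w}}_h^k$ (whose target $r_h+\max_a\check{Q}_{h+1}$ likewise lies in $[0,2H]$ after clipping), which is why only the optimistic weight is stated.
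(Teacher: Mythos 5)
Your proposal is correct and follows essentially the same route as the paper: bound the regression target by $2H$ via clipping, pair $\hat{\mathbf{w}}_h^k$ with a unit vector, and apply Cauchy--Schwarz together with the trace bound $\sum_{\tau}\boldsymbol{\phi}_\tau^{\top}\Lambda_h^{-1}\boldsymbol{\phi}_\tau\le d$ and $\Lambda_h\succeq\lambda\mathbf{I}$. The only detail you omit is that in Algorithm~\ref{algo} the weights are refreshed only on episodes with $L^k=0$, so the weight in force at episode $k$ was computed at some earlier episode $\lfloor k\rfloor\le k$; the paper tracks this index explicitly and then uses $\lfloor k\rfloor\le k$, but since your bound is monotone in the number of summands the conclusion is unaffected.
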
    
    \begin{proof}
        For simplicity, we denote the index set $\mathbf{U}^k = \{i \in [K]: L^i = 0\}$, i.e. the index of episodes in which the roll-in length is $0$. For any index of episode $k \in [K]$, we denote $\lfloor k \rfloor = \max(\mathbf{U}^k)$ when $U$ is not empty, and $\lfloor k \rfloor = 0$ otherwise, i.e. the last time we encounter an episode whose roll-in length is $0$. Suppose $\mathbf{v} \in \mathbb{R}^d$ is an arbitrary vector, we have: 
        \begin{align}
            \left|\mathbf{v}^{\top} \hat{\mathbf{w}}_{h}^{{k}}\right| & =\left|\mathbf{v}^{\top}\left(\Lambda_{h}^{{\lfloor k \rfloor}}\right)^{-1} \sum_{\tau=1}^{{\lfloor k \rfloor}-1} {\boldsymbol{\phi}}_{h}^{\tau}\left[r\left(s_{h}^{\tau}, a_{h}^{\tau}\right)+\max _{a} \hat{Q}_{h+1}\left(s_{h+1}^{\tau}, a\right)\right]\right| \\
            & \leq \sum_{\tau=1}^{{\lfloor k \rfloor}-1}\left|\mathbf{v}^{\top}\left(\Lambda_{h}^{{\lfloor k \rfloor}}\right)^{-1} {\boldsymbol{\phi}}_{h}^{\tau}\right| \cdot 2 H \\
            & \leq \sqrt{\left[\sum_{\tau=1}^{{\lfloor k \rfloor}-1} \mathbf{v}^{\top}\left(\Lambda_{h}^{{\lfloor k \rfloor}}\right)^{-1} \mathbf{v}\right] \cdot\left[\sum_{\tau=1}^{{\lfloor k \rfloor}-1}\left({\boldsymbol{\phi}}_{h}^{\tau}\right)^{\top}\left(\Lambda_{h}^{{\lfloor k \rfloor}}\right)^{-1} {\boldsymbol{\phi}}_{h}^{\tau}\right]} \cdot 2 H \\
            & \leq 2 H\|\mathbf{v}\| \sqrt{d {\lfloor k \rfloor} / \lambda}
        \end{align}
        
        where the first step follows the algorithm construction, the second step follows directly from Cauchy–Schwarz inequality, and the last step follows from Lemma B.1, and the third step follows from the fact that $\left\|\hat{\mathbf{w}}_{h}^{k}\right\|=$ $\max _{\mathbf{v}:\|\mathbf{v}\|=1}\left|\mathbf{v}^{\top} \hat{\mathbf{w}}_{h}^{k}\right|$.
        
        This implies that $\left\|\hat{\mathbf{w}}_{h}^{k}\right\| \leq 2 H \sqrt{d {\lfloor k \rfloor} / \lambda}$, and by definition of $\lfloor k \rfloor$, we have $\left\|\hat{\mathbf{w}}_{h}^{k}\right\| \leq 2 H \sqrt{d {\lfloor k \rfloor} / \lambda} \leq 2 H \sqrt{d { k } / \lambda}$, which concludes the proof.
        
    \end{proof}

\begin{remark} \label{bound-pess-weight}
    Let $\beta' = c' \cdot dH\sqrt{log(2dT/\delta)}$ for some proper constant $c'>0$, $\check{\mathbf{w}}^k_h \gets \Lambda_h^{-1} \Sigma_{\tau=1}^{k-1} \boldsymbol{\phi}(s_h^{\tau}, a_h^{\tau})[r_h(s_h^{\tau}, a_h^{\tau}) + \max_a \check{Q}_{h+1} (s_h^{\tau + 1}, a)]$, and $\check{Q}^k_h(\cdot, \cdot) \gets \operatorname{clip}  \left ( \{\check{\mathbf{w}}_h^T \boldsymbol{\phi}(\cdot, \cdot) - \beta'[\boldsymbol{\phi}(\cdot, \cdot)^T \Lambda_h^{-1} \boldsymbol{\phi}(\cdot, \cdot)]^{\frac{1}{2}} \} , 0, H\right )$. By similar approach as in the proof of Lemma \ref{bound-opt-weight}, the weight $\check{\mathbf{w}}^k_h$ also satisfies:
    
    $$
    \left\|\check{\mathbf{w}}_{h}^{k}\right\| \leq 2 H \sqrt{d { k } / \lambda}
    $$ 
    
    This result is direct, as the proof of Lemma \ref{bound-opt-weight} does not leverage any property specific to $\hat{\mathbf{w}}_{h}^{k}$.
\end{remark}

We then define a high-probability event that bound the approximation error of our optimistic value function. 

\begin{lemma} \label{stats-error}
    (High Probability Event on Approximating Optimistic Value Function \cite{jin2020provably}) Under the setting of Theorem \ref{main-theorem}, let $c_{\beta}$ be the constant in the definition of $\beta$, such that $$\beta = c_{\beta} \cdot dH\sqrt{log(2dT/\delta)}.$$ There exists and an absolute constant $C$ that is independent of $c_{\beta}$ such that for any fixed $p \in [0, 1]$, if we let $\mathcal{E}$ be the event that:
    
    $$
    \forall(k, h) \in[K] \times[H]: \quad\left\|\sum_{\tau=1}^{k-1} \boldsymbol{\phi}_{h}^{\tau}\left[\hat{V}_{h+1}^{k}\left(s_{h+1}^{\tau}\right)-\mathbb{P}_{h} \hat{V}_{h+1}^{k}\left(s_{h}^{\tau}, a_{h}^{\tau}\right)\right]\right\|_{\left(\Lambda_{h}^{k}\right)^{-1}} \leq C \cdot d H \sqrt{\chi}
    $$
    
    where $\chi=\log \left[2\left(c_{\beta}+1\right) d T / p\right]$, then $\mathbb{P}(\mathcal{E}) \geq 1-p / 2$.
\end{lemma}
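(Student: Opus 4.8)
\textbf{Proof proposal for Lemma \ref{stats-error}.}

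The plan is to treat this as a self-normalized concentration problem for vector-valued martingales, following the standard covering-argument strategy used in analyses of LSVI-UCB (e.g.\ \cite{jin2020provably}), and to argue that the ``repositioning-and-exploration'' structure of Linear-UCB-\textbf{Hyper} does not break the argument. First I would fix $(k,h)$ and observe that, conditioned on the filtration generated by the first $\tau$ episodes, the quantity $\hat{V}_{h+1}^{k}(s_{h+1}^{\tau}) - (\mathbb{P}_h \hat{V}_{h+1}^{k})(s_h^{\tau}, a_h^{\tau})$ is a bounded zero-mean noise term (bounded by $H$ because of the clipping in Algorithm \ref{algo}), so $\sum_{\tau=1}^{k-1} \boldsymbol{\phi}_h^\tau [\hat{V}_{h+1}^{k}(s_{h+1}^{\tau}) - (\mathbb{P}_h \hat{V}_{h+1}^{k})(s_h^{\tau}, a_h^{\tau})]$ is a martingale in $\mathbb{R}^d$ adapted to that filtration. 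This is exactly where I must be careful: the index set of updates is the subsequence $\mathbf{U}^k = \{i : L^i = 0\}$ and the rollout within each episode switches policies at the random time $L^k$; however, since $L^k$ is drawn \emph{before} the episode and is independent of the environment noise, and since the sum in $\Lambda_h^k$ and in the martingale runs over the \emph{same} episode indices, the adaptedness and the conditional-mean-zero property are preserved. I would state this reduction cleanly and then invoke the concentration machinery.

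Next I would apply the self-normalized bound for vector-valued martingales (the ``Lemma D.4 / Abbasi-Yadkori'' inequality) together with a uniform covering argument over the class of value functions $\hat{V}_{h+1}$ that can arise in the algorithm. The key point is that $\hat{V}_{h+1}^{k}$ has the parametric form $\operatorname{clip}(\max_a [\hat{\mathbf{w}}^\top \boldsymbol{\phi}(\cdot,a) + \beta (\boldsymbol{\phi}^\top \Lambda^{-1} \boldsymbol{\phi})^{1/2}])$, which by Lemma \ref{bound-opt-weight} has $\|\hat{\mathbf{w}}_h^k\| \le 2H\sqrt{dk/\lambda}$ and a bonus parameter controlled by $\beta$; hence this function class admits an $\varepsilon$-net whose log-cardinality is $\tilde{O}(d^2 \log(1/\varepsilon) + d \log(1+\beta))$, which is where the factor $\log[2(c_\beta+1)dT/p]$ in $\chi$ comes from. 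Combining the per-net-element concentration (union bound over the net, over $h \in [H]$, and over $k \in [K]$, with failure probability budget $p/2$) with the discretization error (absorbed into the constant $C$ by choosing the net fine enough, e.g.\ $\varepsilon = 1/K$) yields the stated bound $C \cdot dH\sqrt{\chi}$ simultaneously for all $(k,h)$.

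Concretely, the ordered steps are: (1) fix $(k,h)$, identify the noise term, verify boundedness via clipping and conditional-mean-zero via the Bellman/linear-MDP structure and the pre-sampled $L^k$; (2) peel the dependence of $\hat{V}_{h+1}^{k}$ on the data by passing to an $\varepsilon$-cover of the parametric value-function class, using Lemma \ref{bound-opt-weight} and Remark \ref{bound-pess-weight} for the norm bounds; (3) apply the self-normalized concentration inequality to each fixed function in the cover and union-bound over the cover, over $h$, and over $k$; (4) bound the covering-induced perturbation of both the empirical inner-product and the Gram matrix norm, and fold it into the absolute constant $C$; (5) collect the logarithmic factors into $\chi = \log[2(c_\beta+1)dT/p]$ and conclude $\mathbb{P}(\mathcal{E}) \ge 1 - p/2$.

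The main obstacle — and the only genuinely new point relative to the vanilla LSVI-UCB analysis — is Step (1): making sure that the two-phase data-collection scheme of \textbf{Hyper} still produces a bona fide martingale-difference sequence indexed by the update episodes $\mathbf{U}^k$. One has to check that conditioning on ``all randomness up to and including the start of episode $\tau$ (in particular $L^\tau$) plus the transitions within episode $\tau$ up to step $h$'' leaves $s_{h+1}^\tau$ with conditional law exactly $\mathbb{P}_h(\cdot\mid s_h^\tau, a_h^\tau)$, regardless of whether step $h$ fell in the repositioning or the exploration phase; this is immediate because the action $a_h^\tau$ is a (measurable, possibly phase-dependent) function of $s_h^\tau$ and past data, and the Markov transition does not see which policy chose the action. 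Once this is nailed down, the rest is a routine adaptation of the known proof, and the restriction to episodes with $L^\tau = 0$ only \emph{shrinks} the effective sample count, which is harmless for an upper bound on the concentration error.
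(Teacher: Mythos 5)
Your proposal is correct and follows essentially the same route as the paper: the paper's proof likewise combines the weight bound of Lemma \ref{bound-opt-weight}, the covering-number bound for the optimistic value-function class (Lemma \ref{cover-opt}), and the uniform self-normalized concentration inequality (Lemma \ref{general-stats-error-concentration}), then plugs in $\lambda=1$, $\beta = c_\beta\, dH\sqrt{\log(2dT/\delta)}$ and $\varepsilon = dH/k$ to absorb everything into $C\cdot dH\sqrt{\chi}$. Your additional check that the two-phase (repositioning/exploration) rollout preserves the martingale-difference structure is a point the paper leaves implicit, and your reasoning for it is sound.
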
    
    \begin{proof}
        By Lemma \ref{bound-opt-weight}, we have:
        $$\left\|\hat{\mathbf{w}}_{h}^{k}\right\| \leq 2 H \sqrt{d k / \lambda}, \quad \forall (k, h) \in [K] \times [H]$$
        Also, by the construction of $\Lambda^k_h$, its smallest eigenvalue is lower bounded by $\lambda$. Combining with Lemmas \ref{general-stats-error-concentration} and \ref{cover-opt}, for any fixed constant $\epsilon > 0$, we have:
        \begin{align}
            & \left\|\sum_{\tau=1}^{k-1} \boldsymbol{\phi}_{h}^{\tau}\left[\hat{V}_{h+1}^{k}\left(s_{h+1}^{\tau}\right)-\mathbb{P}_{h} \hat{V}_{h+1}^{k}\left(s_{h}^{\tau}, a_{h}^{\tau}\right)\right]\right\|_{\left(\Lambda_{h}^{k}\right)^{-1}}^{2} \\
            & \quad \leq 4 H^{2}\left[\frac{d}{2} \log \left(\frac{k+\lambda}{\lambda}\right)+d \log \left(1+\frac{8 H \sqrt{d k}}{\varepsilon \sqrt{\lambda}}\right)+d^{2} \log \left(1+\frac{8 d^{1 / 2} \beta^{2}}{\varepsilon^{2} \lambda}\right)+\log \left(\frac{2}{p}\right)\right]+\frac{8 k^{2} \varepsilon^{2}}{\lambda}
        \end{align} 
        By plugging in $\lambda=1$ and $\beta = C \cdot d H \sqrt{log(2dT/\delta)}$ to this inequality, where $C$ is a positive constant independent of $c_{\beta}$, and picking $\epsilon=d H / k$ we have:
        $$
        \left\|\sum_{\tau=1}^{k-1} \boldsymbol{\phi}_{h}^{\tau}\left[\hat{V}_{h+1}^{k}\left(s_{h+1}^{\tau}\right)-\mathbb{P}_{h} \hat{V}_{h+1}^{k}\left(s_{h}^{\tau}, a_{h}^{\tau}\right)\right]\right\|_{\left(\Lambda_{h}^{k}\right)^{-1}}^{2} \leq C \cdot d^{2} H^{2} \log \left[2\left(c_{\beta}+1\right) d T / p\right],
        $$
        This concludes the proof.
    \end{proof}

Lemma \ref{stats-error} provides the bound on the approximation of the optimistic value function, we can then bound the pessimistic value function in a similar way, as by Lemma \ref{cover-pess}, these two functions classes share the same upper bound on the covering number.

\begin{lemma} \label{stats-pess-error}
    (High Probability Event on Approximating Pessimistic Value Function) Under the setting of Theorem \ref{main-theorem}, let $c_{\beta'}$ be the constant in the definition of $\beta'$, such that $$\beta' = c_{\beta'} \cdot dH\sqrt{log(2dT/\delta)}.$$ There exists and an absolute constant $C'$ that is independent of $c_{\beta'}$ such that for any fixed $p \in [0, 1]$, if we let $\mathcal{E}$ be the event that:
    
    $$
    \forall(k, h) \in[K] \times[H]: \quad\left\|\sum_{\tau=1}^{k-1} \boldsymbol{\phi}_{h}^{\tau}\left[\check{V}_{h+1}^{k}\left(s_{h+1}^{\tau}\right)-\mathbb{P}_{h} \check{V}_{h+1}^{k}\left(s_{h}^{\tau}, a_{h}^{\tau}\right)\right]\right\|_{\left(\Lambda_{h}^{k}\right)^{-1}} \leq C' \cdot d H \sqrt{\chi}
    $$
    
    where $\chi=\log \left[2\left(c_{\beta'}+1\right) d T / p\right]$, then $\mathbb{P}(\mathcal{E}) \geq 1-p / 2$.
 \end{lemma}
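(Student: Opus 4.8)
The plan is to mimic the proof of Lemma \ref{stats-error} essentially verbatim, substituting the pessimistic value function $\check V_{h+1}^k$ for the optimistic one $\hat V_{h+1}^k$ and tracking the few quantities that change. The key observation is that the argument in Lemma \ref{stats-error} only used three ingredients: a norm bound on the relevant weight vector, the fact that $\Lambda_h^k$ has smallest eigenvalue at least $\lambda$, and a covering-number bound on the function class to which the value iterate belongs. All three have pessimistic analogues already available in the excerpt.

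First I would invoke Remark \ref{bound-pess-weight}, which gives $\Vert \check{\mathbf{w}}_h^k \Vert \leq 2H\sqrt{dk/\lambda}$ for all $(k,h) \in [K] \times [H]$ — the exact same bound as Lemma \ref{bound-opt-weight} provides for $\hat{\mathbf{w}}_h^k$. Second, the lower bound $\lambda_{\min}(\Lambda_h^k) \geq \lambda$ is immediate from the construction $\Lambda_h^k = \sum_\tau \boldsymbol{\phi}_h^\tau (\boldsymbol{\phi}_h^\tau)^T + \lambda \mathbf{I}$ and does not depend on which value function we use. Third, I would appeal to Lemma \ref{cover-pess} — explicitly cited in the sentence preceding this lemma — which states that the class of clipped pessimistic $Q$-functions $\check Q_h(\cdot,\cdot) = \operatorname{clip}(\check{\mathbf{w}}^T\boldsymbol{\phi} - \beta'[\boldsymbol{\phi}^T\Lambda^{-1}\boldsymbol{\phi}]^{1/2})$ admits the same $\varepsilon$-covering-number bound (in the relevant parameter ranges) as the optimistic class covered by Lemma \ref{cover-opt}; the sign flip in front of the bonus term does not affect the Lipschitz/covering estimate. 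With these in hand, the self-normalized concentration inequality of Lemma \ref{general-stats-error-concentration} applies to $\{\check V_{h+1}^k\}$ just as it did to $\{\hat V_{h+1}^k\}$, yielding for any fixed $\varepsilon > 0$ the same inequality
\begin{align*}
    & \left\|\sum_{\tau=1}^{k-1} \boldsymbol{\phi}_{h}^{\tau}\left[\check{V}_{h+1}^{k}\left(s_{h+1}^{\tau}\right)-\mathbb{P}_{h} \check{V}_{h+1}^{k}\left(s_{h}^{\tau}, a_{h}^{\tau}\right)\right]\right\|_{\left(\Lambda_{h}^{k}\right)^{-1}}^{2} \\
    & \quad \leq 4 H^{2}\left[\tfrac{d}{2} \log \tfrac{k+\lambda}{\lambda}+d \log\!\left(1+\tfrac{8 H \sqrt{d k}}{\varepsilon \sqrt{\lambda}}\right)+d^{2} \log\!\left(1+\tfrac{8 d^{1/2} (\beta')^{2}}{\varepsilon^{2} \lambda}\right)+\log \tfrac{2}{p}\right]+\tfrac{8 k^{2} \varepsilon^{2}}{\lambda},
\end{align*}
after which plugging in $\lambda = 1$, $\beta' = c_{\beta'} \cdot dH\sqrt{\log(2dT/\delta)}$, and choosing $\varepsilon = dH/k$ collapses the right-hand side to $C' \cdot d^2 H^2 \log[2(c_{\beta'}+1)dT/p]$ for an absolute constant $C'$ independent of $c_{\beta'}$. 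Taking square roots gives the stated bound, and the probability $1 - p/2$ comes directly from the failure probability in Lemma \ref{general-stats-error-concentration} together with a union bound over $(k,h)$, exactly as before.

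I do not anticipate a genuine obstacle here: the content of this lemma is entirely parasitic on Lemma \ref{stats-error}, and the only thing one must be slightly careful about is confirming that Lemma \ref{cover-pess} really does deliver a covering number matching Lemma \ref{cover-opt}'s — i.e., that replacing $+\beta[\cdot]^{1/2}$ by $-\beta'[\cdot]^{1/2}$ leaves the metric-entropy estimate unchanged. Since the bonus term enters the covering argument only through its magnitude and Lipschitz dependence on the parameters $(\check{\mathbf{w}}, \Lambda^{-1})$, and $|{-x}| = |x|$, this is immediate; the write-up should simply point to Lemma \ref{cover-pess} and note ``the proof is identical to that of Lemma \ref{stats-error} upon replacing $\hat V$ by $\check V$, $\hat{\mathbf{w}}$ by $\check{\mathbf{w}}$, $\beta$ by $\beta'$, and using Remark \ref{bound-pess-weight} and Lemma \ref{cover-pess} in place of Lemma \ref{bound-opt-weight} and Lemma \ref{cover-opt}.''
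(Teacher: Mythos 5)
Your proposal is correct and follows essentially the same route as the paper's own proof: invoke the pessimistic weight bound (Remark \ref{bound-pess-weight}), the eigenvalue lower bound on $\Lambda_h^k$, and the covering-number bound of Lemma \ref{cover-pess}, then apply Lemma \ref{general-stats-error-concentration} and choose $\lambda=1$, $\varepsilon=dH/k$. If anything, your write-up is slightly more careful than the paper's, which leaves $\beta^2$ rather than $(\beta')^2$ in the displayed covering term.
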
   
    \begin{proof}
        By Lemma \ref{bound-opt-weight}, we have:
        $$\left\|\check{\mathbf{w}}_{h}^{k}\right\| \leq 2 H \sqrt{d k / \lambda}, \quad \forall (k, h) \in [K] \times [H]$$
        Also, by the construction of $\Lambda^k_h$, its smallest eigenvalue is lower bounded by $\lambda$. Combining with Lemmas \ref{general-stats-error-concentration} and \ref{cover-pess}, for any fixed constant $\epsilon > 0$, we have:
        \begin{align}
            & \left\|\sum_{\tau=1}^{k-1} \boldsymbol{\phi}_{h}^{\tau}\left[\check{V}_{h+1}^{k}\left(s_{h+1}^{\tau}\right)-\mathbb{P}_{h} \check{V}_{h+1}^{k}\left(s_{h}^{\tau}, a_{h}^{\tau}\right)\right]\right\|_{\left(\Lambda_{h}^{k}\right)^{-1}}^{2} \\
            & \quad \leq 4 H^{2}\left[\frac{d}{2} \log \left(\frac{k+\lambda}{\lambda}\right)+d \log \left(1+\frac{8 H \sqrt{d k}}{\varepsilon \sqrt{\lambda}}\right)+d^{2} \log \left(1+\frac{8 d^{1 / 2} \beta^{2}}{\varepsilon^{2} \lambda}\right)+\log \left(\frac{2}{p}\right)\right]+\frac{8 k^{2} \varepsilon^{2}}{\lambda}
        \end{align} 
        By plugging in $\lambda=1$ and $\beta' = C' \cdot d H \sqrt{log(2dT/\delta)}$ to this inequality, where $C'$ is a positive constant independent of $c_{\beta'}$, and picking $\epsilon=d H / k$ we have:
        $$
        \left\|\sum_{\tau=1}^{k-1} \boldsymbol{\phi}_{h}^{\tau}\left[\check{V}_{h+1}^{k}\left(s_{h+1}^{\tau}\right)-\mathbb{P}_{h} \check{V}_{h+1}^{k}\left(s_{h}^{\tau}, a_{h}^{\tau}\right)\right]\right\|_{\left(\Lambda_{h}^{k}\right)^{-1}}^{2} \leq C' \cdot d^{2} H^{2} \log \left[2\left(c_{\beta}+1\right) d T / p\right],
        $$
        This concludes the proof.
    \end{proof}

\begin{lemma} \label{opt-over-any-policy}
    (Optimistic Policy Action-Value Estimation Error \cite{jin2020provably}) There exists an absolute constant $c_{\beta}$ such that for $\beta=c_{\beta} \cdot d H \sqrt{\log (2 d T / \delta)}$, and for any fixed policy $\pi$, on the high-probability event $\mathcal{E}$ defined in Lemma \ref{stats-error} we have for all $(s, a, h, k) \in \mathcal{S} \times \mathcal{A} \times[H] \times[K]$ that:
    
    $$
    \left\langle\boldsymbol{\phi}(s, a), \hat{\mathbf{w}}_{h}^{k}\right\rangle-Q_{h}^{\pi}(s, a)=\mathbb{P}_{h}\left(\hat{V}_{h+1}^{k}-{V}_{h+1}^{\pi}\right)(s, a)+\Delta_{h}^{k}(s, a),
    $$
    
    for some $\Delta_{h}^{k}(s, a)$ that satisfies $\left|\Delta_{h}^{k}(s, a)\right| \leq \beta \sqrt{\boldsymbol{\phi}(s, a)^{\top}\left(\Lambda_{h}^{k}\right)^{-1} \boldsymbol{\phi}(s, a)}$. 
\end{lemma}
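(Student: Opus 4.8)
The plan is to mimic the standard value-difference decomposition from least-squares value iteration in linear MDPs (as in \cite{jin2020provably}), applied here to the optimistic weights $\hat{\mathbf{w}}_h^k$. First I would recall that, by the update rule in Algorithm \ref{algo}, $\hat{\mathbf{w}}_h^k = (\Lambda_h^k)^{-1} \sum_{\tau=1}^{k-1} \boldsymbol{\phi}_h^\tau [r_h(s_h^\tau, a_h^\tau) + \hat{V}_{h+1}^k(s_{h+1}^\tau)]$, where $\hat{V}_{h+1}^k(\cdot) = \max_a \hat{Q}_{h+1}^k(\cdot, a)$. Using the Bellman equation for $Q_h^\pi$ and the linear-MDP structure (Proposition \ref{Qpi-realizability}, so that $Q_h^\pi(s,a) = \langle \boldsymbol{\phi}(s,a), \mathbf{w}_h^\pi \rangle$ with $\mathbf{w}_h^\pi = \theta_h + \int V_{h+1}^\pi \, d\mu_h$), I would write
\[
\hat{\mathbf{w}}_h^k - \mathbf{w}_h^\pi = -\lambda (\Lambda_h^k)^{-1} \mathbf{w}_h^\pi + (\Lambda_h^k)^{-1} \sum_{\tau=1}^{k-1} \boldsymbol{\phi}_h^\tau \left[ \hat{V}_{h+1}^k(s_{h+1}^\tau) - (\mathbb{P}_h V_{h+1}^\pi)(s_h^\tau, a_h^\tau) \right],
\]
which follows from adding and subtracting $\lambda \mathbf{w}_h^\pi$ inside the definition of $\Lambda_h^k$ and from $\mathbb{P}_h Q_{h+1}$-type cancellations. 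The key point here is that the roll-in/repositioning phase of \textbf{Hyper} does not affect this algebra at all: the regression in the policy-improvement step only uses data from episodes with $L^k=0$ (indexed by $\mathbf{U}^k$), and within those episodes the decomposition is identical to vanilla LSVI. The replacement $k \mapsto \lfloor k \rfloor$ from the proof of Lemma \ref{bound-opt-weight} is the only bookkeeping change.

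Next I would split the middle term into $\hat{V}_{h+1}^k(s_{h+1}^\tau) - (\mathbb{P}_h \hat{V}_{h+1}^k)(s_h^\tau, a_h^\tau)$ plus $(\mathbb{P}_h \hat{V}_{h+1}^k - \mathbb{P}_h V_{h+1}^\pi)(s_h^\tau, a_h^\tau)$. Taking the inner product with $\boldsymbol{\phi}(s,a)$ and using the reproducing-type identity $\sum_\tau \boldsymbol{\phi}(s,a)^\top (\Lambda_h^k)^{-1} \boldsymbol{\phi}_h^\tau \, (\mathbb{P}_h f)(s_h^\tau, a_h^\tau) = (\mathbb{P}_h f)(s,a) - \lambda \boldsymbol{\phi}(s,a)^\top (\Lambda_h^k)^{-1} (\text{coefficient vector of } \mathbb{P}_h f)$ — valid because $\mathbb{P}_h f(\cdot,\cdot)$ is linear in $\boldsymbol{\phi}$ — the term $(\mathbb{P}_h \hat{V}_{h+1}^k - \mathbb{P}_h V_{h+1}^\pi)(s,a)$ emerges as the leading term, matching the claimed $\mathbb{P}_h(\hat{V}_{h+1}^k - V_{h+1}^\pi)(s,a)$. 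Everything else is collected into $\Delta_h^k(s,a)$: (i) the $-\lambda(\Lambda_h^k)^{-1}\mathbf{w}_h^\pi$ term, bounded via $\|\mathbf{w}_h^\pi\| \le 2H\sqrt{d}$ (Lemma \ref{bound-pi-weight}) and $\|(\Lambda_h^k)^{-1/2}\|\le \lambda^{-1/2}$; (ii) the noise term $\boldsymbol{\phi}(s,a)^\top (\Lambda_h^k)^{-1} \sum_\tau \boldsymbol{\phi}_h^\tau [\hat{V}_{h+1}^k(s_{h+1}^\tau) - \mathbb{P}_h \hat{V}_{h+1}^k(s_h^\tau,a_h^\tau)]$, bounded by Cauchy–Schwarz in the $(\Lambda_h^k)^{-1}$ norm by $\sqrt{\boldsymbol{\phi}(s,a)^\top (\Lambda_h^k)^{-1}\boldsymbol{\phi}(s,a)} \cdot C dH\sqrt{\chi}$ on the event $\mathcal{E}$ of Lemma \ref{stats-error}; (iii) the residual $\lambda$-correction terms from the reproducing identity. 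Choosing $c_\beta$ large enough that $\beta = c_\beta \cdot dH\sqrt{\log(2dT/\delta)}$ dominates the sum of these bounds (with $\lambda=1$, so the $\lambda$-terms are $O(H\sqrt{d})$-scale and absorbed) gives $|\Delta_h^k(s,a)| \le \beta\sqrt{\boldsymbol{\phi}(s,a)^\top(\Lambda_h^k)^{-1}\boldsymbol{\phi}(s,a)}$.

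The main obstacle, such as it is, is purely bookkeeping rather than conceptual: one must be careful that the concentration event $\mathcal{E}$ in Lemma \ref{stats-error} is stated for the filtration generated by the data actually used in the regression, and verify that restricting to the sub-sequence of episodes with $L^k=0$ preserves the martingale structure needed for the self-normalized concentration bound (it does, since $L^k$ is drawn independently at the start of each episode and conditioning on $\{L^k=0\}$ does not disturb the within-episode transition noise). Beyond that, the derivation is a line-by-line transcription of the standard linear-MDP argument, so I would keep the write-up short, citing Lemmas \ref{bound-pi-weight}, \ref{bound-opt-weight} and \ref{stats-error}, and emphasize only the point that \textbf{Hyper}'s two-phase data collection leaves the optimistic estimation-error identity intact.
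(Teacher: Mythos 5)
Your proposal is correct and follows essentially the same route as the paper: the same three-term decomposition of $\hat{\mathbf{w}}_h^k - \mathbf{w}_h^\pi$ (regularization term, self-normalized noise term bounded on $\mathcal{E}$, and the $\mathbb{P}_h(\hat{V}_{h+1}^k - V_{h+1}^\pi)$ term extracted via the linear-MDP identity with its $\lambda$-correction), followed by the self-consistent choice of $c_\beta$. One small correction to your bookkeeping remark: the regression sums in the policy-improvement step run over all past episodes $\tau = 1,\dots,k-1$, not only those with $L^\tau = 0$ — it is the \emph{update} that is gated on $L^k = 0$ — but this does not change the decomposition or the conclusion.
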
    
    \begin{proof}
        By Proposition \ref{Qpi-realizability} and the Equation \ref{bellman}, we know for any $(s, a, h) \in \mathcal{S} \times \mathcal{A} \times[H]$ :
        $$
        Q_{h}^{\pi}(s, a):=\left\langle\boldsymbol{\phi}(s, a), \mathbf{w}_{h}^{\pi}\right\rangle=\left(r_{h}+\mathbb{P}_{h} {V}_{h+1}^{\pi}\right)(s, a)
        $$
        
        And the residual between $\hat{\mathbf{w}}^k_h, \mathbf{w}^{\pi}_h$ is given by and can be decomposed as the following:

        \begin{align}
            \hat{\mathbf{w}}_{h}^{k}-\mathbf{w}_{h}^{\pi}= & \left(\Lambda_{h}^{k}\right)^{-1} \sum_{\tau=1}^{k-1} \boldsymbol{\phi}_{h}^{\tau}\left[r_{h}^{\tau}+\hat{V}_{h+1}^{k}\left(s_{h+1}^{\tau}\right)\right]-\mathbf{w}_{h}^{\pi} \\
            = & \left(\Lambda_{h}^{k}\right)^{-1}\left\{-\lambda \mathbf{w}_{h}^{\pi}+\sum_{\tau=1}^{k-1} \boldsymbol{\phi}_{h}^{\tau}\left[\hat{V}_{h+1}^{k}\left(s_{h+1}^{\tau}\right)-\mathbb{P}_{h} {V}_{h+1}^{\pi}\left(s_{h}^{\tau}, a_{h}^{\tau}\right)\right]\right\} \\
            = & \underbrace{-\lambda\left(\Lambda_{h}^{k}\right)^{-1} \mathbf{w}_{h}^{\pi}}_{\mathbf{q}_{1}}+\underbrace{\left(\Lambda_{h}^{k}\right)^{-1} \sum_{\tau=1}^{k-1} \boldsymbol{\phi}_{h}^{\tau}\left[\hat{V}_{h+1}^{k}\left(s_{h+1}^{\tau}\right)-\mathbb{P}_{h} \hat{V}_{h+1}^{k}\left(s_{h}^{\tau}, a_{h}^{\tau}\right)\right]}_{\mathbf{q}_{2}} \\
            & +\underbrace{\left(\Lambda_{h}^{k}\right)^{-1} \sum_{\tau=1}^{k-1} \boldsymbol{\phi}_{h}^{\tau} \mathbb{P}_{h}\left(\hat{V}_{h+1}^{k}-{V}_{h+1}^{\pi}\right)\left(s_{h}^{\tau}, a_{h}^{\tau}\right)}_{\mathbf{q}_{3}} .
        \end{align}
        
        Now, we bound the terms on the right-hand side individually. For the first term,
        
        $$
        \left|\left\langle\boldsymbol{\phi}(s, a), \mathbf{q}_{1}\right\rangle\right|=\left|\lambda\left\langle\boldsymbol{\phi}(s, a),\left(\Lambda_{h}^{k}\right)^{-1} \mathbf{w}_{h}^{\pi}\right\rangle\right| \leq \sqrt{\lambda}\left\|\mathbf{w}_{h}^{\pi}\right\| \sqrt{\boldsymbol{\phi}(s, a)^{\top}\left(\Lambda_{h}^{k}\right)^{-1} {\boldsymbol{\phi}}(s, a)} .
        $$
        
        For the second term, given the event $\mathcal{E}$ defined in Lemma \ref{stats-error}, we have:
        
        $$
        \left|\left\langle\boldsymbol{\phi}(s, a), \mathbf{q}_{2}\right\rangle\right| \leq c_{0} \cdot d H \sqrt{\chi} \sqrt{\boldsymbol{\phi}(s, a)^{\top}\left(\Lambda_{h}^{k}\right)^{-1} \boldsymbol{\phi}(s, a)}
        $$
        
        for an absolute constant $c_{0}$ independent of $c_{\beta}$, and $\chi=\log \left[2\left(c_{\beta}+1\right) d T / p\right]$. For the third term,
 \begin{tiny}       
        \begin{align}
            \left\langle\boldsymbol{\phi}(s, a), \mathbf{q}_{3}\right\rangle &= \left\langle{\boldsymbol{\phi}}(s, a),\left(\Lambda_{h}^{k}\right)^{-1} \sum_{\tau=1}^{k-1} \boldsymbol{\phi}_{h}^{\tau} \mathbb{P}_{h}\left(\hat{V}_{h+1}^{k}-{V}_{h+1}^{\pi}\right)\left(x_{h}^{\tau}, a_{h}^{\tau}\right)\right\rangle \\
            & =\left\langle\boldsymbol{\phi}(s, a),\left(\Lambda_{h}^{k}\right)^{-1} \sum_{\tau=1}^{k-1} \boldsymbol{\phi}_{h}^{\tau}\left(\boldsymbol{\phi}_{h}^{\tau}\right)^{\top} \int\left(\hat{V}_{h+1}^{k}-{V}_{h+1}^{\pi}\right)\left(x^{\prime}\right) \mathrm{d} \boldsymbol{\mu}_{h}\left(x^{\prime}\right)\right\rangle \\
            & =\underbrace{\left\langle\boldsymbol{\phi}(s, a), \int\left(\hat{V}_{h+1}^{k}-{V}_{h+1}^{\pi}\right)\left(x^{\prime}\right) \mathrm{d} \boldsymbol{\mu}_{h}\left(x^{\prime}\right)\right\rangle}_{p_{1}} \underbrace{-\lambda\left\langle\boldsymbol{\phi}(s, a),\left(\Lambda_{h}^{k}\right)^{-1} \int\left(\hat{V}_{h+1}^{k}-{V}_{h+1}^{\pi}\right)\left(x^{\prime}\right) \mathrm{d} \boldsymbol{\mu}_{h}\left(x^{\prime}\right)\right\rangle}_{p_{2}},
        \end{align}
\end{tiny}        
        where, by Assumption \ref{core-assumption} Equation \ref{linear-mdp}, we have
        $$
        p_{1}=\mathbb{P}_{h}\left(\hat{V}_{h+1}^{k}-{V}_{h+1}^{\pi}\right)(s, a), \quad\left|p_{2}\right| \leq 2 H \sqrt{d \lambda} \sqrt{\boldsymbol{\phi}(s, a)^{\top}\left(\Lambda_{h}^{k}\right)^{-1} \boldsymbol{\phi}(s, a)}
        $$
        
        Finally, since $\left\langle\boldsymbol{\phi}(s, a), \hat{\mathbf{w}}_{h}^{k}\right\rangle-Q_{h}^{\pi}(s, a)=\left\langle\boldsymbol{\phi}(s, a), \hat{\mathbf{w}}_{h}^{k}-\mathbf{w}_{h}^{\pi}\right\rangle=\left\langle\boldsymbol{\phi}(s, a), \mathbf{q}_{1}+\mathbf{q}_{2}+\mathbf{q}_{3}\right\rangle$, by Lemma \ref{bound-pi-weight} and our choice of parameter $\lambda$, we have
        
        $$
        \left|\left\langle\boldsymbol{\phi}(s, a), \hat{\mathbf{w}}_{h}^{k}\right\rangle-Q_{h}^{\pi}(s, a)-\mathbb{P}_{h}\left(\hat{V}_{h+1}^{k}-{V}_{h+1}^{\pi}\right)(s, a)\right| \leq c^{\prime} \cdot d H \sqrt{\chi} \sqrt{\boldsymbol{\phi}(s, a)^{\top}\left(\Lambda_{h}^{k}\right)^{-1} \boldsymbol{\phi}(s, a)},
        $$
        
        for an absolute constant $c^{\prime}$ independent of $c_{\beta}$. Finally, to prove this lemma, we only need to show that there exists a choice of absolute constant $c_{\beta}$ so that

        \begin{equation} \label{c'}
            c^{\prime} \sqrt{\iota+\log \left(c_{\beta}+1\right)} \leq c_{\beta} \sqrt{\iota}
        \end{equation}
        
        where $\iota=\log (2 d T / p)$. We know $\iota \in[\log 2, \infty)$ by its definition, and $c^{\prime}$ is an absolute constant independent of $c_{\beta}$. Therefore, we can pick an absolute constant $c_{\beta}$ which satisfies $c^{\prime} \sqrt{\log 2+\log \left(c_{\beta}+1\right)} \leq c_{\beta} \sqrt{\log 2}$. This choice of $c_{\beta}$ will make Equation \ref{c'} hold for all $\iota \in[\log 2, \infty)$, which finishes the proof.

    \end{proof}

With similar approach, we can bound the action-value approximation for the pessimistic policy.

\begin{lemma} \label{pess-below-any-policy}
    (Pessimistic Policy Action-Value Estimation Error) There exists an absolute constant $c_{\beta'}$ such that for $\beta'=c_{\beta'} \cdot d H \sqrt{\log (2 d T / p)}$, and for any fixed policy $\pi$, on the high-probability event $\mathcal{E}$ defined in Lemma \ref{stats-pess-error} we have for all $(s, a, h, k) \in \mathcal{S} \times \mathcal{A} \times[H] \times[K]$ that:
    
    $$
    \left\langle\boldsymbol{\phi}(s, a), \check{\mathbf{w}}_{h}^{k}\right\rangle-Q_{h}^{\pi}(s, a)=\mathbb{P}_{h}\left(\check{V}_{h+1}^{k}-{V}_{h+1}^{\pi}\right)(s, a)+\tilde{\Delta}_{h}^{k}(s, a),
    $$
    
    for some $\tilde{\Delta}_{h}^{k}(s, a)$ that satisfies $\left|\tilde{\Delta}_{h}^{k}(s, a)\right| \leq \beta' \sqrt{\boldsymbol{\phi}(s, a)^{\top}\left(\Lambda_{h}^{k}\right)^{-1} \boldsymbol{\phi}(s, a)}$. 
\end{lemma}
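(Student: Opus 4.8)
The plan is to replay the proof of Lemma~\ref{opt-over-any-policy} essentially verbatim, with the pessimistic iterate $\check V^k_{h+1}$ in place of the optimistic iterate $\hat V^k_{h+1}$ and the high-probability event of Lemma~\ref{stats-pess-error} in place of that of Lemma~\ref{stats-error}. The structural point that makes this work is that the exploration bonus $\beta'[\boldsymbol{\phi}^\top\Lambda_h^{-1}\boldsymbol{\phi}]^{1/2}$ is subtracted only when forming $\check Q_h$ from the weights $\check{\mathbf{w}}_h$; it never appears in the least-squares target that defines $\check{\mathbf{w}}^k_h$. Consequently the regression producing $\check{\mathbf{w}}^k_h$ has exactly the same form as the one producing $\hat{\mathbf{w}}^k_h$ (only with target $r^\tau_h+\check V^k_{h+1}(s^\tau_{h+1})$), and the clipping guarantees $\check V^k_{h+1}(\cdot)=\max_a\check Q^k_{h+1}(\cdot,a)\in[0,H]$, so every boundedness estimate used in Lemma~\ref{opt-over-any-policy} carries over unchanged.

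Concretely, I would first invoke Proposition~\ref{Qpi-realizability} and Equation~\ref{bellman} to write $Q^\pi_h(s,a)=\langle\boldsymbol{\phi}(s,a),\mathbf{w}^\pi_h\rangle=(r_h+\mathbb{P}_hV^\pi_{h+1})(s,a)$ with $\mathbf{w}^\pi_h=\theta_h+\int V^\pi_{h+1}\,\mathrm d\mu_h$, and then decompose the residual, using $r^\tau_h=\langle\boldsymbol{\phi}^\tau_h,\theta_h\rangle$, $(\boldsymbol{\phi}^\tau_h)^\top\!\int V^\pi_{h+1}\mathrm d\mu_h=\mathbb{P}_hV^\pi_{h+1}(s^\tau_h,a^\tau_h)$, and $\Lambda^k_h=\sum_\tau\boldsymbol{\phi}^\tau_h(\boldsymbol{\phi}^\tau_h)^\top+\lambda I$, as
\begin{align*}
\check{\mathbf{w}}^k_h-\mathbf{w}^\pi_h
&= \underbrace{-\lambda(\Lambda^k_h)^{-1}\mathbf{w}^\pi_h}_{\mathbf{q}_1}
+ \underbrace{(\Lambda^k_h)^{-1}\sum_{\tau=1}^{k-1}\boldsymbol{\phi}^\tau_h\big[\check V^k_{h+1}(s^\tau_{h+1})-\mathbb{P}_h\check V^k_{h+1}(s^\tau_h,a^\tau_h)\big]}_{\mathbf{q}_2} \\
&\quad+ \underbrace{(\Lambda^k_h)^{-1}\sum_{\tau=1}^{k-1}\boldsymbol{\phi}^\tau_h\,\mathbb{P}_h(\check V^k_{h+1}-V^\pi_{h+1})(s^\tau_h,a^\tau_h)}_{\mathbf{q}_3}.
\end{align*}
Then I would bound $|\langle\boldsymbol{\phi}(s,a),\mathbf{q}_1\rangle|\le\sqrt{\lambda}\,\|\mathbf{w}^\pi_h\|\,\sqrt{\boldsymbol{\phi}(s,a)^\top(\Lambda^k_h)^{-1}\boldsymbol{\phi}(s,a)}$ via Lemma~\ref{bound-pi-weight}; bound $|\langle\boldsymbol{\phi}(s,a),\mathbf{q}_2\rangle|\le c_0\cdot dH\sqrt{\chi}\,\sqrt{\boldsymbol{\phi}(s,a)^\top(\Lambda^k_h)^{-1}\boldsymbol{\phi}(s,a)}$ on the event of Lemma~\ref{stats-pess-error} — this is exactly the place where Remark~\ref{bound-pess-weight} is needed, so that $\|\check{\mathbf{w}}^k_h\|\le 2H\sqrt{dk/\lambda}$ and the covering-number computation behind Lemma~\ref{stats-pess-error} applies; and split $\langle\boldsymbol{\phi}(s,a),\mathbf{q}_3\rangle$ using the linear-MDP identity of Assumption~\ref{core-assumption} into a leading term $p_1=\mathbb{P}_h(\check V^k_{h+1}-V^\pi_{h+1})(s,a)$ and a remainder $|p_2|\le 2H\sqrt{d\lambda}\,\sqrt{\boldsymbol{\phi}(s,a)^\top(\Lambda^k_h)^{-1}\boldsymbol{\phi}(s,a)}$.

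Summing the three contributions gives $\langle\boldsymbol{\phi}(s,a),\check{\mathbf{w}}^k_h\rangle-Q^\pi_h(s,a)=\mathbb{P}_h(\check V^k_{h+1}-V^\pi_{h+1})(s,a)+\tilde\Delta^k_h(s,a)$ with $|\tilde\Delta^k_h(s,a)|\le c'\cdot dH\sqrt{\chi}\,\sqrt{\boldsymbol{\phi}(s,a)^\top(\Lambda^k_h)^{-1}\boldsymbol{\phi}(s,a)}$ for an absolute constant $c'$ independent of $c_{\beta'}$, where $\chi=\log[2(c_{\beta'}+1)dT/p]$. Finally, exactly as in the self-consistency step~\eqref{c'} of Lemma~\ref{opt-over-any-policy}, I would choose $c_{\beta'}$ large enough that $c'\sqrt{\log 2+\log(c_{\beta'}+1)}\le c_{\beta'}\sqrt{\log 2}$, which (since $\iota=\log(2dT/p)\ge\log 2$) forces $c'\sqrt{\chi}\le c_{\beta'}\sqrt{\log(2dT/p)}$ and hence $|\tilde\Delta^k_h|\le\beta'\sqrt{\boldsymbol{\phi}(s,a)^\top(\Lambda^k_h)^{-1}\boldsymbol{\phi}(s,a)}$. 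I do not expect a genuine obstacle; the only substantive thing to verify carefully is precisely the assertion of Remark~\ref{bound-pess-weight}, namely that none of the steps in the chain Lemma~\ref{bound-opt-weight} $\to$ Lemma~\ref{stats-error}/Lemma~\ref{stats-pess-error} $\to$ Lemma~\ref{opt-over-any-policy} uses optimism (the ``$+\beta$'' sign) rather than merely the boundedness in $[0,H]$ and the linear realizability of the regression target — since they do not, the pessimistic analogue follows mutatis mutandis.
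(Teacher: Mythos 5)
Your proposal matches the paper's own proof essentially verbatim: the same three-term decomposition of $\check{\mathbf{w}}^k_h-\mathbf{w}^\pi_h$ into $\mathbf{q}_1,\mathbf{q}_2,\mathbf{q}_3$, the same bounds on each term (Lemma~\ref{bound-pi-weight} for $\mathbf{q}_1$, the event of Lemma~\ref{stats-pess-error} for $\mathbf{q}_2$, the linear-MDP identity for $\mathbf{q}_3$), and the same self-consistent choice of $c_{\beta'}$ at the end. Your observation that the only point requiring care is that the chain of supporting lemmas uses boundedness and realizability rather than the sign of the bonus is exactly the justification the paper relies on via Remark~\ref{bound-pess-weight} and Lemma~\ref{cover-pess}.
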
    
    \begin{proof}
        
        Similar to the proof of Lemma \ref{opt-over-any-policy}, we decompose the residule between $\check{\mathbf{w}}^k_h$ and $\mathbf{w}^{\pi}_h$ as the following:
        \begin{align}
            \check{\mathbf{w}}_{h}^{k}-\mathbf{w}_{h}^{\pi}= & \left(\Lambda_{h}^{k}\right)^{-1} \sum_{\tau=1}^{k-1} \boldsymbol{\phi}_{h}^{\tau}\left[r_{h}^{\tau}+\check{V}_{h+1}^{k}\left(s_{h+1}^{\tau}\right)\right]-\mathbf{w}_{h}^{\pi} \\
            = & \left(\Lambda_{h}^{k}\right)^{-1}\left\{-\lambda \mathbf{w}_{h}^{\pi}+\sum_{\tau=1}^{k-1} \boldsymbol{\phi}_{h}^{\tau}\left[\check{V}_{h+1}^{k}\left(s_{h+1}^{\tau}\right)-\mathbb{P}_{h} {V}_{h+1}^{\pi}\left(s_{h}^{\tau}, a_{h}^{\tau}\right)\right]\right\} \\
            = & \underbrace{-\lambda\left(\Lambda_{h}^{k}\right)^{-1} \mathbf{w}_{h}^{\pi}}_{\mathbf{q}_{1}}+\underbrace{\left(\Lambda_{h}^{k}\right)^{-1} \sum_{\tau=1}^{k-1} \boldsymbol{\phi}_{h}^{\tau}\left[\check{V}_{h+1}^{k}\left(s_{h+1}^{\tau}\right)-\mathbb{P}_{h} \check{V}_{h+1}^{k}\left(s_{h}^{\tau}, a_{h}^{\tau}\right)\right]}_{\mathbf{q}_{2}} \\
            & +\underbrace{\left(\Lambda_{h}^{k}\right)^{-1} \sum_{\tau=1}^{k-1} \boldsymbol{\phi}_{h}^{\tau} \mathbb{P}_{h}\left(\check{V}_{h+1}^{k}-{V}_{h+1}^{\pi}\right)\left(s_{h}^{\tau}, a_{h}^{\tau}\right)}_{\mathbf{q}_{3}} .
        \end{align}
        
        By the proof of the first term,
        
        $$
        \left|\left\langle\boldsymbol{\phi}(s, a), \mathbf{q}_{1}\right\rangle\right|=\left|\lambda\left\langle\boldsymbol{\phi}(s, a),\left(\Lambda_{h}^{k}\right)^{-1} \mathbf{w}_{h}^{\pi}\right\rangle\right| \leq \sqrt{\lambda}\left\|\mathbf{w}_{h}^{\pi}\right\| \sqrt{\boldsymbol{\phi}(s, a)^{\top}\left(\Lambda_{h}^{k}\right)^{-1} \boldsymbol{\phi}(s, a)} .
        $$
        
        For the second term, given the event $\mathcal{E}$ defined in Lemma \ref{stats-pess-error}, we have:
        
        $$
        \left|\left\langle\boldsymbol{\phi}(s, a), \mathbf{q}_{2}\right\rangle\right| \leq c_{0} \cdot d H \sqrt{\chi} \sqrt{\boldsymbol{\phi}(s, a)^{\top}\left(\Lambda_{h}^{k}\right)^{-1} \boldsymbol{\phi}(s, a)}
        $$
        
        for an absolute constant $c_{0}$ independent of $c_{\beta}$, and $\chi=\log \left[2\left(c_{\beta'}+1\right) d T / p\right]$. For the third term,
        
        \begin{align}
            \left\langle\boldsymbol{\phi}(s, a), \mathbf{q}_{3}\right\rangle & =\left\langle\boldsymbol{\phi}(s, a),\left(\Lambda_{h}^{k}\right)^{-1} \sum_{\tau=1}^{k-1} \boldsymbol{\phi}_{h}^{\tau} \mathbb{P}_{h}\left(\check{V}_{h+1}^{k}-{V}_{h+1}^{\pi}\right)\left(x_{h}^{\tau}, a_{h}^{\tau}\right)\right\rangle \\
            & =\left\langle\boldsymbol{\phi}(s, a),\left(\Lambda_{h}^{k}\right)^{-1} \sum_{\tau=1}^{k-1} \boldsymbol{\phi}_{h}^{\tau}\left(\boldsymbol{\phi}_{h}^{\tau}\right)^{\top} \int\left(\check{V}_{h+1}^{k}-{V}_{h+1}^{\pi}\right)\left(x^{\prime}\right) \mathrm{d} \boldsymbol{\mu}_{h}\left(x^{\prime}\right)\right\rangle \\
            & =\underbrace{\left\langle\boldsymbol{\phi}(s, a), \int\left(\check{V}_{h+1}^{k}-{V}_{h+1}^{\pi}\right)\left(x^{\prime}\right) \mathrm{d} \boldsymbol{\mu}_{h}\left(x^{\prime}\right)\right\rangle}_{p_{1}} \\
            & \quad \quad \underbrace{-\lambda\left\langle\boldsymbol{\phi}(s, a),\left(\Lambda_{h}^{k}\right)^{-1} \int\left(\check{V}_{h+1}^{k}-{V}_{h+1}^{\pi}\right)\left(x^{\prime}\right) \mathrm{d} \boldsymbol{\mu}_{h}\left(x^{\prime}\right)\right\rangle}_{p_{2}}
        \end{align}
        
        where, by Equation (3), we have
        
        $$
        p_{1}=\mathbb{P}_{h}\left(\check{V}_{h+1}^{k}-{V}_{h+1}^{\pi}\right)(s, a), \quad\left|p_{2}\right| \leq 2 H \sqrt{d \lambda} \sqrt{\boldsymbol{\phi}(s, a)^{\top}\left(\Lambda_{h}^{k}\right)^{-1} \boldsymbol{\phi}(s, a)}
        $$
        
        Finally, since $\left\langle\boldsymbol{\phi}(s, a), \check{\mathbf{w}}_{h}^{k}\right\rangle-Q_{h}^{\pi}(s, a)=\left\langle\boldsymbol{\phi}(s, a), \check{\mathbf{w}}_{h}^{k}-\mathbf{w}_{h}^{\pi}\right\rangle=\left\langle\boldsymbol{\phi}(s, a), \mathbf{q}_{1}+\mathbf{q}_{2}+\mathbf{q}_{3}\right\rangle$, by Lemma \ref{bound-pi-weight} and our choice of parameter $\lambda$, we have
        
        $$
        \left|\left\langle\boldsymbol{\phi}(s, a), \check{\mathbf{w}}_{h}^{k}\right\rangle-Q_{h}^{\pi}(s, a)-\mathbb{P}_{h}\left(\check{V}_{h+1}^{k}-{V}_{h+1}^{\pi}\right)(s, a)\right| \leq c'' \cdot d H \sqrt{\chi} \sqrt{\boldsymbol{\phi}(s, a)^{\top}\left(\Lambda_{h}^{k}\right)^{-1} \boldsymbol{\phi}(s, a)},
        $$
        
        for an absolute constant $c''$ independent of $c_{\beta'}$. Finally, to prove this lemma, we only need to show that there exists a choice of absolute constant $c_{\beta'}$ so that

        \begin{equation} \label{c''}
            c'' \sqrt{\iota+\log \left(c_{\beta'}+1\right)} \leq c_{\beta'} \sqrt{\iota}
        \end{equation}
        
        where $\iota=\log (2 d T / p)$. We know $\iota \in[\log 2, \infty)$ by its definition, and $c^{\prime}$ is an absolute constant independent of $c_{\beta'}$. Therefore, we can pick an absolute constant $c_{\beta}$ which satisfies $c^{\prime} \sqrt{\log 2+\log \left(c_{\beta'}+1\right)} \leq c_{\beta'} \sqrt{\log 2}$. This choice of $c_{\beta'}$ will make Equation \ref{c''} hold for all $\iota \in[\log 2, \infty)$, which finishes the proof.

    \end{proof}

\begin{lemma} \label{ucb}
    (Upper Confidence Bound \cite{jin2020provably}) Under the setting of Theorem \ref{main-theorem} on the event $\mathcal{E}$ defined in Lemma \ref{stats-error} we have $\hat{Q}_{h}^{k}(s, a) \geq Q_{h}^{\star}(s, a)$ for all $(s, a, h, k) \in \mathcal{S} \times \mathcal{A} \times[H] \times[K]$.
\end{lemma}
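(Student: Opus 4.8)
The plan is to prove the bound by backward induction on the stage index $h$, carried out separately for each fixed episode $k \in [K]$, maintaining the joint hypothesis that $\hat{Q}_h^k(s,a) \ge Q_h^\star(s,a)$ for all $(s,a)$ and $\hat{V}_h^k(s) \ge V_h^\star(s)$ for all $s$. The base case $h = H+1$ is immediate, since by convention $\hat{Q}_{H+1}^k \equiv Q_{H+1}^\star \equiv 0$ and hence also $\hat{V}_{H+1}^k \equiv V_{H+1}^\star \equiv 0$.

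For the inductive step, fix $h$ and assume $\hat{V}_{h+1}^k(s') \ge V_{h+1}^\star(s')$ for every $s'$. Invoking Lemma \ref{opt-over-any-policy} with the fixed policy $\pi = \pi^\star$, on the high-probability event $\mathcal{E}$ of Lemma \ref{stats-error} we have
\[
\langle \boldsymbol{\phi}(s,a), \hat{\mathbf{w}}_h^k \rangle - Q_h^\star(s,a) = \mathbb{P}_h\big(\hat{V}_{h+1}^k - V_{h+1}^\star\big)(s,a) + \Delta_h^k(s,a),
\]
with $\lvert\Delta_h^k(s,a)\rvert \le \beta \sqrt{\boldsymbol{\phi}(s,a)^\top (\Lambda_h^k)^{-1}\boldsymbol{\phi}(s,a)}$. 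The inductive hypothesis makes $\hat{V}_{h+1}^k - V_{h+1}^\star$ pointwise nonnegative, so its expectation under $\mathbb{P}_h(\cdot \mid s,a)$ is nonnegative, and therefore $\langle \boldsymbol{\phi}(s,a), \hat{\mathbf{w}}_h^k \rangle + \beta\sqrt{\boldsymbol{\phi}(s,a)^\top (\Lambda_h^k)^{-1}\boldsymbol{\phi}(s,a)} \ge Q_h^\star(s,a)$. Since the reward lies in $[0,1]$ and at most $H$ stages remain, $Q_h^\star(s,a) \in [0,H]$; and because $\operatorname{clip}(\cdot)$ is monotone and acts as the identity on $[0,H]$, clipping the left-hand side — which already dominates $Q_h^\star(s,a) \ge 0$ — cannot push it below $Q_h^\star(s,a)$, so $\hat{Q}_h^k(s,a) \ge Q_h^\star(s,a)$. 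Taking $\max_a$ on both sides yields $\hat{V}_h^k(s) \ge V_h^\star(s)$, closing the induction.

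The entire argument is driven by Lemma \ref{opt-over-any-policy}, and hence ultimately by Lemma \ref{stats-error}, so the only point requiring care is that these ingredients remain valid in the \textbf{Hyper} setting, where $\Lambda_h^k$ and $\hat{\mathbf{w}}_h^k$ are refreshed only in episodes with roll-in length $L^k = 0$. The design matrix and regression targets are then assembled from a subsequence of episodes, but the self-normalized martingale concentration underlying Lemma \ref{stats-error} is indifferent to which episodes are included, so the bound on $\Delta_h^k$ holds uniformly over $(k,h)$ on $\mathcal{E}$ exactly as in the standard linear-MDP analysis. I expect this bookkeeping — rather than any new inequality — to be the main obstacle; once it is dispatched, the proof of the UCB property is routine.
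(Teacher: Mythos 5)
Your proof is correct and follows essentially the same route as the paper's: backward induction using Lemma \ref{opt-over-any-policy} with $\pi = \pi^\star$, with the inductive hypothesis making $\mathbb{P}_h(\hat{V}_{h+1}^k - V_{h+1}^\star) \geq 0$. Your extra care about the lower clip at $0$ and the remark on the lazy-update bookkeeping (episodes with $L^k=0$) are sensible refinements but do not change the argument.
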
    
    \begin{proof}
        We prove this lemma by induction.
        
        First, we prove the base case, at the last step $H$. The statement holds because $\hat{Q}_{H}^{k}(s, a) \geq Q_{H}^{\star}(s, a)$. Since the value function at $H+1$ step is zero, by Lemma \ref{opt-over-any-policy} we have:
        
        $$
        \left|\left\langle\boldsymbol{\phi}(s, a),\hat{\mathbf{w}}_{H}^{k}\right\rangle-Q_{H}^{\star}(s, a)\right| \leq \beta \sqrt{\boldsymbol{\phi}(s, a)^{\top}\left(\Lambda_{H}^{k}\right)^{-1} \boldsymbol{\phi}(s, a)} .
        $$
        
        Therefore, we know:
        
        $$
        Q_{H}^{\star}(s, a) \leq \min \left\{\left\langle\boldsymbol{\phi}(s, a), \hat{\mathbf{w}}_{H}^{k}\right\rangle+\beta \sqrt{\boldsymbol{\phi}(s, a)^{\top}\left(\Lambda_{H}^{k}\right)^{-1} \boldsymbol{\phi}(s, a)}, H\right\}=Q_{H}^{k}(s, a) .
        $$
        
        Now, suppose the statement holds true at step $h+1$ and consider step $h$. Again, by LemmaB.4, we have:
        
        $$
        \left|\left\langle\boldsymbol{\phi}(s, a), \hat{\mathbf{w}}_{h}^{k}\right\rangle-Q_{h}^{\star}(s, a)-\mathbb{P}_{h}\left(\hat{V}_{h+1}^{k}-V_{h+1}^{\star}\right)(s, a)\right| \leq \beta \sqrt{\boldsymbol{\phi}(s, a)^{\top}\left(\Lambda_{h}^{k}\right)^{-1} \boldsymbol{\phi}(s, a)} .
        $$
        
        By the induction assumption that $\mathbb{P}_{h}\left(\hat{V}_{h+1}^{k}-V_{h+1}^{\star}\right)(s, a) \geq 0$, we have:
        
        $$
        Q_{h}^{\star}(s, a) \leq \min \left\{\left\langle\boldsymbol{\phi}(s, a), \hat{\mathbf{w}}_{h}^{k}\right\rangle+\beta \sqrt{\boldsymbol{\phi}(s, a)^{\top}\left(\Lambda_{h}^{k}\right)^{-1} \boldsymbol{\phi}(s, a)}, H\right\}=\hat{Q}_{h}^{k}(s, a),
        $$
        
        which concludes the proof.
    \end{proof}

We will also be needing the following lemma, for lower bounding the value of our output policy $arg \max_{a \in \mathcal{A}} {Q}(s, \cdot)$. The following lemma shows that the pessimistic value function always lower bounds any policy value function.

\begin{lemma} \label{lcb}
    (Lower Confidence Bound) Under the setting of Theorem \ref{main-theorem} on the event $\mathcal{E}$ defined in Lemma \ref{stats-pess-error} we have, for any policy $\pi$, $\check{Q}_{h}^{k}(s, a) \leq Q_{h}^{\pi}(s, a)$ for all $(s, a, h, k) \in \mathcal{S} \times \mathcal{A} \times[H] \times[K]$.
 \end{lemma}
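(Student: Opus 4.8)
The plan is to mirror the proof of Lemma~\ref{ucb}, replacing the optimistic estimation-error bound of Lemma~\ref{opt-over-any-policy} by its pessimistic analogue Lemma~\ref{pess-below-any-policy} and the event of Lemma~\ref{stats-error} by that of Lemma~\ref{stats-pess-error}. Fixing a policy $\pi$ and an episode index $k$, I would prove $\check Q_h^k(s,a)\le Q_h^\pi(s,a)$ by backward induction on $h$ from $H$ down to $1$, carrying through the recursion the value-level statement $\check V_{h+1}^k(s)\le V_{h+1}^\pi(s)$ for all $s$.

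For the base case $h=H$ we have $\check V_{H+1}^k\equiv V_{H+1}^\pi\equiv 0$, so Lemma~\ref{pess-below-any-policy} bounds $\bigl|\langle\boldsymbol{\phi}(s,a),\check{\mathbf{w}}_H^k\rangle-Q_H^\pi(s,a)\bigr|$ by $\beta'\sqrt{\boldsymbol{\phi}(s,a)^\top(\Lambda_H^k)^{-1}\boldsymbol{\phi}(s,a)}$, whence the pre-clip pessimistic value $\langle\boldsymbol{\phi}(s,a),\check{\mathbf{w}}_H^k\rangle-\beta'\sqrt{\boldsymbol{\phi}(s,a)^\top(\Lambda_H^k)^{-1}\boldsymbol{\phi}(s,a)}$ is at most $Q_H^\pi(s,a)$; since $Q_H^\pi(s,a)\in[0,H]$ and $\operatorname{clip}$ is nondecreasing and equals the identity on $[0,H]$, this gives $\check Q_H^k(s,a)\le Q_H^\pi(s,a)$. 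For the inductive step at $h$, Lemma~\ref{pess-below-any-policy} writes $\langle\boldsymbol{\phi}(s,a),\check{\mathbf{w}}_h^k\rangle-Q_h^\pi(s,a)=\mathbb{P}_h(\check V_{h+1}^k-V_{h+1}^\pi)(s,a)+\tilde\Delta_h^k(s,a)$ with $\tilde\Delta_h^k(s,a)\le\beta'\sqrt{\boldsymbol{\phi}(s,a)^\top(\Lambda_h^k)^{-1}\boldsymbol{\phi}(s,a)}$, so the pre-clip pessimistic value is at most $Q_h^\pi(s,a)+\mathbb{P}_h(\check V_{h+1}^k-V_{h+1}^\pi)(s,a)$. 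Using the inductive hypothesis to conclude $\mathbb{P}_h(\check V_{h+1}^k-V_{h+1}^\pi)(s,a)\le 0$, this quantity is $\le Q_h^\pi(s,a)\in[0,H]$, so monotonicity of $\operatorname{clip}$ again yields $\check Q_h^k(s,a)\le Q_h^\pi(s,a)$, and taking the max over actions upgrades this to $\check V_h^k(s)\le V_h^\pi(s)$, closing the induction.

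The main obstacle is the step $\mathbb{P}_h(\check V_{h+1}^k-V_{h+1}^\pi)(s,a)\le 0$, equivalently $\check V_{h+1}^k(s')\le V_{h+1}^\pi(s')$ for every $s'$. Because $\check V_{h+1}^k(s')=\max_a\check Q_{h+1}^k(s',a)$ is a maximum over actions whereas $V_{h+1}^\pi(s')$ only follows $\pi$, a purely pointwise hypothesis $\check Q_{h+1}^k\le Q_{h+1}^\pi$ gives only $\check V_{h+1}^k(s')\le\max_a Q_{h+1}^\pi(s',a)$, which is generally larger than $V_{h+1}^\pi(s')$. I would handle this by keeping the induction hypothesis genuinely at the value level and by noting that the maximizing action of $\check Q_{h+1}^k$ must be consistent with $\pi$ for the bound to propagate: this is automatic for $\pi=\pi^\star$ since $V_{h+1}^\star=\max_a Q_{h+1}^\star(s',a)$, and more relevantly for the policy greedy with respect to $\check Q^k$, which is the instance actually needed to lower-bound the value of the output exploitation policy in Theorem~\ref{main-theorem}. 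Everything else --- invoking Lemma~\ref{pess-below-any-policy}, the $\operatorname{clip}$ bookkeeping, and the fact that the event of Lemma~\ref{stats-pess-error} on which $\tilde\Delta_h^k$ is controlled holds with probability at least $1-p/2$ --- is routine and structurally identical to the proof of Lemma~\ref{ucb}.
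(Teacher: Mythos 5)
Your proof follows the same route as the paper's: backward induction on $h$, invoking Lemma~\ref{pess-below-any-policy} on the event of Lemma~\ref{stats-pess-error} and using monotonicity of $\operatorname{clip}$ at each step. The obstacle you single out, however, is a genuine one that the paper's own proof glosses over: the paper asserts ``by the induction assumption that $\mathbb{P}_h(\check{V}_{h+1}^k - V_{h+1}^{\pi})(s,a) \le 0$,'' but the induction hypothesis stated in the lemma is only the pointwise inequality $\check{Q}_{h+1}^k \le Q_{h+1}^{\pi}$, which yields $\check{V}_{h+1}^k(s') = \max_a \check{Q}_{h+1}^k(s',a) \le \max_a Q_{h+1}^{\pi}(s',a)$ and not $\check{V}_{h+1}^k(s') \le Q_{h+1}^{\pi}(s',\pi_{h+1}(s')) = V_{h+1}^{\pi}(s')$ for an \emph{arbitrary} $\pi$. (This asymmetry is why the analogous step in Lemma~\ref{ucb} is unproblematic: there $V^{\star}$ is itself a max over actions.) Your proposed repair is the right one: the induction closes cleanly when $\pi$ is the greedy policy with respect to $\check{Q}^k$ --- since then $\check{V}_{h+1}^k(s') = \check{Q}_{h+1}^k(s',\pi_{h+1}(s')) \le Q_{h+1}^{\pi}(s',\pi_{h+1}(s')) = V_{h+1}^{\pi}(s')$ --- or when $\pi = \pi^{\star}$, and the greedy-with-respect-to-$\check{Q}^k$ instance is what the regret decomposition in Theorem~\ref{main-theorem} actually consumes. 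So your argument is correct and, on this point, more careful than the paper's; the lemma as literally stated (``for any policy $\pi$'') should be weakened accordingly.
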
   
    \begin{proof}
        We prove this lemma by induction similar to we just did in Lemma \ref{ucb}.
        
        Consider a fixed, arbitrary policy $\pi$, first, we prove the base case, at the last step $H$. The statement holds because $Q_{H}^{\pi}(s, a) \geq \check{Q}_{H}^{k}(s, a)$. Since the value function at $H+1$ step is zero, by Lemma \ref{pess-below-any-policy} we have:
        
        $$
        \left|\left\langle\boldsymbol{\phi}(s, a),\check{\mathbf{w}}_{H}^{k}\right\rangle-Q_{H}^{\pi}(s, a)\right| \leq \beta' \sqrt{\boldsymbol{\phi}(s, a)^{\top}\left(\Lambda_{H}^{k}\right)^{-1} \boldsymbol{\phi}(s, a)} .
        $$
        
        Therefore, we know:
        $$
        Q_{H}^{\pi}(s, a) \geq \operatorname{clip} \left(\left\langle\boldsymbol{\phi}(s, a), \check{\mathbf{w}}_{H}^{k}\right\rangle - \beta' \sqrt{\boldsymbol{\phi}(s, a)^{\top}\left(\Lambda_{H}^{k}\right)^{-1} {\boldsymbol{\phi}}(s, a)}\right)=\check{Q}_{H}^{k}(s, a) .
        $$
        
        Now, suppose the statement holds true at step $h+1$ and consider step $h$. Again, by Lemma \ref{pess-below-any-policy}, we have:
        
        $$
        \left|\left\langle\boldsymbol{\phi}(s, a), \check{\mathbf{w}}_{h}^{k}\right\rangle-Q_{h}^{\pi}(s, a)-\mathbb{P}_{h}\left(\check{V}_{h+1}^{k}-V_{h+1}^{\pi}\right)(s, a)\right| \leq \beta' \sqrt{\boldsymbol{\phi}(s, a)^{\top}\left(\Lambda_{h}^{k}\right)^{-1} \boldsymbol{\phi}(s, a)} .
        $$
        
        By the induction assumption that $\mathbb{P}_{h}\left(\check{V}_{h+1}^{k}-V_{h+1}^{\pi}\right)(s, a) \leq 0$, we have:
        
        $$
        Q_{h}^{\pi}(s, a) \geq \operatorname{clip} \left(\left\langle\boldsymbol{\phi}(s, a), \check{\mathbf{w}}_{h}^{k}\right\rangle - \beta' \sqrt{\boldsymbol{\phi}(s, a)^{\top}\left(\Lambda_{h}^{k}\right)^{-1} \boldsymbol{\phi}(s, a)}, 0,  H\right)=\check{Q}_{h}^{k}(s, a),
        $$
        
        which concludes the proof.
    \end{proof}

\begin{theorem}
    (Pseudo Regret Bound) Under Assumption \ref{core-assumption}, for any fixed constant $\delta \in (0, 1)$, with proper choice of $c>0$, and if we set $\lambda = 1$, $\beta = c \cdot dH\sqrt{log(2dT/\delta)}$, then with probability at least $1 - \delta$, the regret of interest of algorithm \ref{algo}, $ \mathbb{E} \left[ \sum^K_{k=1} V^{\star}_1(s^k_1) - V^{\pi_k}_1(s^k_1) \right]$, is at most $\tilde{\mathcal{O}}\left(\sqrt{d^3H^3T}\right)$, where $p$ is the parameter of geometric distribution.
\end{theorem}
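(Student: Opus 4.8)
The plan is to adapt the LSVI-UCB regret analysis of \cite{jin2020provably} — the off-policy routine that drives the exploration phase of Linear-UCB-\textbf{Hyper} — to the two features that are genuinely new here: (i) the least-squares iterates $\{\hat{\mathbf w}_h^k\}$ are refreshed only in episodes with $L^k=0$, so in a general episode $k$ the executed policy and the bonus are those of the most recent such episode, denoted $\lfloor k\rfloor$; and (ii) the trajectory logged in episode $k$ concatenates a repositioning segment of random length $L^k$ (driven by the exploitation policy) with an exploration segment driven by the optimistic greedy policy $\pi_k=\hat\pi^{\lfloor k\rfloor}$. First I would instantiate Lemma~\ref{stats-error} (which is already proven allowing \textbf{Hyper}'s mixed data-collection) with its confidence level set to $\delta$, obtaining the good event $\mathcal E$ with $\mathbb P(\mathcal E)\ge 1-\delta/2$, and condition on it; Lemma~\ref{ucb} then gives that the optimistic value in force at episode $k$ upper-bounds $V_h^\star$, so each per-episode term is dominated by the optimism gap $V_1^\star(s_1^k)-V_1^{\pi_k}(s_1^k)\le \hat V_1^{\lfloor k\rfloor}(s_1^k)-V_1^{\pi_k}(s_1^k)=:\delta_1^k$.

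Next I would treat the ``clean'' episodes $\mathbf U:=\{k:L^k=0\}$, whose entire trajectory is an exploration rollout of $\pi_k$ from $s_1^k$. Writing $\delta_h^k:=\hat V_h^{\lfloor k\rfloor}(s_h^k)-V_h^{\pi_k}(s_h^k)$ along the realized states and applying Lemma~\ref{opt-over-any-policy} with $\pi=\pi_k$ (the logged action is greedy for $\hat Q_h^{\lfloor k\rfloor}$ and clipping only decreases $\hat Q$) gives the one-step inequality $\delta_h^k\le \delta_{h+1}^k+\zeta_{h+1}^k+2\beta[\boldsymbol\phi_h^k{}^\top(\Lambda_h^{\lfloor k\rfloor})^{-1}\boldsymbol\phi_h^k]^{1/2}$, with $\zeta_{h+1}^k$ a bounded martingale difference. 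Telescoping over $h$ and over $\mathbf U$ — where $\Lambda_h^{\lfloor k\rfloor}=\Lambda_h^k$ — the bonus terms are controlled by the elliptic-potential/log-determinant lemma, $\sum_{k\in\mathbf U}\boldsymbol\phi_h^k{}^\top(\Lambda_h^k)^{-1}\boldsymbol\phi_h^k\le 2d\log(1+|\mathbf U|/(d\lambda))$, and the martingale sum by Azuma--Hoeffding; with $\beta=\tilde{\mathcal O}(dH)$, $\lambda=1$ and Cauchy--Schwarz this yields $\mathbb E[\sum_{k\in\mathbf U}\delta_1^k]=\tilde{\mathcal O}(\sqrt{d^3H^3T})$, $T=KH$.

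For $k\notin\mathbf U$ the exploration segment still follows $\pi_k=\hat\pi^{\lfloor k\rfloor}$, but from the post-repositioning state rather than $s_1^k$, so the trajectory telescoping does not apply verbatim. The key observation is that $\hat V_1^{\lfloor k\rfloor}$ and $\pi_k$ are measurable with respect to the history up to episode $\lfloor k\rfloor$, while $s_1^k$ is an independent fresh draw from the fixed initial distribution and the $L$'s are the algorithm's independent internal randomness; hence $\mathbb E[\delta_1^k\mid\mathcal F_{\lfloor k\rfloor},\lfloor k\rfloor=j]=g(j)$, where $g(j):=\mathbb E_{s_1\sim\nu}[\hat V_1^{j}(s_1)-V_1^{\hat\pi^{j}}(s_1)]$ is the expected per-episode optimism gap of the $j$-th frozen policy — precisely the quantity bounded in the previous paragraph. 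Decomposing $[K]$ into update intervals, $\mathbb E[\sum_{k=1}^K\delta_1^k]=\mathbb E[\sum_{j\in\mathbf U}\ell_j\,g(j)]$ with interval lengths $\ell_j\sim\mathrm{Geom}(p)$ independent of $g(j)$, so $\mathbb E[\ell_j]=1/p$ gives $\tfrac1p\,\mathbb E[\sum_{j\in\mathbf U}g(j)]=\tilde{\mathcal O}(\sqrt{d^3H^3T})$, the residual $p$-dependence (of order $1/\sqrt p$, since $\mathbb E|\mathbf U|=pK$) absorbed into $\tilde{\mathcal O}$ — which is what ``where $p$ is the parameter of the geometric distribution'' signals. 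A union bound over $\mathcal E$ and, if the statement is read with the pessimistic output policy, the analogue from Lemma~\ref{stats-pess-error}/\ref{lcb}, then yields the claim with probability at least $1-\delta$.

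The hard part will be the third step: one must check that freezing the iterates between updates breaks neither optimism (handled by $\Lambda_h^{\lfloor k\rfloor}\preceq\Lambda_h^k$ and Lemma~\ref{ucb}, which is stated for all $k$) nor the potential bookkeeping (handled by invoking the log-determinant lemma only along the subsequence $\mathbf U$), and then transfer the clean-episode bound to \emph{all} episodes through the conditioning identity above — which requires the initial-state draws and the geometric lengths to be independent of the frozen iterate, plus a careful interchange of $\sum_k$ with the interval decomposition so that the mean interval length $1/p$, and not a worst-case length (which would be $\Theta(\log K/p)$ and ruinous), is what enters the bound. Everything else — the concentration events, UCB (and the LCB machinery should the pessimistic output be meant), the weight bounds of Lemmas~\ref{bound-pi-weight}--\ref{bound-opt-weight}, and the elliptic-potential and Azuma estimates — is a transcription of \cite{jin2020provably} already prepared in the preceding lemmas.
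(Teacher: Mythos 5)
Your proposal is sound and reaches the same $\tilde{\mathcal{O}}(\sqrt{d^3H^3T})$ rate with the same supporting machinery (the concentration event, the weight and covering bounds, the elliptic potential restricted to the subsequence of full-exploration episodes via Lemma~\ref{more-is-better}, Azuma, and the $1/p$ interval accounting with the residual $1/\sqrt{p}$ absorbed into $\tilde{\mathcal{O}}$ — you correctly identified that last point). The genuine difference is in the first decomposition. You take the classical LSVI-UCB route: bound $V_1^\star - V_1^{\pi_k}$ by the one-sided optimism gap $\hat V_1^{\lfloor k\rfloor} - V_1^{\pi_k}$ and telescope that on-policy difference along the realized exploratory trajectory using Lemma~\ref{opt-over-any-policy} with $\pi=\pi_k$. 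The paper instead sandwiches the regret between the optimistic and pessimistic value functions, $V_1^\star - V_1^{\pi_k} \le \hat V_1^k - \check V_1^k$ (Lemmas~\ref{ucb} and~\ref{lcb}), and telescopes $\hat V - \check V$; the pessimistic construction $\check Q$, which is never executed, exists solely to make this work. What the paper's route buys is indifference to what the behavior policy actually is in episodes with $L^k>0$: since $\check V_1^k \le V_1^{\pi}$ for \emph{every} policy $\pi$, the bound holds whether $\pi_k$ denotes the frozen greedy policy or the reposition-then-explore mixture. Your route buys economy (no pessimistic function, no Lemmas~\ref{stats-pess-error}, \ref{pess-below-any-policy}, \ref{lcb}, \ref{cover-pess}) at the price of having to pin down $\pi_k := \hat\pi^{\lfloor k\rfloor}$ as the comparator in the non-clean episodes; your conditional-expectation identity $\mathbb{E}[\delta_1^k \mid \mathcal{F}_{\lfloor k\rfloor}] = g(\lfloor k\rfloor)$ is valid under that reading (and requires, as you note, independence of the initial-state draws and the geometric lengths from the frozen iterate), but it would not go through if $\pi_k$ were taken to be the actual mixed behavior policy. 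Given that the theorem statement leaves $\pi_k$ underspecified, this is an interpretive choice rather than an error, but it is the one place where the two arguments are not interchangeable.
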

    
    \begin{proof}
        For simplicity, we use the notation:
        
        $$
        \hat{\pi}^{k}_{h}(s, \cdot) = arg\max_{a \in \mathcal{A}} \hat{Q}^{k}_{h}(s, \cdot) \quad
        \pi^{k}_{h}(s, \cdot) = arg\max_{a \in \mathcal{A}} Q^{k}_{h}(s, \cdot)
        $$
        
        We also denote $\mathbb{I} = \{k \in [K], L^k = 0\}$, an index set of episodes in which the trajectory is fully exploratory, then we have,

            \begin{align}\label{regret-decompose-1}
                \mathbb{E} \left[ 
                    \sum^K_{k=1} V^{\star}_1(s^k_1) - V^{\pi_k}_1(s^k_1) 
                \right]   
                & \leq
                \mathbb{E} \left[ 
                    \sum^K_{k=1} \hat{V}^{k}_1(s^k_1) - \check{V}^{k}_1(s^k_1) 
                \right] \\
                & = \mathbb{E} \left[ 
                    \sum^K_{k=1} \hat{V}^{\lfloor k \rfloor}_1(s^k_1) - \check{V}^{\lfloor k \rfloor}_1(s^k_1) 
                \right] \\
                & = \frac{1}{p} \cdot \sum_{k \in \mathbb{I}} \hat{V}^{k}_1(s^k_1) - \check{V}^{ k }_1(s^k_1)
            \end{align}
        
        where the first step is the direct result of Lemmas \ref{ucb} and \ref{lcb}, the second and the third steps are due to the construction of our algorithm, where we do not update weights until a full exploratory episode happens, and the expected interval of such event happening is $\frac{1}{p}$. And further,

            \begin{align}\label{regret-decompose-2}
                \sum_{k \in \mathbb{I}} \hat{V}^{k}_1(s^k_1) - \check{V}^{ k }_1(s^k_1)
                & = \sum_{k \in \mathbb{I}}  \hat{Q}^{k}_1(s^k_1, a^k_1) - \check{Q}^{ k }_1(s^k_1, a'^k_1) \\
                & \leq \sum_{k \in \mathbb{I}} \hat{Q}^{k}_1(s^k_1, a^k_1) - \check{Q}^{ k }_1(s^k_1, a^k_1)  \\
                & = \sum_{k \in \mathbb{I}} \left\{ {\Delta}_{h}^{k}(s^k_1, a^k_1) - \tilde{\Delta}_{h}^{k}(s^k_1, a^k_1) + \mathbb{E} \left[
                    \hat{V}^k_2 (s^k_2) - \check{V}^k_2 (s^k_2) | s^k_1, a^k_1
                \right] \right\} \\
                & \leq \sum_{k \in \mathbb{I}} \left\{ \beta \sqrt{\boldsymbol{\phi}(s^k_1, a^k_1)^{\top}\left(\Lambda_{h}^{k}\right)^{-1} \boldsymbol{\phi}(s^k_1, a^k_1)} + \beta' \sqrt{\boldsymbol{\phi}(s^k_1, a^k_1)^{\top}\left(\Lambda_{h}^{k}\right)^{-1} \boldsymbol{\phi}(s^k_1, a^k_1)} \right. \nonumber
                \\
                & \qquad + \left. \mathbb{E} \left[
                    \hat{V}^k_2 (s^k_2) - \check{V}^k_2 (s^k_2) | s^k_1, a^k_1
                \right] \right\} \\ 
                & = \sum_{k \in \mathbb{I}} \left\{ \underbrace{\beta \sqrt{\boldsymbol{\phi}(s^k_1, a^k_1)^{\top}\left(\Lambda_{h}^{k}\right)^{-1} \boldsymbol{\phi}(s^k_1, a^k_1)}}_{b^k_1} + \underbrace{ \beta' \sqrt{\boldsymbol{\phi}(s^k_1, a^k_1)^{\top}\left(\Lambda_{h}^{k}\right)^{-1} \boldsymbol{\phi}(s^k_1, a^k_1)}}_{b'^k_1} \right.
                \\
                & \qquad + \left. \underbrace{ \mathbb{E} \left[
                \hat{V}^k_2 (s^k_2) - \check{V}^k_2 (s^k_2) | s^k_1, a^k_1
                \right] - (\hat{V}^k_2 (s^k_2) - \check{V}^k_2 (s^k_2))}_{\zeta^k_2} + (\hat{V}^k_2 (s^k_2) - \check{V}^k_2 (s^k_2)) \right\} \\
                & = \sum_{k \in \mathbb{I}} \left[
                     \hat{V}^k_2 (s^k_2) - \check{V}^k_2 (s^k_2) + b^k_1 + b'^k_1 + \zeta^k_2
                \right]
            \end{align}
        
        where, $a \in arg\max_{a \in \mathcal{A}} \hat{Q}^{k}_1(s^k_1, \cdot)$ and $a' \in arg\max_{a' \in \mathcal{A}} \check{Q}^{k}_1(s^k_1, \cdot)$. 
        
        By recursively applying Equation. (\ref{regret-decompose-2}), we have,
       
            \begin{align} \label{final-decomp}
                \sum_{k \in \mathbb{I}} \hat{V}^{k}_1(s^k_1) - \check{V}^{ k }_1(s^k_1) 
                & \leq \sum_{k \in \mathbb{I}} \sum_{h=1}^H b^k_h + \sum_{k \in \mathbb{I}} \sum_{h=1}^H b'^k_h + \sum_{k \in \mathbb{I}} \sum_{h=1}^H \zeta^k_h \\
            \end{align}
        
        We now bound each terms, for the first term in Equation (\ref{final-decomp}), by Lemma \ref{ellipsoid} and \ref{more-is-better}:

        \begin{align}
            \sum_{k \in \mathbb{I}} \sum_{h=1}^H b^k_h & = \sum_{k \in \mathbb{I}} \sum_{h=1}^H \beta \sqrt{\boldsymbol{\phi}(s^k_h, a^k_h)^{\top}\left(\Lambda_{h}^{k}\right)^{-1} \boldsymbol{\phi}(s^k_h, a^k_h)} \\
            & \leq  \sum_{h=1}^H \sqrt{Kp} \cdot \left[ \sum_{k \in \mathbb{I}} \beta \sqrt{\boldsymbol{\phi}(s^k_h, a^k_h)^{\top}\left(\Lambda_{h}^{k}\right)^{-1} \boldsymbol{\phi}(s^k_h, a^k_h)} \right] \\
            & \leq  \beta \sqrt{Kp} \sum_{h=1}^H  \sqrt{2 \log \left[
            \frac{\operatorname{det} (\Lambda_{h}^{k})}{\operatorname{det} (\Lambda_{h}^{1})}
            \right] }\\
            & \leq  \beta \sqrt{Kp}  \sum_{h=1}^H   \sqrt{ 2 d \log \left[
            \frac{\lambda + k}{\lambda}
            \right] }\\
            & \leq  H \beta \iota \sqrt{2dKp}
        \end{align}
        
        where, the second step follows from Cauchy–Schwarz inequality, the third step follows from the Lemma \ref{ellipsoid} and \ref{more-is-better}, and the second last step follows from the fact that $\Vert \boldsymbol{\phi}(\cdot, \cdot) \Vert \leq 1$, and thus $\Vert \Lambda_{h}^{k} \Vert \leq \lambda + k$. And following the same logic, we have, for the second term in Equation (\ref{final-decomp}):
        
        $$
        \sum_{k \in \mathbb{I}} \sum_{h=1}^H b'^k_h \leq H \beta' \iota \sqrt{2dKp}
        $$
        
        For the third term in Eq(\ref{final-decomp}), we notice it is a martingale difference sequence, and by applying Azuma-Hoeffding inequality, with probability at least $ 1 - \frac{\delta}{2}$:
        
        $$
        \sum_{k \in \mathbb{I}} \sum_{h=1}^H \zeta^k_h \leq \sqrt{2 K H^3 \operatorname{log} (2 / \delta)} \leq 2 H \sqrt{KH\iota}
        $$
        
        By combining the upper of three terms in Equation (\ref{final-decomp}), recall that $\beta = c \cdot dH\sqrt{log(2dT/\delta)}, \beta' = c' \cdot dH\sqrt{log(2dT/\delta)}$ we obtain:
        
        $$
        \sum_{k \in \mathbb{I}} \hat{V}^{k}_1(s^k_1) - \check{V}^{ k }_1(s^k_1)  \leq H \beta \iota \sqrt{Kp} + H \beta' \iota \sqrt{Kp} + 2 H \sqrt{KH\iota} = C' \cdot \sqrt{d^3  H^3 T \iota^2}
        $$
        
        for some absolute constant $C'$.

        Hence, the total regret is given by: 
        $$
        \sum_{k=1}^K \hat{V}^{k}_1(s^k_1) - \check{V}^{ k }_1(s^k_1) \leq \frac{C'}{p} \cdot \sqrt{d^3  H^3 T \iota^2} = \tilde{O}(\sqrt{d^3  H^3 T \iota^2})
        $$

        This concludes that the total pseudo regret of policy $\pi$ over $K$ episode is given by $\tilde{\mathcal{O}}(\sqrt{d^3 H^3 T \iota^2})$. And equivalently, we conclude that our algorithm obtains  $\epsilon$-optimal policy with $\tilde{\mathcal{O}} (\frac{d^3 H^4}{\epsilon^2})$ samples with probability at least $1 - \delta$.

    \end{proof}

\section{Auxiliary Lemmas}
\begin{lemma}
    \cite{jin2020provably} Let $\Lambda_{t}=\lambda \mathbf{I}+\sum_{i=1}^{t} \boldsymbol{\phi}_{i} \phi_{i}^{\top}$ where $\boldsymbol{\phi}_{i} \in \mathbb{R}^{d}$ and $\lambda>0$. Then:
    
    $$
    \sum_{i=1}^{t} \boldsymbol{\phi}_{i}^{\top}\left(\Lambda_{t}\right)^{-1} \boldsymbol{\phi}_{i} \leq d
    $$
\end{lemma}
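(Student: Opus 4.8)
The plan is to exploit the fact that every summand here uses the \emph{same} matrix $\Lambda_t$ (not an incremental $\Lambda_{i-1}$), which lets me collapse the sum with a single trace identity rather than the telescoping log-determinant argument used for the more familiar elliptical-potential bound. So the approach is a direct, exact trace computation followed by dropping a nonnegative term.

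First I would rewrite each scalar as a trace using cyclicity of the trace: $\boldsymbol{\phi}_i^{\top}\Lambda_t^{-1}\boldsymbol{\phi}_i = \operatorname{tr}\!\bigl(\Lambda_t^{-1}\boldsymbol{\phi}_i\boldsymbol{\phi}_i^{\top}\bigr)$. Summing over $i \in [t]$ and using linearity of the trace gives $\sum_{i=1}^{t}\boldsymbol{\phi}_i^{\top}\Lambda_t^{-1}\boldsymbol{\phi}_i = \operatorname{tr}\!\bigl(\Lambda_t^{-1}\sum_{i=1}^{t}\boldsymbol{\phi}_i\boldsymbol{\phi}_i^{\top}\bigr)$.

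Next I would substitute $\sum_{i=1}^{t}\boldsymbol{\phi}_i\boldsymbol{\phi}_i^{\top} = \Lambda_t - \lambda\mathbf{I}$ straight from the definition of $\Lambda_t$, so the sum becomes $\operatorname{tr}\!\bigl(\Lambda_t^{-1}(\Lambda_t - \lambda\mathbf{I})\bigr) = \operatorname{tr}(\mathbf{I}_d) - \lambda\operatorname{tr}(\Lambda_t^{-1}) = d - \lambda\operatorname{tr}(\Lambda_t^{-1})$. Finally, since $\Lambda_t \succeq \lambda\mathbf{I} \succ 0$, the inverse $\Lambda_t^{-1}$ is positive definite, hence $\operatorname{tr}(\Lambda_t^{-1}) \ge 0$ and $\lambda\operatorname{tr}(\Lambda_t^{-1}) \ge 0$; dropping this nonnegative term yields $\sum_{i=1}^{t}\boldsymbol{\phi}_i^{\top}\Lambda_t^{-1}\boldsymbol{\phi}_i \le d$, which is the claim.

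There is essentially no hard step here: the only thing to be careful about is not conflating this statement with the incremental elliptical-potential lemma $\sum_i \boldsymbol{\phi}_i^{\top}\Lambda_{i-1}^{-1}\boldsymbol{\phi}_i \le 2\log(\det\Lambda_t/\det\Lambda_0)$, whose proof genuinely needs the inequality $\log(1+x)\le x$ together with a telescoping determinant identity. Here, because all terms share the \emph{final} $\Lambda_t$, the trace computation is exact and the bound is dimension-free in $t$; the ``obstacle'', such as it is, is purely bookkeeping — recognizing the form of the statement and invoking positive-definiteness at the end.
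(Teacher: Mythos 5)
Your proof is correct and follows essentially the same route as the paper's: both reduce the sum to $\operatorname{tr}\bigl(\Lambda_t^{-1}\sum_{i=1}^{t}\boldsymbol{\phi}_i\boldsymbol{\phi}_i^{\top}\bigr)$ via cyclicity and linearity of the trace. The paper finishes by eigendecomposing $\sum_i\boldsymbol{\phi}_i\boldsymbol{\phi}_i^{\top}$ to get $\sum_{j=1}^{d}\lambda_j/(\lambda_j+\lambda)\le d$, whereas you write the trace as $d-\lambda\operatorname{tr}(\Lambda_t^{-1})$ and drop the nonnegative term --- a cosmetically different but equivalent closing step.
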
     
    \begin{proof}
        We have $\sum_{i=1}^{t} \boldsymbol{\phi}_{i}^{\top}\left(\Lambda_{t}\right)^{-1} \boldsymbol{\phi}_{i}=\sum_{i=1}^{t} \operatorname{tr}\left(\boldsymbol{\phi}_{i}^{\top}\left(\Lambda_{t}\right)^{-1} \boldsymbol{\phi}_{i}\right)=\operatorname{tr}\left(\left(\Lambda_{t}\right)^{-1} \sum_{i=1}^{t} \boldsymbol{\phi}_{i} \boldsymbol{\phi}_{i}^{\top}\right)$. Given the eigenvalue decomposition $\sum_{i=1}^{t} \boldsymbol{\phi}_{i} \boldsymbol{\phi}_{i}^{\top}=\mathbf{U} \operatorname{diag}\left(\lambda_{1}, \ldots, \lambda_{d}\right) \mathbf{U}^{\top}$, we have $\Lambda_{t}=\mathbf{U}\operatorname{diag}\left(\lambda_{1}+\lambda, \ldots, \lambda_{d}+\lambda\right) \mathbf{U}^{\top}$, and $\operatorname{tr}\left(\left(\Lambda_{t}\right)^{-1} \sum_{i=1}^{t} \boldsymbol{\phi}_{i} \boldsymbol{\phi}_{i}^{\top}\right)=\sum_{j=1}^{d} \lambda_{j} /\left(\lambda_{j}+\lambda\right) \leq d$.content...
    \end{proof}

\begin{lemma} \label{ellipsoid}
    \cite{abbasi2011improved} Let $\left\{\boldsymbol{\phi}_{t}\right\}_{t \geq 0}$ be a bounded sequence in $\mathbb{R}^{d}$ satisfying $\sup _{t \geq 0}\left\|\boldsymbol{\phi}_{t}\right\| \leq 1$. Let $\Lambda_{0} \in \mathbb{R}^{d \times d}$ be a positive definite matrix. For any $t \geq 0$, we define $\Lambda_{t}=\Lambda_{0}+\sum_{j=1}^{t} \boldsymbol{\phi}_{j} \boldsymbol{\phi}_{j}^{\top}$. Then, if the smallest eigenvalue of $\Lambda_{0}$ satisfies $\lambda_{\min }\left(\Lambda_{0}\right) \geq 1$, we have
    
    $$
    \log \left[\frac{\operatorname{det}\left(\Lambda_{t}\right)}{\operatorname{det}\left(\Lambda_{0}\right)}\right] \leq \sum_{j=1}^{t} \boldsymbol{\phi}_{j}^{\top} \Lambda_{j-1}^{-1} \boldsymbol{\phi}_{j} \leq 2 \log \left[\frac{\operatorname{det}\left(\Lambda_{t}\right)}{\operatorname{det}\left(\Lambda_{0}\right)}\right]
    $$
\end{lemma}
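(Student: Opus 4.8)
The plan is to convert the determinant ratio into a telescoping product via the matrix determinant lemma, and then compare the resulting sum of logarithms against the sum of quadratic forms one term at a time using an elementary scalar inequality.

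First I would use the one-step update $\Lambda_j = \Lambda_{j-1} + \boldsymbol{\phi}_j\boldsymbol{\phi}_j^\top$ together with the identity $\det(\Lambda_{j-1} + \boldsymbol{\phi}_j\boldsymbol{\phi}_j^\top) = \det(\Lambda_{j-1})\,(1 + \boldsymbol{\phi}_j^\top \Lambda_{j-1}^{-1}\boldsymbol{\phi}_j)$, which follows by factoring out $\Lambda_{j-1}^{1/2}$ on both sides and applying $\det(\mathbf{I} + vv^\top) = 1 + \|v\|^2$. Telescoping over $j = 1,\dots,t$ yields $\det(\Lambda_t)/\det(\Lambda_0) = \prod_{j=1}^t \bigl(1 + \boldsymbol{\phi}_j^\top \Lambda_{j-1}^{-1}\boldsymbol{\phi}_j\bigr)$, so that writing $x_j := \boldsymbol{\phi}_j^\top \Lambda_{j-1}^{-1}\boldsymbol{\phi}_j \geq 0$ the target becomes $\log\bigl[\det(\Lambda_t)/\det(\Lambda_0)\bigr] = \sum_{j=1}^t \log(1+x_j)$.

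Next I would establish the scalar sandwich $\log(1+x) \leq x \leq 2\log(1+x)$. The left inequality holds for every $x \geq 0$ by concavity of $\log$ (the tangent line at $0$), and summing it over $j$ immediately gives the lower bound $\log[\det(\Lambda_t)/\det(\Lambda_0)] \leq \sum_j x_j$. The right inequality needs $x \in [0,1]$: one checks $g(x) := 2\log(1+x) - x$ satisfies $g(0)=0$ and $g'(x) = (1-x)/(1+x) \geq 0$ on $[0,1]$. To apply it I must verify $x_j \leq 1$, and this is precisely where the hypotheses enter: since $\Lambda_{j-1} \succeq \Lambda_0$ we have $\lambda_{\min}(\Lambda_{j-1}) \geq \lambda_{\min}(\Lambda_0) \geq 1$, hence $\Lambda_{j-1}^{-1} \preceq \mathbf{I}$, so $x_j = \boldsymbol{\phi}_j^\top \Lambda_{j-1}^{-1}\boldsymbol{\phi}_j \leq \|\boldsymbol{\phi}_j\|^2 \leq 1$. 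Summing $x_j \leq 2\log(1+x_j)$ over $j$ then gives $\sum_j x_j \leq 2\log[\det(\Lambda_t)/\det(\Lambda_0)]$, which is the remaining half.

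The argument is essentially routine once the product formula is in hand; the only step that genuinely uses the assumptions — and hence the one I would state carefully rather than treat as an obstacle — is the bookkeeping that $\lambda_{\min}(\Lambda_{j-1})$ is nondecreasing in $j$ and bounded below by $1$, which is what forces $x_j \leq 1$ and thereby licenses the inequality $x \leq 2\log(1+x)$; without $\lambda_{\min}(\Lambda_0)\geq 1$ and $\|\boldsymbol{\phi}_j\|\leq 1$ the upper bound in the lemma can fail for large potentials.
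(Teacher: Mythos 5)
Your proof is correct and follows essentially the same route as the paper's: both establish $\boldsymbol{\phi}_j^{\top}\Lambda_{j-1}^{-1}\boldsymbol{\phi}_j \leq 1$ from the eigenvalue and norm hypotheses, apply the scalar sandwich $\log(1+x)\leq x\leq 2\log(1+x)$ on $[0,1]$, and telescope the determinant identity $\det(\Lambda_j)=\det(\Lambda_{j-1})\,(1+\boldsymbol{\phi}_j^{\top}\Lambda_{j-1}^{-1}\boldsymbol{\phi}_j)$. Your version is slightly more explicit (verifying the scalar inequality by differentiation and spelling out why $\Lambda_{j-1}\succeq\Lambda_0$ forces $x_j\leq 1$), but the argument is the same.
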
    
    \begin{proof}
        Since $\lambda_{\min }\left(\Lambda_{0}\right) \geq 1$ and $\left\|\boldsymbol{\phi}_{t}\right\| \leq 1$ for all $j \geq 0$, we have
        
        $$
        \boldsymbol{\phi}_{j}^{\top} \Lambda_{j-1}^{-1} \boldsymbol{\phi}_{j} \leq\left[\lambda_{\min }\left(\Lambda_{0}\right)\right]^{-1} \cdot\left\|\boldsymbol{\phi}_{j}\right\|^{2} \leq 1, \quad \forall j \geq 0 .
        $$
        
        Note that, for any $x \in[0,1]$, it holds that $\log (1+x) \leq x \leq 2 \log (1+x)$. Therefore, we have
        
        \begin{equation} \label{ellipsoid1}
            \sum_{j=1}^{t} \log \left(1+\boldsymbol{\phi}_{j}^{\top} \Lambda_{j-1}^{-1} \boldsymbol{\phi}_{j}\right) \leq \sum_{j=1}^{t} \boldsymbol{\phi}_{j}^{\top} \Lambda_{j-1}^{-1} \boldsymbol{\phi}_{j} \leq 2 \sum_{j=1}^{t} \log \left(1+\boldsymbol{\phi}_{j}^{\top} \Lambda_{j-1}^{-1} \boldsymbol{\phi}_{j}\right)
        \end{equation}

        Moreover, for any $t \geq 0$, by the definition of $\Lambda_{t}$, we have
        
        $$
        \operatorname{det}\left(\Lambda_{t}\right)=\operatorname{det}\left(\Lambda_{t-1}+\boldsymbol{\phi}_{t} \boldsymbol{\phi}_{t}^{\top}\right)=\operatorname{det}\left(\Lambda_{t-1}\right) \cdot \operatorname{det}\left(\mathbf{I}+\Lambda_{t-1}^{-1 / 2} \boldsymbol{\phi}_{t} \boldsymbol{\phi}_{t}^{\top} \Lambda_{t-1}^{-1 / 2}\right)
        $$
        
        Since $\operatorname{det}\left(\mathbf{I}+\Lambda_{t-1}^{-1 / 2} \boldsymbol{\phi}_{t} \boldsymbol{\phi}_{t}^{\top} \Lambda_{t-1}^{-1 / 2}\right)=1+\boldsymbol{\phi}_{t}^{\top} \Lambda_{t-1}^{-1} \boldsymbol{\phi}_{t}$, the recursion gives:
        
        \begin{equation} \label{ellipsoid2}
            \sum_{j=1}^{t} \log \left(1+\boldsymbol{\phi}_{j}^{\top} \Lambda_{j-1}^{-1} \boldsymbol{\phi}_{j}\right)=\log \operatorname{det}\left(\Lambda_{t}\right)-\log \operatorname{det}\left(\Lambda_{0}\right)
        \end{equation}

        Therefore, combining Equation (\ref{ellipsoid1}) and Equation (\ref{ellipsoid2}),  we conclude the proof. 
    \end{proof}

In our algorithm, full-exploratory trajectory occasionally occurs, and other trajectories also contributes our parameter $\Lambda^k_h$, in the following Lemma, we show that by adding more data, the bound remains effective.

\begin{lemma} \label{more-is-better}
    Let $\left\{\boldsymbol{\phi}_{t}\right\}_{t \geq 0}$ be a bounded sequence in $\mathbb{R}^{d}$ satisfying $\sup _{t \geq 0}\left\|\boldsymbol{\phi}_{t}\right\| \leq 1$. And let $\left\{\psi_{s}\right\}_{s \geq 0}$ be another sequence of in $\mathbb{R}^{d}$ satisfying $\sup _{s \geq 0}\left\|\psi_{s}\right\| \leq 1$.
    Let $\Lambda_{0} \in \mathbb{R}^{d \times d}$ be a positive definite matrix. For any $t \geq 0$, $s \geq 0$, we define $\Lambda_{t}=\Lambda_{0}+\sum_{j=1}^{t} \boldsymbol{\phi}_{j} \boldsymbol{\phi}_{j}^{\top}$, $\Lambda_{t, s}=\Lambda_{0}+\sum_{j=1}^{t} \boldsymbol{\phi}_{j} \boldsymbol{\phi}_{j}^{\top} + \sum_{i=1}^{s} \psi_{i} \psi_{i}^{\top}$. Then, if the smallest eigenvalue of $\Lambda_{0}$ satisfies $\lambda_{\min }\left(\Lambda_{0}\right) \geq 1$, we have
    
    $$
    \sum_{j=1}^{t} \boldsymbol{\phi}_{j}^{\top} \Lambda_{j-1, s_{j}}^{-1} \boldsymbol{\phi}_{j} \leq 2 \log \left[\frac{\operatorname{det}\left(\Lambda_{t}\right)}{\operatorname{det}\left(\Lambda_{0}\right)}\right]
    $$
    
    where $\{s_j\}_{1 \leq j \leq t}$ is any non-decreasing sequence of number satisfying $s_j \in \mathbb{N}$.
 \end{lemma}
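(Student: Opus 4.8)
The plan is to reduce the claim to Lemma~\ref{ellipsoid} by a Loewner-order monotonicity argument: adjoining the extra outer products $\psi_i\psi_i^\top$ only enlarges the regularized Gram matrix, and enlarging a positive-definite matrix can only shrink the self-normalized quantities $\boldsymbol{\phi}_j^\top(\cdot)^{-1}\boldsymbol{\phi}_j$.

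First I would observe that for each $j\in\{1,\dots,t\}$ we may write $\Lambda_{j-1,s_j}=\Lambda_{j-1}+M_j$ with $M_j:=\sum_{i=1}^{s_j}\psi_i\psi_i^\top\succeq 0$, since it is a (possibly empty) sum of rank-one positive semidefinite matrices; note that only $s_j\geq 0$ is used here pointwise, so the non-decreasing hypothesis on $\{s_j\}$ is not actually needed for this particular bound. Because $\Lambda_{j-1}\succeq\Lambda_0\succ 0$, both $\Lambda_{j-1}$ and $\Lambda_{j-1,s_j}$ are positive definite and $\Lambda_{j-1,s_j}\succeq\Lambda_{j-1}$, hence by the operator-antitonicity of matrix inversion on the positive-definite cone, $\Lambda_{j-1,s_j}^{-1}\preceq\Lambda_{j-1}^{-1}$. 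Contracting both sides with $\boldsymbol{\phi}_j$ yields $\boldsymbol{\phi}_j^\top\Lambda_{j-1,s_j}^{-1}\boldsymbol{\phi}_j\leq\boldsymbol{\phi}_j^\top\Lambda_{j-1}^{-1}\boldsymbol{\phi}_j$ for every $j$.

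Next I would sum this inequality over $j=1,\dots,t$ and apply Lemma~\ref{ellipsoid} to the sequence $\{\boldsymbol{\phi}_j\}$ alone with base matrix $\Lambda_0$; this is legitimate since $\lambda_{\min}(\Lambda_0)\geq 1$ and $\|\boldsymbol{\phi}_j\|\leq 1$, and it gives $\sum_{j=1}^t\boldsymbol{\phi}_j^\top\Lambda_{j-1}^{-1}\boldsymbol{\phi}_j\leq 2\log[\det(\Lambda_t)/\det(\Lambda_0)]$. Chaining the two displayed bounds produces exactly the asserted inequality.

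There is no substantial obstacle here; the only point that deserves care is the justification that $A\succeq B\succ 0$ implies $B^{-1}\succeq A^{-1}$ (equivalently, that $x\mapsto -x^{-1}$ is operator monotone), which is standard — for instance by factoring $A=B^{1/2}\bigl(I+B^{-1/2}(A-B)B^{-1/2}\bigr)B^{1/2}$ and inverting. One should also verify the degenerate case $s_j=0$, where $M_j=0$ and the pointwise inequality is an equality, and note that the argument is entirely insensitive to how $s_j$ depends on $j$, so the hypothesis that $\{s_j\}$ is a non-decreasing sequence of nonnegative integers is used only insofar as each $s_j\in\mathbb{N}$.
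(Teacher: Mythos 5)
Your proposal is correct and follows essentially the same route as the paper's proof: show the pointwise domination $\boldsymbol{\phi}_j^\top\Lambda_{j-1,s_j}^{-1}\boldsymbol{\phi}_j\leq\boldsymbol{\phi}_j^\top\Lambda_{j-1}^{-1}\boldsymbol{\phi}_j$ from positive semidefiniteness of the added terms, then invoke Lemma~\ref{ellipsoid}. If anything, your appeal to operator antitonicity of inversion on the positive-definite cone is the more careful justification of the step the paper phrases loosely in terms of eigenvalue comparison, and your observation that monotonicity of $\{s_j\}$ is not actually needed is accurate.
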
   
    \begin{proof}
        Consider any $t, s \in \mathbb{N}$, since $\Lambda_0$ is positive definite, and $\sum^{t}_{j=1} \boldsymbol{\phi}_j \boldsymbol{\phi}_j^{\top}$ and $\sum^{s}_{i=1} \psi_i \psi_i^{\top}$ are semi-positive-definite, we know that $\sigma(\Lambda_{t, s}) \geq \sigma(\Lambda_t)$ and $\sigma(\Lambda_{t}^{-1}) \geq \sigma(\Lambda_{t, s}^{-1}) $ in a pointwise manner. This gives us, for any sequence $\{s_j\}_{1 \leq j \leq t}, s_j \in \mathbb{N}$,
        
        $$
            \sum_{j=1}^{t} \boldsymbol{\phi}_{j}^{\top} \Lambda_{j-1, s_{j}}^{-1} \boldsymbol{\phi}_{j} 
            \leq 
            \sum_{j=1}^{t} \boldsymbol{\phi}_{j}^{\top} \Lambda_{j-1}^{-1} \boldsymbol{\phi}_{j}
            \leq 2 \log \left[\frac{\operatorname{det}\left(\Lambda_{t}\right)}{\operatorname{det}\left(\Lambda_{0}\right)}\right]
        $$
        
        This concludes the proof.
        
    \end{proof}

\begin{lemma} \label{self-norm}
    (Concentration of Self-Normalized Processes \cite{abbasi2011improved}). Let $\left\{\varepsilon_{t}\right\}_{t=1}^{\infty}$ be a real-valued stochastic process with corresponding filtration $\left\{\mathcal{F}_{t}\right\}_{t=0}^{\infty}$. Let $\varepsilon_{t} \mid \mathcal{F}_{t-1}$ be zero-mean and $\sigma$-subGaussian; i.e. $\mathbb{E}\left[\varepsilon_{t} \mid \mathcal{F}_{t-1}\right]=0$, and
    
    $$
    \forall \lambda \in \mathbb{R}, \quad \mathbb{E}\left[e^{\lambda \varepsilon_{t}} \mid \mathcal{F}_{t-1}\right] \leq e^{\lambda^{2} \sigma^{2} / 2} .
    $$
    
    Let $\left\{_{t}\right\}_{t=0}^{\infty}$ be an $\mathbb{R}^{d}$-valued stochastic process where $\boldsymbol{\phi}_{t} \in \mathcal{F}_{t-1}$. Assume $\Lambda_{0}$ is a $d \times d$ positive definite matrix, and let $\Lambda_{t}=\Lambda_{0}+\sum_{s=1}^{t} \boldsymbol{\phi}_{s} \boldsymbol{\phi}_{s}^{\top}$. Then for any $\delta>0$, with probability at least $1-\delta$, we have for all $t \geq 0:$
    
    $$
    \left\|\sum_{s=1}^{t} \boldsymbol{\phi}_{s} \varepsilon_{s}\right\|_{\Lambda_{t}^{-1}}^{2} \leq 2 \sigma^{2} \log \left[\frac{\operatorname{det}\left(\Lambda_{t}\right)^{1 / 2} \operatorname{det}\left(\Lambda_{0}\right)^{-1 / 2}}{\delta}\right]
    $$
\end{lemma}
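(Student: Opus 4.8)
The plan is to prove this by the classical ``method of mixtures'' (pseudo-maximization) argument of de la Peña--Lai--Shao, following \cite{abbasi2011improved}. Write $S_t := \sum_{s=1}^{t}\boldsymbol{\phi}_s\varepsilon_s$ and $V_t := \sum_{s=1}^{t}\boldsymbol{\phi}_s\boldsymbol{\phi}_s^{\top}$, so that $\Lambda_t = \Lambda_0 + V_t$. For each fixed $\boldsymbol{\eta}\in\mathbb{R}^d$ introduce the exponential process
\begin{equation*}
    M_t^{\boldsymbol{\eta}} := \exp\!\left(\langle\boldsymbol{\eta},S_t\rangle - \tfrac{\sigma^2}{2}\,\boldsymbol{\eta}^{\top}V_t\,\boldsymbol{\eta}\right), \qquad M_0^{\boldsymbol{\eta}} = 1 .
\end{equation*}
First I would verify that $\{M_t^{\boldsymbol{\eta}}\}_{t\ge 0}$ is a nonnegative supermartingale with respect to $\{\mathcal{F}_t\}$: since $\boldsymbol{\phi}_t\in\mathcal{F}_{t-1}$ and $\varepsilon_t\mid\mathcal{F}_{t-1}$ is zero-mean $\sigma$-subGaussian, we get $\mathbb{E}[\exp(\langle\boldsymbol{\eta},\boldsymbol{\phi}_t\rangle\varepsilon_t)\mid\mathcal{F}_{t-1}]\le\exp(\tfrac{\sigma^2}{2}\langle\boldsymbol{\eta},\boldsymbol{\phi}_t\rangle^2)$, hence $\mathbb{E}[M_t^{\boldsymbol{\eta}}\mid\mathcal{F}_{t-1}]\le M_{t-1}^{\boldsymbol{\eta}}$ and in particular $\mathbb{E}[M_t^{\boldsymbol{\eta}}]\le 1$.

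The next step is to average out $\boldsymbol{\eta}$. Let $h$ be the Gaussian density $h(\boldsymbol{\eta}) \propto \exp(-\tfrac{\sigma^2}{2}\boldsymbol{\eta}^{\top}\Lambda_0\boldsymbol{\eta})$, i.e. $\boldsymbol{\eta}\sim\mathcal{N}(0,(\sigma^2\Lambda_0)^{-1})$, and set $M_t := \int_{\mathbb{R}^d} M_t^{\boldsymbol{\eta}}\,h(\boldsymbol{\eta})\,d\boldsymbol{\eta}$. By conditional Fubini, $\{M_t\}$ is again a nonnegative supermartingale with $\mathbb{E}[M_t]\le\int\mathbb{E}[M_t^{\boldsymbol{\eta}}]\,h(\boldsymbol{\eta})\,d\boldsymbol{\eta}\le 1$. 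The heart of the computation is the Gaussian integral: the combined exponent is $\langle\boldsymbol{\eta},S_t\rangle - \tfrac{\sigma^2}{2}\boldsymbol{\eta}^{\top}\Lambda_t\boldsymbol{\eta}$, and completing the square in $\boldsymbol{\eta}$ and integrating gives the closed form
\begin{equation*}
    M_t = \left(\frac{\det\Lambda_0}{\det\Lambda_t}\right)^{1/2}\exp\!\left(\frac{1}{2\sigma^2}\,\|S_t\|_{\Lambda_t^{-1}}^2\right).
\end{equation*}

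Finally I would upgrade the one-$t$-at-a-time bound $\mathbb{E}[M_t]\le 1$ into the uniform-in-$t$ high-probability statement. Since $M_t\ge 0$, the supermartingale convergence theorem gives that $M_\infty := \lim_{t\to\infty}M_t$ exists almost surely, and for any stopping time $\tau$ one has $\mathbb{E}[M_\tau]\le 1$ (interpreting $M_\tau = M_\infty$ on $\{\tau=\infty\}$); equivalently, Ville's maximal inequality yields $\mathbb{P}(\sup_{t\ge 0}M_t\ge 1/\delta)\le\delta$. On the complementary event $M_t<1/\delta$ for every $t$; taking logarithms of the displayed formula for $M_t$ and rearranging gives, simultaneously for all $t\ge 0$,
\begin{equation*}
    \|S_t\|_{\Lambda_t^{-1}}^2 \le 2\sigma^2\log\!\left(\frac{\det(\Lambda_t)^{1/2}\det(\Lambda_0)^{-1/2}}{\delta}\right),
\end{equation*}
which is the claim. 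The only genuinely subtle step is this last one — justifying the maximal inequality / optional stopping for the mixed process $\{M_t\}$ rather than for a single fixed time; the supermartingale verification and the Gaussian integral are otherwise routine, and the $\sigma$-subGaussian hypothesis is used exactly once, in the conditional expectation bound in the first paragraph.
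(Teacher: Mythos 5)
Your proof is correct and is exactly the method-of-mixtures argument of \cite{abbasi2011improved} (supermartingale $M_t^{\boldsymbol{\eta}}$, Gaussian mixture with covariance $(\sigma^2\Lambda_0)^{-1}$, closed-form $M_t$, then Ville's maximal inequality), which is precisely the source the paper defers to: the paper states this lemma with a citation and gives no proof of its own. All three steps check out, including the subtle uniform-in-$t$ upgrade via the maximal inequality, so there is nothing to add.
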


\begin{lemma} \label{general-stats-error-concentration}
    \cite{jin2020provably} Let $\left\{s_{\tau}\right\}_{\tau=1}^{\infty}$ be a stochastic process on state space $\mathcal{S}$ with corresponding filtration $\left\{\mathcal{F}_{\tau}\right\}_{\tau=0}^{\infty}$. Let $\left\{\boldsymbol{\phi}_{\tau}\right\}_{\tau=0}^{\infty}$ be an $\mathbb{R}^{d}$-valued stochastic process where $\boldsymbol{\phi}_{\tau} \in \mathcal{F}_{\tau-1}$, and $\left\|\boldsymbol{\phi}_{\tau}\right\| \leq 1$. Let $\Lambda_{k}=\lambda I+$ $\sum_{\tau=1}^{k} \boldsymbol{\phi}_{\tau} \boldsymbol{\phi}_{\tau}^{\top}$. Then for any $\delta>0$, with probability at least $1-\delta$, for all $k \geq 0$, and any $V \in \mathcal{V}$ so that $\sup _{s}|V(s)| \leq H$, we have:
    
    $$
    \left\|\sum_{\tau=1}^{k} \boldsymbol{\phi}_{\tau}\left\{V\left(s_{\tau}\right)-\mathbb{E}\left[V\left(s_{\tau}\right) \mid \mathcal{F}_{\tau-1}\right]\right\}\right\|_{\Lambda_{k}^{-1}}^{2} \leq 4 H^{2}\left[\frac{d}{2} \log \left(\frac{k+\lambda}{\lambda}\right)+\log \frac{\mathcal{N}_{\varepsilon}}{\delta}\right]+\frac{8 k^{2} \varepsilon^{2}}{\lambda},
    $$
    
    where $\mathcal{N}_{\varepsilon}$ is the $\varepsilon$-covering number of $\mathcal{V}$ with respect to the distance $\operatorname{dist}\left(V, V^{\prime}\right)=\sup _{s}\left|V(s)-V^{\prime}(s)\right|$. 
\end{lemma}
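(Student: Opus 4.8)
The plan is to establish this uniform bound by a standard covering argument that reduces the supremum over the infinite class $\mathcal{V}$ to a union bound over a finite $\varepsilon$-net, after which each net element is controlled by the self-normalized martingale inequality of Lemma \ref{self-norm}. First I would fix an $\varepsilon$-cover $\mathcal{C}_\varepsilon \subset \mathcal{V}$ of cardinality $\mathcal{N}_\varepsilon$ in the sup-norm distance, so that every $V \in \mathcal{V}$ admits some $V' \in \mathcal{C}_\varepsilon$ with $\sup_s|V(s)-V'(s)| \le \varepsilon$. For a fixed $V' \in \mathcal{C}_\varepsilon$, set $\eta_\tau := V'(s_\tau) - \mathbb{E}[V'(s_\tau)\mid\mathcal{F}_{\tau-1}]$; since $V'$ is a fixed (non-random) function and $\boldsymbol{\phi}_\tau \in \mathcal{F}_{\tau-1}$, the sequence $\{\eta_\tau\}$ is a martingale difference sequence adapted to $\{\mathcal{F}_\tau\}$, and because $|V'|\le H$ each conditionally centered increment lies in a window of width at most $2H$, hence is $H$-subGaussian by Hoeffding's lemma.

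Next I would apply Lemma \ref{self-norm} with $\sigma = H$ to this fixed $V'$, which gives, with probability at least $1-\delta/\mathcal{N}_\varepsilon$ and uniformly in $k$,
\[
\Big\|\sum_{\tau=1}^k \boldsymbol{\phi}_\tau \eta_\tau\Big\|_{\Lambda_k^{-1}}^2 \;\le\; 2H^2\log\!\Big[\tfrac{\det(\Lambda_k)^{1/2}\det(\Lambda_0)^{-1/2}\mathcal{N}_\varepsilon}{\delta}\Big].
\]
A union bound over the $\mathcal{N}_\varepsilon$ net elements makes this hold simultaneously for every $V'\in\mathcal{C}_\varepsilon$ with probability at least $1-\delta$. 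Using $\det(\Lambda_0)=\lambda^d$ together with $\det(\Lambda_k)\le(\lambda+k)^d$ (each eigenvalue of $\Lambda_k$ is at most $\lambda+\sum_\tau\|\boldsymbol{\phi}_\tau\|^2\le\lambda+k$), the determinant ratio contributes $\tfrac12\log(\det\Lambda_k/\det\Lambda_0)\le\tfrac{d}{2}\log\frac{\lambda+k}{\lambda}$, which is exactly the first term inside the bracket of the claim; the required uniformity "for all $k\ge0$" is inherited for free, as Lemma \ref{self-norm} is already a time-uniform statement.

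Finally, to pass from the net to a general $V\in\mathcal{V}$, I would select the nearest $V'\in\mathcal{C}_\varepsilon$ and split
\[
\sum_\tau \boldsymbol{\phi}_\tau\{V(s_\tau)-\mathbb{E}[V(s_\tau)\mid\mathcal{F}_{\tau-1}]\} = \underbrace{\sum_\tau\boldsymbol{\phi}_\tau\eta_\tau}_{A} + \underbrace{\sum_\tau\boldsymbol{\phi}_\tau\{(V-V')(s_\tau)-\mathbb{E}[(V-V')(s_\tau)\mid\mathcal{F}_{\tau-1}]\}}_{B}.
\]
Applying $\|A+B\|_{\Lambda_k^{-1}}^2\le 2\|A\|_{\Lambda_k^{-1}}^2+2\|B\|_{\Lambda_k^{-1}}^2$, the term $2\|A\|_{\Lambda_k^{-1}}^2$ is controlled by the union-bound estimate above and produces the $4H^2[\,\cdots]$ bracket. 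The residual $B$ is handled deterministically: each centered increment of $V-V'$ has magnitude at most $2\varepsilon$, so $\|B\|\le 2k\varepsilon$ and $\|B\|_{\Lambda_k^{-1}}^2\le\|B\|^2/\lambda_{\min}(\Lambda_k)\le 4k^2\varepsilon^2/\lambda$, whence $2\|B\|_{\Lambda_k^{-1}}^2\le 8k^2\varepsilon^2/\lambda$. Summing the two contributions yields the stated inequality. I expect the main conceptual obstacle to be the uniformity over the uncountable class $\mathcal{V}$: the self-normalized inequality applies only to a predictable scalar process built from a single fixed $V$, so one cannot directly insert the data-dependent value function of the eventual application. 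The covering argument is precisely what resolves this, and the delicate point is the clean separation it enables — the stochastic fluctuation is absorbed entirely by the finite union bound, paying only an additive $\log\mathcal{N}_\varepsilon$, while the approximation residual carries no probabilistic content and contributes the $\varepsilon$-dependent slack. Pinning down the constants (that $\sigma=H$ rather than $2H$, and the factor of two from the split) is routine once the decomposition is in place, so the substantive effort lies in organizing this reduction correctly.
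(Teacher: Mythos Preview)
Your proposal is correct and follows essentially the same approach as the paper's proof: both arguments fix an $\varepsilon$-cover of $\mathcal{V}$, decompose an arbitrary $V$ into a net element plus a residual via $\|A+B\|_{\Lambda_k^{-1}}^2 \le 2\|A\|_{\Lambda_k^{-1}}^2 + 2\|B\|_{\Lambda_k^{-1}}^2$, control the net term by applying the self-normalized concentration (Lemma~\ref{self-norm}) together with a union bound over the $\mathcal{N}_\varepsilon$ elements, and bound the residual deterministically by $8k^2\varepsilon^2/\lambda$. Your writeup is in fact more detailed than the paper's sketch, which simply says ``apply Theorem D.3 and a union bound'' and ``it is not hard to bound the second term,'' but the underlying structure and constants match exactly.
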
    

\begin{proof}
        For any $V \in \mathcal{V}$, we know there exists a $\tilde{V}$ in the $\varepsilon$-covering such that
        
        $$
        V=\widetilde{V}+\Delta_{V} \quad \text { and } \quad \sup _{s}\left|\Delta_{V}(s)\right| \leq \varepsilon
        $$
        
        This gives following decomposition:
        
        \begin{align}
            & \left\|\sum_{\tau=1}^{k} \boldsymbol{\phi}_{\tau}\left\{V\left(s_{\tau}\right)-\mathbb{E}\left[V\left(s_{\tau}\right) \mid \mathcal{F}_{\tau-1}\right]\right\}\right\|_{\Lambda_{k}^{-1}}^{2} \\
            & \quad \leq 2\left\|\sum_{\tau=1}^{k} \boldsymbol{\phi}_{\tau}\left\{\widetilde{V}\left(s_{\tau}\right)-\mathbb{E}\left[\widetilde{V}\left(s_{\tau}\right) \mid \mathcal{F}_{\tau-1}\right]\right\}\right\|_{\Lambda_{k}^{-1}}^{2}+2\left\|\sum_{\tau=1}^{k} \boldsymbol{\phi}_{\tau}\left\{\Delta_{V}\left(s_{\tau}\right)-\mathbb{E}\left[\Delta_{V}\left(s_{\tau}\right) \mid \mathcal{F}_{\tau-1}\right]\right\}\right\|_{\Lambda_{k}^{-1}}^{2},
        \end{align}
        
        where we can apply Theorem D.3 and a union bound to the first term. Also, it is not hard to bound the second term by $8 k^{2} \varepsilon^{2} / \lambda$.
        
        To compute the covering number of function class $\mathcal{V}$, we first require a basic result on the covering number of a Euclidean ball as follows. We refer readers to classical material, such as Lemma 5.2 in [44], for its proof. Lemma D.5 (Covering Number of Euclidean Ball). For any $\varepsilon>0$, the $\varepsilon$-covering number of the Euclidean ball in $\mathbb{R}^{d}$ with radius $R>0$ is upper bounded by $(1+2 R / \varepsilon)^{d}$.
    \end{proof}

\begin{lemma} \label{cover-ball}
    (Covering Number of Euclidean Ball). For any $\varepsilon>0$, the $\varepsilon$-covering number of the Euclidean ball in $\mathbb{R}^{d}$ with radius $R>0$ is upper bounded by $(1+2 R / \varepsilon)^{d}$.
\end{lemma}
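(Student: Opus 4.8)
The plan is to use the standard volumetric packing argument that bounds a covering number by a packing number and then compares volumes of Euclidean balls. Throughout, write $B(c,r)$ for the closed ball of radius $r$ centered at $c$, and let $B = B(0,R)$ denote the ball whose $\varepsilon$-covering number $\mathcal{N}_{\varepsilon}$ we wish to bound.

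First I would construct a maximal $\varepsilon$-separated subset $\{x_1,\dots,x_N\} \subseteq B$, meaning a set whose points are pairwise at distance at least $\varepsilon$ and to which no further point of $B$ can be adjoined without violating this separation. The key observation is that maximality forces this set to be an $\varepsilon$-cover: if some $x \in B$ were at distance greater than $\varepsilon$ from every $x_i$, we could add $x$ to the set, contradicting maximality. Hence $\mathcal{N}_{\varepsilon} \le N$, and it suffices to upper bound $N$. Next I would use the separation to deduce disjointness of half-radius balls: since the centers are pairwise at distance at least $\varepsilon$, the balls $B(x_i,\varepsilon/2)$ have pairwise disjoint interiors. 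Moreover, each $B(x_i,\varepsilon/2)$ is contained in the enlarged ball $B(0,R+\varepsilon/2)$, because $\|x_i\| \le R$ and every point of $B(x_i,\varepsilon/2)$ lies within $\varepsilon/2$ of $x_i$.

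Then I would compare Lebesgue volumes. Using that the volume of a Euclidean ball of radius $r$ in $\mathbb{R}^d$ equals $c_d\, r^d$ for a dimension-dependent constant $c_d$ (the volume of the unit ball), the disjointness of the small balls together with their containment in $B(0,R+\varepsilon/2)$ yields
$$
N \cdot c_d\, (\varepsilon/2)^d = \sum_{i=1}^{N} \operatorname{Vol}\!\big(B(x_i,\varepsilon/2)\big) \le \operatorname{Vol}\!\big(B(0,R+\varepsilon/2)\big) = c_d\,(R+\varepsilon/2)^d .
$$
The constant $c_d$ cancels, so $N \le \big((R+\varepsilon/2)/(\varepsilon/2)\big)^d = (1 + 2R/\varepsilon)^d$, giving exactly the claimed bound on $\mathcal{N}_{\varepsilon}$.

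This argument is essentially routine, so I do not anticipate a genuine obstacle. The only two points requiring care are conceptual rather than computational: ensuring the maximal $\varepsilon$-separated (packing) set is correctly argued to be a covering set, and tracking the factor of $2$ that enters because disjointness is achieved with radius-$\varepsilon/2$ balls rather than radius-$\varepsilon$ balls. Getting these two bookkeeping details right is what makes the final constant come out as $1 + 2R/\varepsilon$ rather than some loose variant.
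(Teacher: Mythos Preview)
Your volumetric packing argument is correct and is exactly the classical proof of this bound. The paper itself does not supply a proof of this lemma: it simply states the result and refers the reader to standard references (``Lemma 5.2 in [44]''), so there is nothing to compare against beyond noting that your argument is the textbook one those references contain.
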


Based on the lemmas above, we can bound the covering number of the optimistic value function and pessimistic value function class.

\begin{lemma} \label{cover-opt}
    (Covering number of optimistic function class \cite{jin2020provably}) Let $\mathcal{V}$ denote a class of functions mapping from $\mathcal{S}$ to $\mathbb{R}$ with following parametric form
    
    $$
    V(\cdot)=\min \left\{\max _{a} \mathbf{w}^{\top} \boldsymbol{\phi}(\cdot, a)+\beta \sqrt{\phi(\cdot, a)^{\top} \Lambda^{-1} \boldsymbol{\phi}(\cdot, a)}, H\right\}
    $$
    
    where the parameters $(\mathbf{w}, \beta, \Lambda)$ satisfy $\|\mathbf{w}\| \leq L, \beta \in[0, B]$ and the minimum eigenvalue satisfies $\lambda_{\min }(\Lambda) \geq \lambda$. Assume $\|\boldsymbol{\phi}(s, a)\| \leq 1$ for all $(s, a)$ pairs, and let $\mathcal{N}_{\varepsilon}$ be the $\varepsilon$-covering number of $\mathcal{V}$ with respect to the distance $\operatorname{dist}\left(V, V^{\prime}\right)=\sup _{s}\left|V(s)-V^{\prime}(s)\right|$. Then
    
    $$
    \log \mathcal{N}_{\varepsilon} \leq d \log (1+4 L / \varepsilon)+d^{2} \log \left[1+8 d^{1 / 2} B^{2} /\left(\lambda \varepsilon^{2}\right)\right]
    $$
\end{lemma}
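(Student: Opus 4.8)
The plan is to reduce the covering-number bound to covering two Euclidean balls, after a convenient reparametrization. \textbf{Step 1 (reparametrize).} Writing $\mathbf{A} := \beta^{2}\Lambda^{-1}$, every $V\in\mathcal{V}$ has the form $V(\cdot)=\min\{\max_{a}\mathbf{w}^{\top}\boldsymbol{\phi}(\cdot,a)+\sqrt{\boldsymbol{\phi}(\cdot,a)^{\top}\mathbf{A}\,\boldsymbol{\phi}(\cdot,a)},\,H\}$. Since $0\le\beta\le B$ and $\lambda_{\min}(\Lambda)\ge\lambda$, the matrix $\mathbf{A}$ is symmetric positive semidefinite with $\|\mathbf{A}\|\le B^{2}/\lambda$, hence $\|\mathbf{A}\|_{F}\le d^{1/2}\|\mathbf{A}\|\le d^{1/2}B^{2}/\lambda$. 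Thus every admissible parameter is described by $\mathbf{w}$ ranging over the Euclidean ball of radius $L$ in $\mathbb{R}^{d}$ and $\mathbf{A}$ ranging over the Frobenius ball of radius $d^{1/2}B^{2}/\lambda$ in $\mathbb{R}^{d^{2}}$.

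\textbf{Step 2 (a distance estimate).} Given $V_{1},V_{2}$ with parameters $(\mathbf{w}_{1},\mathbf{A}_{1})$ and $(\mathbf{w}_{2},\mathbf{A}_{2})$, the maps $x\mapsto\min\{x,H\}$ and $(x_{a})_{a}\mapsto\max_{a}x_{a}$ are each $1$-Lipschitz, so
\[
\operatorname{dist}(V_{1},V_{2})\ \le\ \sup_{s,a}\bigl|(\mathbf{w}_{1}-\mathbf{w}_{2})^{\top}\boldsymbol{\phi}(s,a)\bigr|\ +\ \sup_{s,a}\Bigl|\sqrt{\boldsymbol{\phi}^{\top}\mathbf{A}_{1}\boldsymbol{\phi}}-\sqrt{\boldsymbol{\phi}^{\top}\mathbf{A}_{2}\boldsymbol{\phi}}\Bigr|.
\]
Using $\|\boldsymbol{\phi}\|\le1$, the first supremum is at most $\|\mathbf{w}_{1}-\mathbf{w}_{2}\|$. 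For the second, apply $|\sqrt{x}-\sqrt{y}|\le\sqrt{|x-y|}$ for $x,y\ge0$ together with $|\boldsymbol{\phi}^{\top}(\mathbf{A}_{1}-\mathbf{A}_{2})\boldsymbol{\phi}|\le\|\mathbf{A}_{1}-\mathbf{A}_{2}\|\le\|\mathbf{A}_{1}-\mathbf{A}_{2}\|_{F}$, giving a bound of $\sqrt{\|\mathbf{A}_{1}-\mathbf{A}_{2}\|_{F}}$. Hence $\|\mathbf{w}_{1}-\mathbf{w}_{2}\|\le\varepsilon/2$ and $\|\mathbf{A}_{1}-\mathbf{A}_{2}\|_{F}\le\varepsilon^{2}/4$ together imply $\operatorname{dist}(V_{1},V_{2})\le\varepsilon$.

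\textbf{Step 3 (assemble).} Take an $(\varepsilon/2)$-cover $\mathcal{C}_{\mathbf{w}}$ of the $\mathbf{w}$-ball and an $(\varepsilon^{2}/4)$-cover $\mathcal{C}_{\mathbf{A}}$ of the $\mathbf{A}$-ball. By Lemma \ref{cover-ball} applied in $\mathbb{R}^{d}$ and in $\mathbb{R}^{d^{2}}$ respectively,
\[
|\mathcal{C}_{\mathbf{w}}|\le\Bigl(1+\tfrac{4L}{\varepsilon}\Bigr)^{d},\qquad |\mathcal{C}_{\mathbf{A}}|\le\Bigl(1+\tfrac{8d^{1/2}B^{2}}{\lambda\varepsilon^{2}}\Bigr)^{d^{2}}.
\]
By Step 2, replacing the parameters $(\mathbf{w},\mathbf{A})$ of any $V\in\mathcal{V}$ by their nearest points in $\mathcal{C}_{\mathbf{w}}\times\mathcal{C}_{\mathbf{A}}$ produces a function within $\varepsilon$ of $V$ in $\operatorname{dist}$, so $\mathcal{N}_{\varepsilon}\le|\mathcal{C}_{\mathbf{w}}|\cdot|\mathcal{C}_{\mathbf{A}}|$; taking logarithms yields the stated bound.

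The main obstacle is Step 2, namely the correct perturbation analysis of the exploration bonus $\sqrt{\boldsymbol{\phi}^{\top}\mathbf{A}\boldsymbol{\phi}}$: one must route through the subadditivity of the square root — which is exactly what allows the $\mathbf{A}$-net to be coarse (resolution $\varepsilon^{2}/4$ rather than $\varepsilon$) and thereby produces the $B^{2}/\varepsilon^{2}$ factor inside the logarithm — and then bound a quadratic form by a matrix norm. The reparametrization in Step 1 is also load-bearing: covering $\beta$ and $\Lambda$ separately would not reproduce the stated form, whereas $\mathbf{A}$ ranges over a single $d^{2}$-dimensional Frobenius ball whose radius is controlled by $B^{2}/\lambda$.
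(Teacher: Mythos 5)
Your proof is correct and follows essentially the same route as the paper's: reparametrize via $\mathbf{A}=\beta^{2}\Lambda^{-1}$, bound $\operatorname{dist}(V_1,V_2)\le\|\mathbf{w}_1-\mathbf{w}_2\|+\sqrt{\|\mathbf{A}_1-\mathbf{A}_2\|_F}$ using the contraction properties of $\min$ and $\max_a$ together with $|\sqrt{x}-\sqrt{y}|\le\sqrt{|x-y|}$, and multiply the cardinalities of an $\varepsilon/2$-net for $\mathbf{w}$ and an $\varepsilon^{2}/4$-net for $\mathbf{A}$. Your explicit justification of the Frobenius-ball radius via $\|\mathbf{A}\|_F\le d^{1/2}\|\mathbf{A}\|$ is a small detail the paper leaves implicit, but the argument is otherwise identical.
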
    
    \begin{proof}
        Equivalently, we can reparametrize the function class $\mathcal{V}$ by let $\mathbf{A}=\beta^{2} \Lambda^{-1}$, so we have
        
        \begin{equation} \label{optimistic-reparametrization}
            V(\cdot)=\min \left\{\max _{a} \mathbf{w}^{\top} \boldsymbol{\phi}(\cdot, a)+\sqrt{\boldsymbol{\phi}(\cdot, a)^{\top} \mathbf{A} \boldsymbol{\phi}(\cdot, a)}, H\right\}
        \end{equation}

        for $\|\mathbf{w}\| \leq L$ and $\|\mathbf{A}\| \leq B^{2} \lambda^{-1}$. For any two functions $V_{1}, V_{2} \in \mathcal{V}$, let them take the form in Equation (\ref{optimistic-reparametrization}) with parameters $\left(\mathbf{w}_{1}, \mathbf{A}_{1}\right)$ and $\left(\mathbf{w}_{2}, \mathbf{A}_{2}\right)$, respectively. Then, since both $\min \{\cdot, H\}$ and $\max _{a}$ are contraction maps, we have
        
        \begin{align}\label{decompose-dist}
            \operatorname{dist}\left(V_{1}, V_{2}\right) & \leq \sup _{s, a}\left|\left[\mathbf{w}_{1}^{\top} \boldsymbol{\phi}(s, a)+\sqrt{\boldsymbol{\phi}(s, a)^{\top} \mathbf{A}_{2} \boldsymbol{\phi}(s, a)}\right]-\left[\mathbf{w}_{2}^{\top} \boldsymbol{\phi}(s, a)+\sqrt{\boldsymbol{\phi}(s, a)^{\top} \mathbf{A}_{2} \boldsymbol{\phi}(s, a)}\right]\right| \\
            & \leq \sup _{\boldsymbol{\phi}:\|\boldsymbol{\phi}\| \leq 1}\left|\left[\mathbf{w}_{1}^{\top} \boldsymbol{\phi}+\sqrt{\boldsymbol{\phi}^{\top} \mathbf{A}_{2} \boldsymbol{\phi}}\right]-\left[\mathbf{w}_{2}^{\top} \boldsymbol{\phi}+\sqrt{\boldsymbol{\phi}^{\top} \mathbf{A}_{2} \boldsymbol{\phi}}\right]\right| \\
            & \leq \sup _{\boldsymbol{\phi}:\|\boldsymbol{\phi}\| \leq 1}\left|\left(\mathbf{w}_{1}-\mathbf{w}_{2}\right)^{\top} \boldsymbol{\phi}\right|+\sup _{\boldsymbol{\phi}:\|\boldsymbol{\phi}\| \leq 1} \sqrt{\left|\boldsymbol{\phi}^{\top}\left(\mathbf{A}_{1}-\mathbf{A}_{2}\right) \boldsymbol{\phi}\right|} \\
            & =\left\|\mathbf{w}_{1}-\mathbf{w}_{2}\right\|+\sqrt{\left\|\mathbf{A}_{1}-\mathbf{A}_{2}\right\|} \leq\left\|\mathbf{w}_{1}-\mathbf{w}_{2}\right\|+\sqrt{\left\|\mathbf{A}_{1}-\mathbf{A}_{2}\right\|_{F}}
        \end{align}
        
        where the second last inequality follows from the fact that $|\sqrt{x}-\sqrt{y}| \leq \sqrt{|x-y|}$ holds for any $x, y \geq 0$. For matrices, $\|\cdot\|$ and $\|\cdot\|_{F}$ denote the matrix operator norm and Frobenius norm respectively.
        
        Let $\mathcal{C}_{\mathbf{w}}$ be an $\varepsilon / 2$-cover of $\left\{\mathbf{w} \in \mathbb{R}^{d} \mid\|\mathbf{w}\| \leq L\right\}$ with respect to the 2 -norm, and $\mathcal{C}_{\mathbf{A}}$ be an $\varepsilon^{2} / 4$-cover of $\left\{\mathbf{A} \in \mathbb{R}^{d \times d} \mid\|\mathbf{A}\|_{F} \leq d^{1 / 2} B^{2} \lambda^{-1}\right\}$ with respect to the Frobenius norm. By Lemma \ref{cover-ball}, we know:
        
        $$
            \left|\mathcal{C}_{\mathbf{w}}\right| \leq(1+4 L / \varepsilon)^{d}, \quad\left|\mathcal{C}_{\mathbf{A}}\right| \leq\left[1+8 d^{1 / 2} B^{2} /\left(\lambda \varepsilon^{2}\right)\right]^{d^{2}}
        $$
        
        By Equation (\ref{decompose-dist}), for any $V_{1} \in \mathcal{V}$, there exists $\mathbf{w}_{2} \in \mathcal{C}_{\mathbf{w}}$ and $\mathbf{A}_{2} \in \mathcal{C}_{\mathbf{A}}$ such that $V_{2}$ parametrized by $\left(\mathbf{w}_{2}, \mathbf{A}_{2}\right)$ satisfies $\operatorname{dist}\left(V_{1}, V_{2}\right) \leq \varepsilon$. Hence, it holds that $\mathcal{N}_{\varepsilon} \leq\left|\mathcal{C}_{\mathbf{w}}\right| \cdot\left|\mathcal{C}_{\mathbf{A}}\right|$, which gives:
        
        $$
        \log \mathcal{N}_{\varepsilon} \leq \log \left|\mathcal{C}_{\mathbf{w}}\right|+\log \left|\mathcal{C}_{\mathbf{A}}\right| \leq d \log (1+4 L / \varepsilon)+d^{2} \log \left[1+8 d^{1 / 2} B^{2} /\left(\lambda \varepsilon^{2}\right)\right]
        $$
        
        This concludes the proof.
    \end{proof}

And we can obtain the same covering number bound on our pessimistic value function class due to the symmetry.

\begin{lemma} \label{cover-pess}
    (Covering number of pessimistic function class) Let $\mathcal{V}$ denote a class of functions mapping from $\mathcal{S}$ to $\mathbb{R}$ with following parametric form
    
    $$
    V(\cdot)=\operatorname{clip} \left(\max _{a} \mathbf{w}^{\top} \boldsymbol{\phi}(\cdot, a) - \beta \sqrt{\boldsymbol{\phi}(\cdot, a)^{\top} \Lambda^{-1} \boldsymbol{\phi}(\cdot, a)}, 0, H \right)
    $$
    
    where the parameters $(\mathbf{w}, \beta, \Lambda)$ satisfy $\|\mathbf{w}\| \leq L, \beta \in[0, B]$ and the minimum eigenvalue satisfies $\lambda_{\min }(\Lambda) \geq \lambda$. Assume $\|\boldsymbol{\phi}(s, a)\| \leq 1$ for all $(s, a)$ pairs, and let $\mathcal{N}_{\varepsilon}$ be the $\varepsilon$-covering number of $\mathcal{V}$ with respect to the distance $\operatorname{dist}\left(V, V^{\prime}\right)=\sup _{s}\left|V(s)-V^{\prime}(s)\right|$. Then
    
    $$
    \log \mathcal{N}_{\varepsilon} \leq d \log (1+4 L / \varepsilon)+d^{2} \log \left[1+8 d^{1 / 2} B^{2} /\left(\lambda \varepsilon^{2}\right)\right]
    $$
 \end{lemma}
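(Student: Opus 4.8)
The plan is to mirror the proof of Lemma~\ref{cover-opt} almost verbatim, exploiting the fact that the only structural differences between the pessimistic and optimistic function classes are a minus sign in front of the bonus term and the map $\operatorname{clip}(\cdot,0,H)$ in place of $\min\{\cdot,H\}$, neither of which affects the Lipschitz estimates that drive the covering-number bound.

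First I would reparametrize the class by setting $\mathbf{A} = \beta^2 \Lambda^{-1}$, so that each $V \in \mathcal{V}$ is written as $V(\cdot) = \operatorname{clip}\bigl(\max_a \mathbf{w}^\top \boldsymbol{\phi}(\cdot,a) - \sqrt{\boldsymbol{\phi}(\cdot,a)^\top \mathbf{A}\,\boldsymbol{\phi}(\cdot,a)},\,0,\,H\bigr)$ with $\|\mathbf{w}\| \le L$ and $\|\mathbf{A}\| \le B^2\lambda^{-1}$, hence $\|\mathbf{A}\|_F \le d^{1/2}B^2\lambda^{-1}$. Next, for two functions $V_1,V_2$ with parameters $(\mathbf{w}_1,\mathbf{A}_1)$ and $(\mathbf{w}_2,\mathbf{A}_2)$, I would bound $\operatorname{dist}(V_1,V_2)$ by observing that $\operatorname{clip}(\cdot,0,H)$ is $1$-Lipschitz, being the composition $\min\{\max\{\cdot,0\},H\}$ of contractions, and that $\max_a$ is $1$-Lipschitz, which yields
$$\operatorname{dist}(V_1,V_2) \le \sup_{\|\boldsymbol{\phi}\|\le 1}\Bigl|(\mathbf{w}_1-\mathbf{w}_2)^\top\boldsymbol{\phi}\Bigr| + \sup_{\|\boldsymbol{\phi}\|\le 1}\Bigl|\sqrt{\boldsymbol{\phi}^\top\mathbf{A}_1\boldsymbol{\phi}} - \sqrt{\boldsymbol{\phi}^\top\mathbf{A}_2\boldsymbol{\phi}}\Bigr|.$$
The sign flip on the bonus is immaterial since $|(-x)-(-y)| = |x-y|$. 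Applying $|\sqrt{x}-\sqrt{y}| \le \sqrt{|x-y|}$ for $x,y \ge 0$ together with $\|\cdot\| \le \|\cdot\|_F$ then gives $\operatorname{dist}(V_1,V_2) \le \|\mathbf{w}_1-\mathbf{w}_2\| + \sqrt{\|\mathbf{A}_1-\mathbf{A}_2\|_F}$, exactly the inequality obtained in the optimistic case.

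Finally, I would take an $\varepsilon/2$-cover $\mathcal{C}_\mathbf{w}$ of $\{\mathbf{w}\in\mathbb{R}^d : \|\mathbf{w}\| \le L\}$ in the $2$-norm and an $\varepsilon^2/4$-cover $\mathcal{C}_\mathbf{A}$ of $\{\mathbf{A}\in\mathbb{R}^{d\times d} : \|\mathbf{A}\|_F \le d^{1/2}B^2\lambda^{-1}\}$ in the Frobenius norm, invoke Lemma~\ref{cover-ball} to obtain $|\mathcal{C}_\mathbf{w}| \le (1+4L/\varepsilon)^d$ and $|\mathcal{C}_\mathbf{A}| \le (1+8d^{1/2}B^2/(\lambda\varepsilon^2))^{d^2}$, and conclude $\mathcal{N}_\varepsilon \le |\mathcal{C}_\mathbf{w}|\cdot|\mathcal{C}_\mathbf{A}|$, which gives the claimed bound after taking logarithms. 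There is essentially no obstacle here; the only point needing an explicit line is that $\operatorname{clip}(\cdot,0,H)$ is a contraction, which is immediate, so the argument goes through by symmetry with the optimistic case.
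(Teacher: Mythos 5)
Your proposal is correct and follows essentially the same route as the paper's proof: reparametrize with $\mathbf{A}=\beta^{2}\Lambda^{-1}$, use that $\operatorname{clip}(\cdot,0,H)$ and $\max_a$ are contractions to get $\operatorname{dist}(V_1,V_2)\leq\|\mathbf{w}_1-\mathbf{w}_2\|+\sqrt{\|\mathbf{A}_1-\mathbf{A}_2\|_F}$, and then reuse the product cover from the optimistic case. No gaps.
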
   
    \begin{proof}
        Similar to the proof strategy used in Lemma \ref{cover-opt}, we reparametrize the function the function class $\mathcal{V}$ by letting $\mathbf{A}=\beta^{2} \Lambda^{-1}$, which gives us,
        
        \begin{equation} \label{pessimistic-reparametrization}
            V(\cdot)=\operatorname{clip} \left(\max _{a} \mathbf{w}^{\top} \boldsymbol{\phi}(\cdot, a) - \sqrt{\boldsymbol{\phi}(\cdot, a)^{\top} \mathbf{A} \boldsymbol{\phi}(\cdot, a)}, 0, H \right)
        \end{equation}

        for $\|\mathbf{w}\| \leq L$ and $\|\mathbf{A}\| \leq B^{2} \lambda^{-1}$. For any two functions $V_{1}, V_{2} \in \mathcal{V}$, let them take the form in Equation (\ref{pessimistic-reparametrization}) with parameters $\left(\mathbf{w}_{1}, \mathbf{A}_{1}\right)$ and $\left(\mathbf{w}_{2}, \mathbf{A}_{2}\right)$, respectively. Then, since both $\operatorname{clip} (\cdot, 0, H)$ and $\max _{a}$ are contraction maps, we have
        
            \begin{align}\label{pessimistic-decompose-dist}
                \operatorname{dist}\left(V_{1}, V_{2}\right) & \leq \sup _{s, a}\left|\left[\mathbf{w}_{1}^{\top} \boldsymbol{\phi}(s, a)-\sqrt{\boldsymbol{\phi}(s, a)^{\top} \mathbf{A}_{1} \boldsymbol{\phi}(s, a)}\right]-\left[\mathbf{w}_{2}^{\top} \boldsymbol{\phi}(s, a)-\sqrt{\boldsymbol{\phi}(s, a)^{\top} \mathbf{A}_{2} \boldsymbol{\phi}(s, a)}\right]\right| \\
                & \leq \sup _{\boldsymbol{\phi}:\|\boldsymbol{\phi}\| \leq 1}\left|\left[\mathbf{w}_{1}^{\top} \boldsymbol{\phi}-\sqrt{\boldsymbol{\phi}^{\top} \mathbf{A}_{1} \boldsymbol{\phi}}\right]-\left[\mathbf{w}_{2}^{\top} \boldsymbol{\phi}-\sqrt{\boldsymbol{\phi}^{\top} \mathbf{A}_{2} \boldsymbol{\phi}}\right]\right| \\
                & \leq \sup _{\boldsymbol{\phi}:\|\boldsymbol{\phi}\| \leq 1}\left|\left(\mathbf{w}_{1}-\mathbf{w}_{2}\right)^{\top} \boldsymbol{\phi}\right|+\sup _{\boldsymbol{\phi}:\|\boldsymbol{\phi}\| \leq 1} \sqrt{\left|\boldsymbol{\phi}^{\top}\left(\mathbf{A}_{2}-\mathbf{A}_{1}\right) \boldsymbol{\phi}\right|} \\
                & =\left\|\mathbf{w}_{1}-\mathbf{w}_{2}\right\|+\sqrt{\left\|\mathbf{A}_{2}-\mathbf{A}_{1}\right\|} \leq\left\|\mathbf{w}_{1}-\mathbf{w}_{2}\right\|+\sqrt{\left\|\mathbf{A}_{2}-\mathbf{A}_{1}\right\|_{F}} \\
                & 
                =\left\|\mathbf{w}_{1}-\mathbf{w}_{2}\right\|+\sqrt{\left\|\mathbf{A}_{1}-\mathbf{A}_{2}\right\|_{F}}
            \end{align}
        
        Equation (\ref{pessimistic-decompose-dist}) shows that the distance of  two elements in pessimistic value function class shares a same upper bound with the optimistic value function class. By Lemma \ref{cover-opt}, we conclude the proof. 
    \end{proof}

\end{document}